\newcommand{\reals}{{\mathbb{R}}}
\newcommand{\diag}{\mathop{\bf diag}}
\newcommand{\rank}{\mathop{\bf rank}}
\newcommand{\argmin}{\mathop{\rm argmin}}
\newcommand{\argmax}{\mathop{\rm argmax}}
\newcommand{\dquote}[1]{``#1''}
\newcommand{\lnorm}{\left\Vert}
\newcommand{\rnorm}{\right\Vert}
\newcommand{\tr}{\operatorname{tr}}
\newcommand{\supp}{\mathrm{supp}}
\newcommand{\real}{\mathbb{R}}
\def\TV{\mathrm{TV}}
\newcommand{\expect}{\mathbb{E}}
\newcommand{\indict}{\mathbb{I}}
\newtheorem{thm}{Theorem}
\newtheorem{lem}{Lemma}
\newtheorem{prop}{Proposition}
\newtheorem{asmp}{Assumption}
\newtheorem{rem}{Remark}
\newtheorem{example}{Example}
\crefname{thm}{Theorem}{Theorems}
\crefname{lem}{Lemma}{Lemmas}
\crefname{cor}{Corollary}{Corollaries}
\crefname{prop}{Proposition}{Propositions}
\crefname{asmp}{Assumption}{Assumptions}
\crefname{defn}{Definition}{Definitions}
\crefname{oracle}{Oracle}{Oracles}
\crefname{fact}{Fact}{Facts}
\crefname{conj}{Conjecture}{Conjectures}
\crefname{rem}{Remark}{Remarks}
\crefname{example}{Example}{Examples}
\crefname{condition}{Condition}{Conditions}
\crefname{exercise}{Exercise}{Exercises}
\crefname{algorithm}{Algorithm}{Algorithms}
\crefname{table}{Table}{Tables}
\crefname{figure}{Figure}{Figures}
\crefname{section}{Section}{Sections}
\crefname{subsection}{Section}{Sections}
\crefname{appendix}{Appendix}{Appendices}
\crefname{message}{Message}{Messages}
\definecolor{red}{rgb}{1, 0, 0}
\definecolor{green}{rgb}{0, 1, 0}
\definecolor{blue}{rgb}{0, 0, 1}
\definecolor{orange}{rgb}{1, 0.4, 0.0}
\newcommand{\cmark}{\ding{51}}%
\newcommand{\xmark}{\ding{55}}%
\newcommand{\bin}{\operatorname{Bin}}
\newcommand{\expert}{\operatorname{E}}
\newcommand{\bc}{\operatorname{BC}}
\newcommand{\piE}{\pi^{\expert}}
\newcommand{\gDE}{\gD^{\expert}}
\newcommand{\gDU}{\gD^{\operatorname{U}}}
\newcommand{\gDS}{\gD^{\operatorname{S}}}
\newcommand{\oU}{\operatorname{U}}
\newcommand{\oS}{\operatorname{S}}
\newcommand{\mix}{\operatorname{mix}}
\newcommand{\pimix}{\pi^{\operatorname{mix}}}
\newcommand{\tot}{\operatorname{tot}}
\newcommand{\pibcunion}{\pi^{\nbcu}}
\newcommand{\wbcu}{\operatorname{WBCU}}
\newcommand{\nbcu}{\operatorname{NBCU}}
\newcommand{\meanstd}[2]{$#1 {\scriptscriptstyle \pm #2}$}
\title{Theoretical Analysis of Offline Imitation With Supplementary Dataset}
\author[1,2]{Ziniu Li \thanks{Equal contribution. Author ordering is determined by coin flip. Email: \texttt{ziniuli@link.cuhk.edu.cn} and \texttt{xut@lamda.nju.edu.cn}}}
\author[3]{Tian Xu{$^*$}}
\author[3]{Yang Yu\thanks{Corresponding author. Email: \texttt{yuy@nju.edu.cn} and \texttt{luozq@cuhk.edu.cn}}}
\author[1,2]{Zhi-Quan Luo{$^\dag$}}
\affil[1]{The Chinese University of Hong Kong, Shenzhen}
\affil[2]{Shenzhen Research Institute of Big Data}
\affil[3]{National Key Laboratory for Novel Software Technology, Nanjing University}
\date{\today}
\begin{document}

\maketitle
\begin{abstract}
Behavioral cloning (BC) can recover a good policy from abundant expert data, but may fail when expert data is insufficient. This paper considers a situation where, besides the small amount of expert data, a supplementary dataset is available, which can be collected cheaply from sub-optimal policies. Imitation learning with a supplementary dataset is an emergent practical framework, but its theoretical foundation remains under-developed. To advance understanding, we first investigate a direct extension of BC, called NBCU, that learns from the union of all available data. Our analysis shows that, although NBCU suffers an imitation gap that is larger than BC in the worst case, there exist special cases where NBCU performs better than or equally well as BC. This discovery implies that noisy data can also be helpful if utilized elaborately. Therefore, we further introduce a discriminator-based importance sampling technique to re-weight the supplementary data, proposing the WBCU method. With our newly developed \emph{landscape-based analysis}, we prove that WBCU can outperform BC in mild conditions. Empirical studies show that WBCU simultaneously achieves the best performance on two challenging tasks where prior state-of-the-art methods fail.
\end{abstract}

\section{Introduction}

Imitation learning (IL) methods train a good policy from expert demonstrations \citep{argall2009survey, osa2018survey}. One popular approach is behavioral cloning (BC) \citep{Pomerleau91bc}, which imitates the expert via supervised learning. Specifically, in IL, samples usually refer to state-action pairs/sequences from trajectories in the given dataset. Both the quality and quantity of trajectories are crucial to achieving satisfying performance. For instance, it is found that BC performs well when the dataset has a large amount of expert-level trajectories; see, e.g., \citep{spencer2021feedback}. Nevertheless, due to the compounding errors issue \citep{ross2010efficient}, any offline IL algorithm, including BC, will fail when the number of expert trajectories is small \citep{rajaraman2020fundamental, xu2021error}. To address the compounding errors issue, a naive solution is to ask the expert to collect more trajectories. However, querying the expert is expensive and intractable in applications such as healthcare and industrial control.

To address the mentioned failure mode, we follow an alternative and emergent framework proposed in \citep{kim2022demodice, xu2022discriminator}, which assumes, in addition to the expert dataset, a supplementary dataset is relatively cheap to obtain. In particular, this supplementary dataset could be previously collected by certain behavior policies; as such, it has lots of expert and sub-optimal trajectories. The key challenge here is to figure out which supplemental samples are helpful. We realize that a few advances have been achieved in this direction \citep{kim2022demodice, kim2022lobsdice, xu2022discriminator, ma2022smodice}. To leverage the supplementary dataset, a majority of algorithms train a discriminator to distinguish expert-style and sub-optimal samples, upon which a weighted BC objective is optimized to learn a good policy. For instance, \citet{kim2022demodice} proposed the DemoDICE algorithm, which learns the discriminator via a regularized state-action distribution matching objective. As for the DWBC algorithm in \citep{xu2022discriminator}, the policy and discriminator are jointly trained under a cooperation framework.

Though prior algorithms are empirically shown to perform well in some scenarios, the theoretical foundation of this new imitation problem remains under-developed. Specifically, researchers have a sketchy intuition that noisy demonstrations in the supplementary dataset may hurt the performance when we directly apply BC, and researchers hope to develop algorithms to overcome this challenge. However, the following questions have not been carefully answered yet: (\textbf{Q1}) precisely, what kind of noisy samples is harmful? (or what kind of supplementary samples is helpful?) \textbf{(Q2)} how to design algorithms in a principled way? Answers could deepen our understanding and provide insights for future advances.

In this paper, we explore the above questions by investigating two (independent) variants of BC. To learn a policy, both algorithms apply a BC objective on the \emph{union} dataset (the concatenation of the expert and supplementary dataset), but they differ in assigning weights of samples that appear in the loss function. To qualify the benefits of the supplementary dataset, we treat the BC \emph{only} with the expert dataset as a baseline to compare. 

The first algorithm, called NBCU (naive BC with union dataset), assigns uniform weights for all samples. As a direct extension of BC, the theoretical study of NBCU provides answers to (Q1). In particular, we show that NBCU suffers an imitation gap that is larger than BC in the \emph{worst} case (\cref{thm:bc_union} and \cref{prop:bc_union_lower_bound}), implying its inferior performance in the general case. However, we discover \emph{special} cases where NBCU performs better than or equally well as BC (\cref{thm:bc_imitation_gap_expert} and \cref{thm:bc_imitation_gap_no_overlap}). This discovery indicates that noisy data can also be helpful if utilized elaborately. To our best knowledge, such a message is new to the community. We provide the empirical evidence in \cref{fig:summary} and discuss these results in depth later.

As NBCU (i.e., the direct extension of BC) may fail in the worst case, we develop another algorithm called WBCU (weighted BC with union dataset) to address this failure mode. In light of \citep{kim2022demodice, xu2022discriminator}, WBCU trains a discriminator to weigh samples. We use the importance sampling technique \citep[Chapter 9]{shapiro2003monte} to design the weighting rule. Unlike prior works \citep{kim2022demodice, xu2022discriminator} that use a biased (or regularized) weighting rule, WBCU is theoretically sound and principled in the sense that the loss function is corrected as if samples from the supplementary dataset were collected by the expert policy; see \cref{rem:bias_issue} for detailed discussion.

Theoretically, we justify WBCU via a new landscaped-based analysis (\cref{thm:wbcu}). We characterize the landscape properties (e.g., the Lipschitz continuity and quadratic growth conditions) in \cref{lem:lipschitz_continuity} and \cref{lem:strong_convexity}.  Interestingly, we find that the \dquote{smooth} function approximation for the \emph{discriminator} plays a vital role in WBCU; without it, WBCU cannot be better than BC (\cref{prop:wbcu_negative}). Our theory can provide actionable guidance for practitioners; see \cref{subsec:positive_wbcu} for details. These results offer answers to (Q2).

\begin{figure}[htbp]
    \centering
    \includegraphics[width=0.5\linewidth]{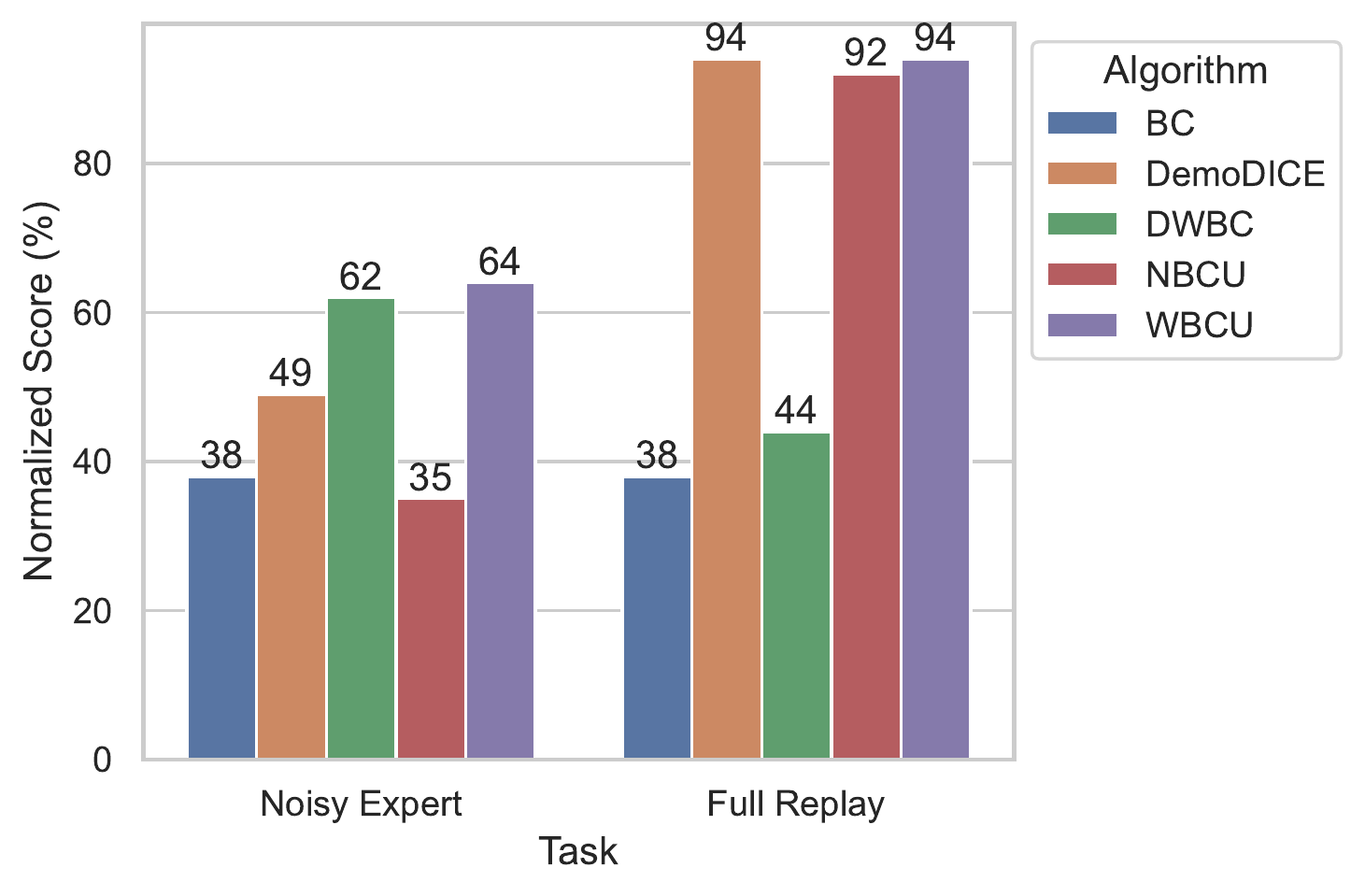}
    \caption{Averaged normalized scores of learned policies (over 4 MuJoCo environments) on two tasks with different supplementary datasets. A larger score means a better performance. Experiments show that 1) NBCU is worse than BC for the noisy expert task and better than BC for the full replay task; 2) only our method WBCU can perform well on both two tasks.}
    \label{fig:summary}
\end{figure}

Finally, we corroborate our claims with experiments on the MuJoCo locomotion control. Given the same expert dataset (1 expert trajectory), we consider two tasks with different supplementary datasets. The first task (called full replay) follows the setting in \citep{kim2022demodice}: supplementary trajectories are from the replay buffer of an online SAC agent \citep{haarnoja2018sac}. The second task (called noisy expert) is a new test-bed: the supplementary dataset contains clean expert-style trajectories, and the noisy counterparts\footnote{Noisy counterparts cover expert states but are injected with action noise.}. Please refer to \cref{appendix:experiment_details} for experiment details. Please see \cref{fig:summary} for the (averaged) normalized scores of learned policies over four representative MuJoCo environments\footnote{Ant-v2, HalfCheetah-v2, Hopper-v2, and Walker2d-v2.} (performance on individual environments is reported in \cref{tab:noisy_expert} and \cref{tab:full_replay} in Appendix). We find that WBCU \emph{simultaneously} achieves the best performance on both two tasks, whereas prior state-of-the-art methods like DemoDICE \citep{kim2022demodice} and DWBC \citep{xu2022discriminator} only perform well on one of two tasks. We will connect the experiment results with theoretical analysis in the main text.

\section{Related Work}

\textbf{Adversarial Imitation Learning.} Unlike BC, adversarial imitation learning (AIL) methods (e.g., GAIL \citep{ho2016gail}) do not suffer the compounding errors issue when the expert trajectories are limited; see the empirical evidence in \citep{ho2016gail, ghasemipour2019divergence, Kostrikov19dac} and theoretical support in \citep{xu2020error, xu2022understanding}. In particular, AIL methods train a discriminator to perform the state-action distribution matching, which differs from BC. Nevertheless, AIL methods naturally work in the online setting (i.e., the interaction is allowed). If the transition function is not available, BC is minimax optimal \citep{rajaraman2020fundamental, xu2021more} and offline counterparts of AIL are not better than BC \citep{li2022rethinking}.

Perhaps surprisingly, we find that the discriminator used in our WBCU algorithm plays a different role than AIL methods. We provide a detailed explanation in \cref{sec:analysis_of_weighted_behavioral_cloning}.

\textbf{Imitation Learning with Imperfect Demonstrations.} The problem considered in this paper is closely related to imitation learning (IL) with imperfect demonstrations \citep{wu2019imitation, brown2019extrapolating, tangkaratt2020variational, wang2021learning, sasaki2020behavioral, liu2022robust} in the sense that the supplementary dataset can also be viewed as imperfect demonstrations. However, our problem setting differs from IL with imperfect demonstrations in two key aspects. First, in IL with imperfect demonstrations, they either pose strong assumptions \citep{tangkaratt2020variational, sasaki2020behavioral, liu2022robust} or require auxiliary information (e.g., confidence scores on imperfect trajectories) on the imperfect dataset \citep{wu2019imitation, brown2019extrapolating}. In contrast, we assume access to a small number of expert trajectories, which could be more practical. Second, most works \citep{wu2019imitation, brown2019extrapolating, tangkaratt2020variational, wang2021learning} in IL with imperfect demonstrations require online environment interactions while we focus on the offline setting.

Finally, we point out that the problem of offline imitation learning with a supplementary dataset is first considered in \citep{kim2022demodice, xu2022discriminator}. Subsequently, \citet{ma2022smodice, kim2022lobsdice} study a related setting: learning from observation, where actions are missing and only states are observed. Existing works mainly focus on empirical studies, and the theoretical foundation remains under-developed.

\section{Preliminary}

\textbf{Markov Decision Process.} In this paper, we consider the episodic Markov decision process\footnote{Our results can be translated to the discounted and infinite-horizon MDP setting. We consider the episodic MDP because it allows a simple way of dealing with the statistical estimation.} (MDP)  $\gM = (\gS, \gA, \gP, r, H, \rho)$ \citep{puterman2014markov}. The first two elements $\gS$ and $\gA$ are the state and action space, respectively. $H$ is the maximum length of a trajectory and $\rho$ is the initial state distribution. $\gP = \{P_1, \cdots, P_{H}\}$ specifies the non-stationary transition function of this MDP; concretely, $P_h(s_{h+1}|s_h, a_h)$ determines the probability of transiting to state $s_{h+1}$ conditioned on state $s_h$ and action $a_h$ in time step $h$, for $h \in [H]$, where the symbol $[x]$ means the set of integers from $1$ to $x$. Similarly, $r = \{r_1, \cdots, r_{H}\}$ is the reward function, and we assume $r_h: \gS \times \gA \rar [0, 1]$, for $h \in [H]$. A time-dependent policy is $\pi_h: \gS \rar \Delta(\gA)$, where $\Delta(\gA)$ is the probability simplex. $\pi_h (a|s)$ gives the probability of selecting action $a$ on state $s$ in time step $h$, for $h \in [H]$. When the context is clear, we simply use $\pi$ to denote the collection of time-dependent policies $\{\pi_h\}_{h=1}^{H}$.

We measure the quality of a policy $\pi$ by the policy value (i.e., environment-specific long-term return): $V(\pi) = \expect\big[ \sum_{h=1}^{H} r(s_h, a_h) \mid s_1 \sim \rho; a_h \sim \pi_h (\cdot|s_h), s_{h+1} \sim P_h(\cdot|s_h, a_h), \forall h \in [H] \big]$. 
To facilitate later analysis, we need to introduce the state-action distribution $d^{\pi}_h(s, a)$: 
\begin{align*}
    d^{\pi}_h(s, a) = \sP\big( &s_h = s, a_h = a \mid s_1 \sim \rho; a_\ell \sim \pi(\cdot|s_\ell), s_{\ell+1} \sim P_\ell(\cdot|s_\ell, a_\ell), \forall \ell \in [h]  \big).
\end{align*}
Here, we use the convention that $d^{\pi}$ is the collection of all time-dependent state-action distributions. Sometimes, we need to consider the state distribution $d^{\pi}(s)$, which shares the same definition with $d^{\pi}(s, a)$ except that we only compute the probability of visiting a specific state $s$. It is easy to see that $d^{\pi}_h(s) = \sum_{a}d^{\pi}_h(s, a)$, $\forall h \in [H]$.

\textbf{Imitation Learning.} The goal of imitation learning is to learn a high quality policy directly from expert demonstrations. To this end, we often assume there is a nearly optimal expert policy $\piE$ that could interact with the environment to generate a dataset (i.e., $N_{\expert}$ trajectories of length $H$):
\begin{align*}
    \gD^{\expert} = \big\{ &\tr = \lp s_1, a_1, s_2, a_2, \cdots, s_H, a_H \rp; s_1 \sim \rho, a_h \sim \piE_h(\cdot|s_h), s_{h+1} \sim P_h(\cdot|s_h, a_h), \forall h \in [H] \big\}.
\end{align*}
Then, the learner can use $\gDE$ to mimic the expert. From a theoretical perspective, the quality of imitation is measured by the \emph{imitation gap}: $\expect\ls V (\piE) - V (\pi) \rs$, where $\pi$ is the learned policy and the expectation is taken over the randomness of $\gD^{\expert}$. We hope that a good learner can mimic the expert perfectly and thus the imitation gap is small. In this paper, we assume that the expert policy is deterministic, which is common in the literature \citep{rajaraman2020fundamental, rajaraman2021value, xu2020error, xu2021more}. Note that this assumption holds for tasks including MuJoCo locomotion control.

\textbf{Behavioral Cloning.} Given an \emph{expert} dataset $\gDE$, the behavioral cloning (BC) algorithm takes the maximum likelihood estimation:
\begin{align} \label{eq:bc_opt}
  \pi^{\bc} \in  \max_{\pi} \sum_{h=1}^{H} \sum_{ (s, a) \in \gS \times \gA} \widehat{d^{\expert}_h}(s, a) \log \pi_h(a|s),
\end{align}
where $\widehat{d^{\expert}_h}(s, a)$ is the empirical state-action distribution. By definition, $\widehat{d^{\expert}_h}(s, a) = n_h^{\expert}(s, a)/N_{\expert}$, where $n_h^{\expert}(s, a)$ refers to the number of expert trajectories such that their state-action pairs are equal to $(s, a)$ in time step $h$.  In the tabular case, a closed-formed solution to \cref{eq:bc_opt} is available:
\begin{align}   \label{eq:pi_bc}
\pi^{\bc}_h(a|s) = \left\{ \begin{array}{ll}
      \frac{n_h^{\expert}(s, a)}{n_h^{\expert}(s)}  & \text{if } n_h^{\expert}(s) > 0  \\
        \frac{1}{|\gA|} & \text{otherwise}  
    \end{array} \right.
\end{align}
where $n_h^{\expert}(s) \triangleq \sum_{a^\prime} n_h^{\expert}(s, a^\prime)$. For visited states, BC can make good decisions by duplicating expert actions. However, BC has limited knowledge about the expert actions on non-visited states. As a result, it may suffer a large imitation gap when making a wrong decision on a non-visited state, leading to compounding errors \citep{ross2010efficient}.

\textbf{Imitation with Supplementary Dataset.} BC will fail when the number of expert trajectories is small due to the mentioned compounding errors issue. A naive solution is to collect more expert trajectories. In this paper, we follow an alternative and emergent framework in \citep{kim2022demodice, xu2022discriminator}, where we assume that an offline \emph{supplementary} dataset $\gDS$ (i.e., $N_{\oS}$ trajectories of length $H$) is relatively cheap to obtain:
\begin{align*}
    \gDS = \big\{ &\tr = \lp s_1, a_1, s_2, a_2, \cdots, s_H, a_H \rp; s_1 \sim \rho,  a_h \sim \pi^{\beta}_h(\cdot|s_h), s_{h+1} \sim P_h(\cdot|s_h, a_h), \forall h \in [H] \big\},
\end{align*}
where $\pi^{\beta}$ is a behavioral policy that could be a mixture of certain base policies, i.e., $\pi^{\beta} = \sum_{i=1}^{K} \alpha_i \pi^i$ with $\sum_{i=1}^{K} \alpha_i = 1$ and $\alpha_i \geq 0$ for $i \in [K]$. Algorithms can additionally leverage this supplementary dataset to mitigate the compounding errors issue.

\section{Analysis of Naive Behavior Cloning with Union Dataset}
\label{sec:analysis_of_behavioral_cloning}

In this section, we consider a direct extension of BC for the problem of offline imitation with a supplementary dataset. Specifically, this algorithm called NBCU (naive BC with the union dataset) performs the maximum likelihood estimation on the union of the expert and supplementary datasets: 
\begin{align} \label{eq:bc_union}
  \pi^{\nbcu} \in  \argmax_{\pi} \sum_{h=1}^{H} \sum_{(s, a)} \widehat{d^{\oU}_h}(s, a) \log \pi_h(a|s),
\end{align}
where $\gDU = \gDS \cup \gDE$ and $\widehat{d^{\oU}_h}(s ,a)$ is the empirical state-action distribution estimated from $\gDU_h$ (i.e., the subset of $\gDU$ in step $h$). As an analogue to \cref{eq:pi_bc}, we have 
\begin{align}   \label{eq:pi_nbcu}
\pi^{\nbcu}_h(a|s) = \left\{ \begin{array}{ll}
      \frac{n_h^{\oU}(s, a)}{n_h^{\oU}(s)}  & \text{if } n_h^{\oU}(s) > 0  \\
        \frac{1}{|\gA|} & \text{otherwise}  
    \end{array} \right.
\end{align}
where, just like before, $n_h^{\oU}(s, a)$ refers to the number of union trajectories such that their state-action pairs are equal to $(s, a)$ in time step $h$. To analyze NBCU, we pose an assumption about the dataset collection.

\begin{algorithm}[htbp]
\caption{NBCU}
\label{algo:nbcu}
\begin{algorithmic}[1]
\Require{Expert dataset $\gDE$ and supplementary dataset $\gDS$.}
\State{$\gDU \lar \gDE \cup \gDS$.}
\State{Apply BC to learn a policy $\pi$ by objective \eqref{eq:bc_union} with $\gDU$.}
\end{algorithmic}
\end{algorithm}

\begin{asmp}   \label{asmp:dataset_collection}
The supplementary dataset $\gDS$ and expert dataset $\gDE$ are collected in the following way: each time, we roll-out a behavior policy $\pi^{\beta}$ with probability $1-\eta$ and the expert policy with probability $\eta$, where $\eta \in [0, 1]$ controls the fraction of expert trajectories. Such an experiment is independent and identically conducted by $N_{\tot}$ times. 
\end{asmp}

Under \cref{asmp:dataset_collection}, we slightly overload our notations: let $N_{\expert}$ be the \emph{expected} number of expert trajectories,  i.e., $ N_{\expert} = \eta N_{\tot}$, and $N_{\oS}$ be the \emph{expected} number of supplementary trajectories, i.e.,  $N_{\oS} = (1-\eta) N_{\tot}$.

\subsection{Baseline: BC on the Expert Dataset}
\label{subsec:baseline:bc_expert}

To measure whether the supplementary dataset is helpful, we consider the BC \emph{only} with the expert dataset as a baseline. This approach has been analyzed in \citep{rajaraman2020fundamental}, and we transfer their results under our assumption\footnote{The proof of \cref{thm:bc_expert} builds on \citep{rajaraman2020fundamental} and the main difference is that the number of expert trajectories is a random variable in our set-up. Technically, we handle this difficulty by \cref{lem:binomial_distribution} in Appendix.}: 
\begin{thm}
\label{thm:bc_expert}
Under \cref{asmp:dataset_collection}. In the tabular case, if we apply BC only on the expert dataset, we have that\footnote{$a(n) \lesssim b(n)$ means that there exist $C, n_0 > 0$ such that $a(n) \leq C b(n)$ for all $n \geq n_0$. In our context, $n$ usually refers to the number of trajectories.} 
\begin{align*}
    \expect \ls V(\piE) - V(\pi^{\bc}) \rs \lesssim \min \lb H, \frac{|\gS| H^2}{N_{\expert}} \rb,
\end{align*}
where the expectation is taken over the randomness in the dataset collection. 
\end{thm}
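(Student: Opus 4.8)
The plan is to prove the two arguments of the minimum separately. The bound by $H$ is immediate: since $r_h \in [0,1]$ over a horizon of length $H$, every policy value lies in $[0,H]$, so the imitation gap never exceeds $H$ irrespective of the data. The substance of the theorem is thus the bound $|\gS| H^2 / N_{\expert}$, which I would obtain by reducing the value gap to the probability that $\pi^{\bc}$ ever enters a state that the expert dataset never visited.

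\textbf{Reduction via coupling.} I would use that $\piE$ is deterministic. By \cref{eq:pi_bc}, on any state $s$ with $n_h^{\expert}(s) > 0$ the recorded expert action is unique, so $\pi^{\bc}_h(\cdot|s) = \piE_h(\cdot|s)$ there. Coupling the rollouts of $\piE$ and $\pi^{\bc}$ through shared transition randomness, the two trajectories coincide until the first step at which they reach an \emph{unvisited} state (one with $n_h^{\expert}(s)=0$); before that step the per-step rewards agree, and afterwards the total reward differs by at most $H$. Hence, conditioned on the dataset, $V(\piE) - V(\pi^{\bc}) \le H \sum_{h=1}^{H} \sum_{s} d^{\piE}_h(s)\, \indict\{n_h^{\expert}(s) = 0\}$, because up to the first deviation the step-$h$ state is distributed as $d^{\piE}_h$. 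Taking expectation over the data collection gives \[ \expect\ls V(\piE) - V(\pi^{\bc}) \rs \le H \sum_{h=1}^{H} \sum_{s} d^{\piE}_h(s)\, \prob\lp n_h^{\expert}(s) = 0 \rp. \]

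\textbf{Binomial estimate (the crux).} Now I must control $\prob(n_h^{\expert}(s)=0)$ under \cref{asmp:dataset_collection}, where the number of expert trajectories is itself random. The key observation is that each of the $N_{\tot}$ i.i.d.\ trials contributes an expert visit to $(s,h)$ exactly when it draws the expert policy (probability $\eta$) \emph{and} that rollout passes through $s$ at step $h$ (probability $d^{\piE}_h(s)$), and these two events are independent; thus $n_h^{\expert}(s) \sim \mathrm{Binomial}(N_{\tot},\, \eta\, d^{\piE}_h(s))$. Consequently \[ \prob\lp n_h^{\expert}(s) = 0 \rp = \lp 1 - \eta\, d^{\piE}_h(s)\rp^{N_{\tot}} \le \exp\lp -N_{\expert}\, d^{\piE}_h(s) \rp, \] using $N_{\expert} = \eta N_{\tot}$; this is precisely the content of \cref{lem:binomial_distribution}. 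Since $x\, e^{-N_{\expert} x} \le 1/(e N_{\expert})$ for all $x \ge 0$, summing $d^{\piE}_h(s)\exp(-N_{\expert} d^{\piE}_h(s))$ over the $|\gS|$ states and then over the $H$ layers yields $\frac{H^2 |\gS|}{e N_{\expert}}$, which is the claimed term. Combining with the trivial bound $H$ produces the minimum.

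\textbf{Main obstacle.} The one genuinely new difficulty relative to the fixed-sample-size analysis of \citet{rajaraman2020fundamental} is that $N_{\expert}$ is an expectation, not a deterministic count, so their $|\gS|H^2/N$ guarantee cannot be quoted directly. I expect this to be the crux, and the clean resolution is the per-trial Bernoulli decomposition above: it folds the randomness in the number of expert trajectories into the success probability $\eta\, d^{\piE}_h(s)$, thereby sidestepping the ill-behaved quantity $\expect[1/(\text{actual expert count})]$, which diverges on the event that no expert trajectory is drawn. The remaining manipulations are routine.
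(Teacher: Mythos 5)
Your proof is correct, but it takes a genuinely different route from the paper's. The paper proceeds by conditioning on the realized number of expert trajectories: for $|\gDE|\geq 1$ it invokes Theorem 4.2 of \citet{rajaraman2020fundamental} as a black box to get $V(\piE)-\expect_{\gDE}[V(\pi^{\bc})]\lesssim |\gS|H^2/|\gDE|$, uses the trivial bound $H$ when $|\gDE|=0$, unifies the two into $2|\gS|H^2/(|\gDE|+1)$, and then, since $|\gDE|\sim\bin(N_{\tot},\eta)$, applies \cref{lem:binomial_distribution} (which states $\expect[1/(X+1)]\leq 1/(Np)$ for binomial $X$) to conclude. You instead re-derive the underlying missing-mass reduction $V(\piE)-V(\pi^{\bc})\leq H\sum_{h}\sum_s d^{\piE}_h(s)\,\indict\{n_h^{\expert}(s)=0\}$ via coupling, and then handle the random expert count in one shot by observing that $n_h^{\expert}(s)\sim\bin\lp N_{\tot},\,\eta\, d^{\piE}_h(s)\rp$, so that $\prob(n_h^{\expert}(s)=0)\leq\exp(-N_{\expert}d^{\piE}_h(s))$ and $\sum_s d^{\piE}_h(s)e^{-N_{\expert}d^{\piE}_h(s)}\leq |\gS|/(eN_{\expert})$. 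The paper's route is shorter and modular---the only new ingredient is the clean binomial lemma---while yours is self-contained (it essentially re-proves the cited theorem's core argument) and elegantly folds the policy-selection randomness into the per-trial visit probability, sidestepping any $\expect[1/X]$-type quantity. One cosmetic slip: the inequality $(1-\eta d)^{N_{\tot}}\leq e^{-N_{\expert}d}$ is \emph{not} the content of \cref{lem:binomial_distribution}; it is just $1-x\leq e^{-x}$, and your proof needs no appeal to that lemma at all.
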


Proofs of all theoretical results are deferred to the Appendix.

\subsection{Main Results of NBCU}

Now, we present our main claim about NBCU.
\begin{thm} \label{thm:bc_union}
Under \cref{asmp:dataset_collection}. In the tabular case, for any $\eta \in (0, 1]$, we have 
\begin{align*}  
    &\expect \ls V(\piE) - V(\pi^{\nbcu}) \rs  \lesssim \min \bigg\{ H,  (1-\eta) \lp V (\piE) - V (\pi^{\beta}) \rp + \frac{|\gS| H^2 \log (N_{\tot})}{N_{\tot}} \bigg\}, 
\end{align*}
where the expectation is taken over the randomness in the dataset collection. 
\end{thm}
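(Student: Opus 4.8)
The plan is to measure $\pi^{\nbcu}$ against the \emph{population} policy that the union dataset implicitly imitates, and then to separate the resulting gap into an approximation part and an estimation part. The branch $H$ of the minimum is immediate: rewards lie in $[0,1]$ and trajectories have length $H$, so $V(\piE)-V(\pi^{\nbcu})\le H$ always; the remaining work targets the second branch.

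To that end, introduce the mixture occupancy $d^{\mix}_h := \eta\,d^{\piE}_h+(1-\eta)\,d^{\pi^{\beta}}_h$ and the Markovian policy $\pimix$ realizing it, namely $\pimix_h(a|s)=d^{\mix}_h(s,a)/d^{\mix}_h(s)$. Since occupancy measures form a convex set and the value is linear in the occupancy measure, $V(\pimix)=\eta V(\piE)+(1-\eta)V(\pi^{\beta})$, and I would split
\begin{align*}
\expect\!\ls V(\piE)-V(\pi^{\nbcu})\rs = \underbrace{(1-\eta)\big(V(\piE)-V(\pi^{\beta})\big)}_{\text{approximation error}} \;+\; \underbrace{\expect\!\ls V(\pimix)-V(\pi^{\nbcu})\rs}_{\text{estimation error}}.
\end{align*}
The approximation error is already in the desired closed form, so the effort is entirely in the estimation error.

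For the estimation error, the key observation is that $\pi^{\nbcu}$ is exactly the empirical-conditional (maximum-likelihood) estimate of $\pimix$: the layer-$h$ marginal of every union trajectory equals $d^{\mix}_h$, so $n_h^{\oU}(s,a)\sim\bin(N_{\tot},d^{\mix}_h(s,a))$, and conditioned on visiting $s$ at step $h$ the recorded actions are i.i.d.\ from $\pimix_h(\cdot|s)$, whence $\expect[\pi^{\nbcu}_h(\cdot|s)\mid n_h^{\oU}(s)\ge 1]=\pimix_h(\cdot|s)$. Thus the estimation error is the imitation gap of behavioral cloning against the target $\pimix$ from $N_{\tot}$ trajectories, and I would reuse the machinery behind \cref{thm:bc_expert}. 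The contribution of states \emph{unvisited} in $\gDU$ is handled as in \citep{rajaraman2020fundamental}, through the standard inequality $V(\pimix)-V(\pi^{\nbcu})\le H\sum_h\expect_{s\sim d^{\mix}_h}[\TV(\pimix_h(\cdot|s),\pi^{\nbcu}_h(\cdot|s))]$ together with $\sum_s d^{\mix}_h(s)\,e^{-N_{\tot}d^{\mix}_h(s)}\le |\gS|/(e N_{\tot})$, which produces the $|\gS|H^2/N_{\tot}$ scaling, while the randomness in the number of expert trajectories is absorbed using the Binomial concentration bound \cref{lem:binomial_distribution}.

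The main obstacle is the \emph{stochasticity} of $\pimix$, which is absent in \cref{thm:bc_expert} where the deterministic expert is matched exactly on every visited state. Routing the per-state error through the total-variation term above only yields a slow rate: carrying $\expect[\TV]\lesssim\sqrt{|\gA|/n_h^{\oU}(s)}$ through the sum gives $\sqrt{|\gS||\gA|/N_{\tot}}$, which is too slow and carries an unwanted $|\gA|$. To recover the stated $|\gS|H^2\log(N_{\tot})/N_{\tot}$ rate I would instead bound the value gap by a second-moment argument: expanding $V(\pimix)-V(\pi^{\nbcu})$ against the $Q$-function of $\pimix$ (entries bounded by $H$) and invoking the unbiasedness of $\pi^{\nbcu}$, the first-order term vanishes in expectation and the residual is a variance scaling with $\sum_a \pimix_h(a|s)\big(1-\pimix_h(a|s)\big)\le 1$, which eliminates the $|\gA|$ dependence and restores the fast $1/N_{\tot}$ rate; a multiplicative Chernoff bound restricting attention to states with $n_h^{\oU}(s)\gtrsim\log(N_{\tot})$ visits supplies the logarithmic factor, with the under-sampled states charged directly to the occupancy. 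Reconciling the occupancy coupling introduced by the performance-difference lemma (the learned policy's own visitation appears) with the unbiasedness step is the delicate point I expect to require the most care.
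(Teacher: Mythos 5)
Your decomposition is exactly the paper's: the same mixture policy $\pimix$ induced by $d^{\mix}_h = \eta\, d^{\piE}_h + (1-\eta)\, d^{\pi^{\beta}}_h$, the same identity $V(\piE)-V(\pimix) = (1-\eta)\lp V(\piE)-V(\pi^{\beta})\rp$, the same trivial $H$ branch, and the same observation that $\expect\ls V(\pimix)-V(\pi^{\nbcu})\rs$ is the imitation gap of BC with $\pimix$ playing the role of a (now stochastic) expert and $\gDU$ as its demonstrations. The divergence is in how that estimation term is closed: the paper closes it in one line by invoking \citep[Theorem~4.4]{rajaraman2020fundamental}, which is precisely the BC guarantee for \emph{stochastic} expert policies and directly yields the $|\gS|H^2\log(N_{\tot})/N_{\tot}$ term. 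You appear not to have used (or known of) this off-the-shelf result and instead attempt to re-derive it.

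That re-derivation is where the genuine gap lies. Your key step --- ``expanding against the $Q$-function of $\pimix$ and invoking the unbiasedness of $\pi^{\nbcu}$, the first-order term vanishes in expectation'' --- does not go through as stated. In the performance-difference expansion, either the outer state distribution is $d^{\pi^{\nbcu}}_h$ (data-dependent, hence correlated with the estimation error $\pimix_h(\cdot|s)-\pi^{\nbcu}_h(\cdot|s)$), or, in the other direction, the state distribution is $d^{\pimix}_h$ but the $Q$-function is $Q^{\pi^{\nbcu}}_h$ (again data-dependent and correlated with the error). In neither form does the expectation of the ``first-order'' term vanish, and controlling exactly this correlation while keeping the fast $1/N_{\tot}$ rate is the substantive content of the stochastic-expert analysis in \citep{rajaraman2020fundamental} --- it is not a routine variance computation. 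You flag this yourself as ``the delicate point I expect to require the most care,'' but you leave it unresolved, so the only nontrivial component of the proof beyond the (correct) decomposition is not actually established. It is worth noting that an unbiasedness-based cancellation \emph{is} used in this paper, but only in the lower-bound construction of \cref{prop:bc_union_lower_bound}, where every state is absorbing and the relevant rewards are data-independent by design; that special structure is exactly what is missing in the general upper bound. The minimal repair to your argument is to cite \citep[Theorem~4.4]{rajaraman2020fundamental} for the estimation term, as the paper does; otherwise you are committed to reproducing that theorem's proof in full.
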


We often have $V(\piE) - V(\pi^{\beta}) > 0$, as the behavior policy is usually inferior to the expert policy. In this case, even if $N_{\tot}$ is sufficiently large so that the second term is negligible, there still exists a positive gap between $V(\piE)$ and $V(\pi^{\nbcu})$. Fundamentally, this is because the behavior policy may collect non-expert actions, so the recovered policy may select a wrong action even on expert states, which results in bad performance. The following theorem shows that the gap $V(\piE) - V(\pi^{\beta})$ is inevitable in the worst case.

\begin{prop}
\label{prop:bc_union_lower_bound}
Under \cref{asmp:dataset_collection}. In the tabular case, there exists an MDP $\gM$, an expert policy $\piE$ and a behavior policy $\pi^\beta$, for any $\eta \in (0, 1]$, when $N_{\tot} \gtrsim |\gS|$, we have 
\begin{align*}
    \expect \ls V(\piE) - V(\pi^{\nbcu}) \rs  \gtrsim  (1-\eta) \lp V (\piE) - V (\pi^{\beta}) \rp, 
\end{align*}
where the expectation is taken over the randomness in the dataset collection.
\end{prop}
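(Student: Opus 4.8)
The plan is to exhibit a single hard instance---a two-branch ``fork'' MDP---on which the closed-form NBCU policy provably averages the expert and sub-optimal actions at one critical state, and then to show that this averaging costs exactly a $(1-\eta)$ fraction of the value advantage $V(\piE)-V(\pi^{\beta})$. The mechanism I want to exploit is the closed form \cref{eq:pi_nbcu}: on any visited state, $\pi^{\nbcu}_h(\cdot\mid s)$ reproduces the \emph{empirical} action frequencies in the union dataset, which at a state reached by both populations is a $\eta:(1-\eta)$ mixture of the expert's action and the behavior policy's action.

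Concretely, I would take $\gS=\{s_0,g,b\}$ and $\gA=\{a_1,a_2\}$, with deterministic start $\rho=\delta_{s_0}$. From $s_0$, action $a_1$ transitions to $g$ and $a_2$ to $b$; both $g$ and $b$ are absorbing with $r(g,\cdot)=1$ and $r(b,\cdot)=r(s_0,\cdot)=0$. Let the expert play $a_1$ at $s_0$ and the behavior policy $\pi^{\beta}$ play $a_2$ at $s_0$ (a degenerate mixture, consistent with the deterministic-expert assumption). Then $V(\piE)=H-1$ and $V(\pi^{\beta})=0$, so $V(\piE)-V(\pi^{\beta})=H-1$. Making $g,b$ absorbing and reward-locked means that only the decision at the fork $s_0$ affects the return, which sidesteps any coverage issue on downstream states.

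The computation is then short. Every one of the $N_{\tot}$ trajectories starts at $s_0$ and acts there, so $n_1^{\oU}(s_0)=N_{\tot}\ge 1$ and we are always in the first branch of \cref{eq:pi_nbcu}. Under \cref{asmp:dataset_collection} the number of expert trajectories is $M\sim\bin(N_{\tot},\eta)$, and each of them contributes $a_1$ at $s_0$ while every behavior trajectory contributes $a_2$; hence $\pi^{\nbcu}_1(a_1\mid s_0)=M/N_{\tot}$ with $\expect[M/N_{\tot}]=\eta$. Because $g$ and $b$ are reward-locked, for any realized dataset $V(\pi^{\nbcu})=\pi^{\nbcu}_1(a_1\mid s_0)\,(H-1)$, so by linearity $\expect[V(\pi^{\nbcu})]=\eta(H-1)$ and
\begin{align*}
\expect\ls V(\piE)-V(\pi^{\nbcu})\rs=(1-\eta)(H-1)=(1-\eta)\lp V(\piE)-V(\pi^{\beta})\rp,
\end{align*}
which is the claimed bound (in fact with equality). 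The hypothesis $N_{\tot}\gtrsim|\gS|$ serves only to keep the instance in its intended regime---enough trajectories that the critical state is populated; the rare event of zero expert trajectories merely forces $\pi^{\nbcu}_1(a_1\mid s_0)=0$ and can therefore only \emph{increase} the gap, so it never endangers the lower bound.

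The only place where genuine care is needed is decoupling the fork decision from downstream estimation error. In a naive variant where the ``good'' branch itself required further correct decisions, one would have to argue that the $\approx\eta N_{\tot}$ expert trajectories cover those states so that NBCU copies the expert there, and control the low-probability miscoverage events---this is exactly where a condition like $N_{\tot}\gtrsim|\gS|$ earns its keep. I avoid that by making the branches absorbing, so the only randomness surviving into the value is $M/N_{\tot}$, whose expectation is $\eta$ by construction. As a final sanity check, the MDP and both policies are independent of $\eta$, so the instance is valid for every $\eta\in(0,1]$ at once, and at $\eta=1$ the bound degenerates to $0$, consistent with NBCU reducing to BC on purely expert data.
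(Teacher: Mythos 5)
Your proposal is correct, and it takes a genuinely different route from the paper's. The paper instantiates the Standard Imitation MDP of \citet{xu2021error} ($|\gS|$ absorbing states, uniform initial distribution) and uses the decomposition $\expect\ls V(\piE)-V(\pi^{\nbcu})\rs = (1-\eta)\lp V(\piE)-V(\pi^{\beta})\rp + \expect\ls V(\pimix)-V(\pi^{\nbcu})\rs$, where $\pimix$ is the $\eta$-mixture of $\piE$ and $\pi^{\beta}$; the real work there is controlling the second term, which can be \emph{negative}: on visited states $\pi^{\nbcu}$ is an unbiased estimate of $\pimix$ (zero contribution), but on unvisited states NBCU plays uniformly, which actually beats $\pimix$ when $\eta < 1/|\gA|$, and the hypothesis $N_{\tot}\gtrsim|\gS|$ is invoked precisely to make unvisited states rare enough that this loss is at most $(1-\eta)H/2$, yielding the bound with constant $1/2$. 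Your fork construction short-circuits all of this: with a single deterministic start state the critical state is visited by every trajectory (so the closed form \cref{eq:pi_nbcu} always takes its first branch there), the reward-locked absorbing branches make all downstream decisions irrelevant, and the gap equals $(1-\eta)\lp V(\piE)-V(\pi^{\beta})\rp$ \emph{exactly}, with constant $1$, for every $N_{\tot}\geq 1$ --- the condition $N_{\tot}\gtrsim|\gS|$ is never needed. What the paper's heavier instance buys is that it reuses the mixture-policy decomposition underlying the upper bound of \cref{thm:bc_union}, thereby showing that analysis is tight on the same family of instances, and it showcases the coverage phenomenon that motivates the sample-size condition; what yours buys is an elementary, exact computation and the observation that the existence claim in \cref{prop:bc_union_lower_bound} requires no sample-size condition at all. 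One cosmetic remark: if one insists on reading the proposition as requiring instances of arbitrary state-space size (so that $N_{\tot}\gtrsim|\gS|$ is not vacuously a constant), your instance pads trivially with unreachable dummy states without changing any step of the argument.
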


The construction of the hard instance in \cref{prop:bc_union_lower_bound} is based on the following intuition: NBCU does not distinguish the action labels in the union dataset and treats them equally important; see \cref{eq:pi_nbcu}. Therefore, NBCU will learn bad decisions and suffer a non-vanishing gap when the dataset has a bad state-action coverage (i.e., primarily, action labels on the expert states are sub-optimal).

\subsection{Positive Results of NBCU}

The previous results suggest that NBCU is not warranted to be better than BC. For some special cases (depending on the dataset coverage), however, we can bypass the hard instance in \cref{prop:bc_union_lower_bound} and show that NBCU can perform well. We state two representative results as follows. 

\begin{thm}  \label{thm:bc_imitation_gap_expert}
Under the same assumption with \cref{thm:bc_union}, additionally assume that $\pi^{\beta} = \pi^{\expert}$. In the tabular case, for any $\eta \in [0, 1]$, we have 
\begin{align*}  
    \expect \ls V(\piE) - V(\pi^{\nbcu}) \rs \lesssim \min \lb H, \frac{|\gS| H^2}{N_{\tot}} \rb, 
\end{align*}
where the expectation is taken over the randomness in the dataset collection. 
\end{thm}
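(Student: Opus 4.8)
The plan is to exploit the fact that the extra hypothesis $\pi^{\beta}=\piE$ collapses the data-generating process of \cref{asmp:dataset_collection} into a single distribution. Under that assumption, whichever branch of the coin flip is taken, the rolled-out trajectory is drawn from the expert's trajectory law; consequently every one of the $N_{\tot}$ trajectories in $\gDU=\gDE\cup\gDS$ is an i.i.d.\ expert trajectory. In particular the empirical counts $n_h^{\oU}(s,a)$ entering \cref{eq:pi_nbcu} have exactly the same joint law as the expert counts one would obtain by running BC on a fixed batch of $N_{\tot}$ expert demonstrations.

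First I would make this reduction precise: show that, as random objects, $\pi^{\nbcu}$ defined in \cref{eq:pi_nbcu} and the BC estimator $\pi^{\bc}$ of \cref{eq:pi_bc} built from $N_{\tot}$ expert trajectories are equal in distribution. This is immediate once one observes that the labels \dquote{expert} versus \dquote{supplementary} carry no statistical information when $\pi^{\beta}=\piE$; the partition of $\gDU$ is irrelevant and only the total count $N_{\tot}$ matters for the estimator and for the imitation gap.

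Second, I would invoke the analysis behind \cref{thm:bc_expert} verbatim, but with the effective sample size $N_{\tot}$ in place of $N_{\expert}$. Since here the number of expert trajectories is deterministically $N_{\tot}$ rather than a Binomial random variable, the binomial-averaging step handled by \cref{lem:binomial_distribution} is not even needed: the per-step missing-mass bound on uncovered expert states gives $|\gS|/N_{\tot}$, and combining the $H$ time steps with the additional $H$-factor from compounding errors yields $|\gS|H^2/N_{\tot}$. Taking the trivial $H$ bound as a fallback produces the claimed $\min\{H,|\gS|H^2/N_{\tot}\}$.

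The only point requiring care, and the reason I would route the argument through \cref{thm:bc_expert} rather than simply specializing \cref{thm:bc_union} with $V(\piE)-V(\pi^{\beta})=0$, is the logarithmic factor. Plugging $\pi^{\beta}=\piE$ into \cref{thm:bc_union} would leave a spurious $\log N_{\tot}$ in the second term; the sharper bound in the statement is obtained precisely because the special structure makes the union data genuinely expert-distributed, so the tight BC analysis, which avoids the union-coverage concentration argument responsible for the log, applies directly. This is where I expect the main, albeit mild, obstacle: confirming that the reduction is exact enough that the log-free BC rate transfers without modification.
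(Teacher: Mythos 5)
Your proposal is correct and matches the paper's proof: the paper likewise observes that when $\pi^{\beta}=\piE$ the union dataset is exactly $N_{\tot}$ i.i.d.\ expert trajectories, so the NBCU gap is the BC imitation gap on $\gDU$, and it concludes by citing \citep[Theorem 4.2]{rajaraman2020fundamental} with sample size $N_{\tot}$ (no binomial-averaging step, hence no $\log N_{\tot}$ factor). Your remark about why specializing \cref{thm:bc_union} would leave a spurious logarithm—because that bound imitates the stochastic mixture policy via the union-coverage argument, whereas here the target is the deterministic expert—is exactly the right reason the sharper rate holds.
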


\cref{thm:bc_imitation_gap_expert} is a direct extension of \cref{thm:bc_expert}. The condition $\pi^{\beta} = \piE$ seems strong. Still, it may approximately hold in the following case: the supplementary dataset $\gDS$ is collected by an online agent that improves its performance over iterations, in which $\pi^{\beta}$ is close to $\piE$ asymptotically.

\begin{thm}   \label{thm:bc_imitation_gap_no_overlap}
Under the same assumption with \cref{thm:bc_union}, additionally assume that $\pi^{\beta}$ never visits expert states, i.e., $\supp(d^{\pi^{\beta}}_h(\cdot)) \cap \supp(d^{\pi^{\expert}}_h(\cdot)) = \emptyset$\footnote{For a distribution $p$, $\supp(p) = \{ x: p(x) \ne 0 \}$.} for all $h \in [H]$. In the tabular case, for any $\eta \in (0, 1]$, we have 
\begin{align}   \label{eq:bc_imitation_gap_no_overlap}
    \expect \ls V(\piE) - V(\pi^{\nbcu}) \rs \lesssim \min \lb H, \frac{|\gS| H^2}{ N_{\expert} } \rb, 
\end{align}
where the expectation is taken over the randomness in the dataset collection. 
\end{thm}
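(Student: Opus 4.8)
The plan is to reduce the analysis of $\pi^{\nbcu}$ to the baseline \cref{thm:bc_expert} by showing that, under the disjoint-support hypothesis, the NBCU policy and the expert-only BC policy are \emph{identical on every expert-reachable state}, and then exploiting the fact that the imitation gap depends only on a policy's behavior on such states. The supplementary data lives entirely off the expert's state distribution, so it cannot affect what matters for the gap.

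First I would record the standard reduction of the imitation gap to per-state disagreement on the expert distribution. Since the expert is deterministic, writing $\piE_h(s)$ for its action at step $h$ and applying the performance-difference lemma together with $0 \le Q^{\pi}_h \le H$ gives, for any policy $\pi$,
\[
  V(\piE) - V(\pi) \;\le\; H \sum_{h=1}^{H} \expect_{s \sim d^{\pi^{\expert}}_h}\big[\, 1 - \pi_h(\piE_h(s)\mid s)\,\big].
\]
The crucial structural feature is that the right-hand side queries $\pi_h(\cdot\mid s)$ only at states $s \in \supp(d^{\pi^{\expert}}_h)$; the learned policy's behavior off the expert's state distribution is irrelevant, since once it deviates the loss is already charged at the maximal rate $H$.

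Next I would use \cref{asmp:dataset_collection} together with the disjoint-support hypothesis to pin down $\pi^{\nbcu}$ on expert states. Every supplementary trajectory is generated by $\pi^{\beta}$, so at step $h$ it only visits states in $\supp(d^{\pi^{\beta}}_h)$, which is disjoint from $\supp(d^{\pi^{\expert}}_h)$. Hence for any expert-reachable state $s \in \supp(d^{\pi^{\expert}}_h)$, no supplementary trajectory ever lands on $s$ at step $h$, so the union counts collapse to the expert counts: $n_h^{\oU}(s,a) = n_h^{\expert}(s,a)$ for all $a$ and $n_h^{\oU}(s) = n_h^{\expert}(s)$. Comparing the closed forms \eqref{eq:pi_bc} and \eqref{eq:pi_nbcu} then yields $\pi^{\nbcu}_h(\cdot\mid s) = \pi^{\bc}_h(\cdot\mid s)$ for every such $s$ (both equal the empirical conditional when $n_h^{\expert}(s)>0$ and the uniform distribution otherwise). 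Coupling the two algorithms on the same realized expert trajectories makes this an almost-sure identity.

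Combining the two steps, the upper bound in the displayed inequality is, for each realized set of expert trajectories, the same for $\pi^{\nbcu}$ as for $\pi^{\bc}$, because its integrand only probes expert states where the two policies agree. Taking expectations over the dataset randomness, the NBCU gap is therefore controlled by exactly the quantity that \cref{thm:bc_expert} bounds by $|\gS| H^2 / N_{\expert}$; intersecting with the trivial bound $V(\piE) - V(\pi^{\nbcu}) \le H$ gives the claimed $\min\{H,\, |\gS|H^2/N_{\expert}\}$. I expect the main obstacle to be the first step: stating the reduction cleanly enough that the dependence is genuinely confined to expert states requires a careful coupling of the rollouts of $\pi$ and $\piE$ up to the first deviation, and one must ensure that the random number of expert trajectories (binomial under \cref{asmp:dataset_collection}) is handled exactly as in \cref{thm:bc_expert}, reusing the binomial concentration underlying that theorem rather than re-deriving it.
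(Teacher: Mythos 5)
Your proposal is correct and follows essentially the same route as the paper's proof: the performance-difference reduction to per-state disagreement on the expert distribution, the observation that disjoint supports make all supplementary counts vanish on expert-reachable states (so $\pi^{\nbcu}$ coincides with $\pi^{\bc}$ there and in particular takes expert actions on visited expert states), and then the missing-mass bound plus the paper's binomial lemma (\cref{lem:binomial_distribution}) to handle the random number of expert trajectories. The only point to be careful about is the one you already flag: the \emph{statement} of \cref{thm:bc_expert} bounds the value gap rather than the surrogate disagreement quantity you need, so you must reuse the machinery underlying it (the expected missing-mass lemma of Rajaraman et al.\ together with the binomial lemma), which is exactly what the paper's proof does explicitly.
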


It is commonly believed that noisy demonstrations are harmful to performance \citep{sasaki2020behavioral}. Nevertheless, \cref{thm:bc_imitation_gap_no_overlap} provides a condition in which BC is robust to noisy samples. Technically, this robustness property stems from the theoretical analysis that only states along the expert trajectory are significant for the imitation gap, and noisy actions on the non-expert states do not contribute.

\textbf{Summary.} Our results demonstrate that without a special design, the naive application of BC on the union dataset is not guaranteed to be better than BC. However, good results may happen if the dataset coverage is nice, suggesting that noisy demonstrations are not the monster in all cases. Please refer to \cref{tab:nbcu_summary} for a summary.

\begin{table}[htbp]
    \caption{Effects of state-action attributes of samples for NBCU. \dquote{\cmark\cmark} indicates a helpful sample, \dquote{\xmark} indicates a harmful sample, and \dquote{\cmark} means something in between.}
    \label{tab:nbcu_summary}
    \centering
    \begin{tabular}{c|c|c}
                        &  Expert state & Non-expert state  \\  \hline 
       Expert action     &  \cmark\cmark &   \cmark  \\
       Non-expert action &  \xmark & \cmark
    \end{tabular}
\end{table}

\textbf{Connection with Experiments.} From \cref{fig:summary}, we already see two interesting phenomena: 1) NBCU is worse than BC for the noisy expert task; 2) NBCU is much better than BC for the full replay task. We use our theory to interpret these results. First, for the noisy expert task, our experiment setting (refer to \cref{appendix:experiment_details} for details) ensures that the supplementary dataset has noisy demonstrations on expert states, following the idea in \cref{prop:bc_union_lower_bound}. In this case, \cref{thm:bc_union} predicts that NBCU is no better than BC. Second, for the full replay task, we remark that the replay buffer contains lots of expert-level trajectories (refer to \cref{fig:sac} in Appendix which shows that the online SAC converges to the expert-level performance quickly). Therefore, the dataset coverage is good in this scenario and \cref{thm:bc_imitation_gap_expert} and \cref{thm:bc_imitation_gap_no_overlap} can explain the good performance of NBCU.

\section{Analysis of Weighted Behavioral Cloning with Union Dataset}
\label{sec:analysis_of_weighted_behavioral_cloning}

In this section, we explore an alternative approach to leverage the expert and supplementary datasets. In light of \citep{kim2022demodice, xu2022discriminator}, a discriminator is trained to score samples, upon which a weighted BC objective is used for policy optimization:
\begin{align}   \label{eq:weighted_bc}
   &\pi^{\wbcu} \in  \argmax_{\pi} \sum_{h=1}^{H} \sum_{(s, a) \in \gS \times \gA} \bigg\{ \widehat{d^{\oU}_h}(s, a)  \times \ls  w_h(s, a) \log \pi_h(a|s) \rs  \times  \sI \ls   w_h(s, a) \geq \delta \rs  \bigg\}, 
\end{align}
where $\widehat{d^{\oU}_h}(s, a)$ is the empirical state-action distribution estimated from $\gDU_h$, $w_h(s, a) \in [0, \infty)$ is the weight decided by the discriminator, and $\delta \in [0, \infty)$ is a hyper-parameter. We point out that $\delta$ is introduced for theoretical analysis, and in practice we set $\delta = 0$. We call this approach WBCU (weighted BC with the union dataset).

Our key idea is the importance sampling technique \citep[Chapter 9]{shapiro2003monte}, which can transfer samples in the union dataset under the expert policy distribution. In this way, WBCU is expected to address the failure mode of NBCU. We elaborate on this point as follows. In the population level (i.e., there are infinitely samples), we would have $\widehat{d^{\oU}_h} = d^{\oU}_h$, which is jointly determined by the expert policy and the behavioral policy. In this case, if $w_h(s, a) = d^{\expert}_h(s, a) / d^{\oU}_h(s, a)$, we would have $\widehat{d^{\oU}_h}(s, a) w_h(s, a) = d^{\expert}_h(s, a)$. Accordingly, the objective \eqref{eq:weighted_bc} is to learn a policy as if samples were collected by the expert policy. In practice, $d^{\expert}_h(s, a)$ and $d^{\oU}_h(s, a)$ are unknown; instead, we only have finite samples from these two distributions. Therefore, we need to estimate the grounded importance sampling ratio $d^{\expert}_h(s, a) / d^{\oU}_h(s, a)$.

We emphasize that a simple two-step idea---first estimating $d^{\expert}_h(s, a)$ and $d^{\oU}_h(s, a)$ separately and then calculating their quotient---is intractable. This is because it is difficult to accurately estimate the probability density of high-dimensional distributions. Following the seminal idea in \citep{goodfellow2014gan}, we take a one-step approach: we directly train a discriminator to estimate $d^{\expert}_h(s, a) / d^{\oU}_h(s, a)$. Concretely, we consider time-dependent parameterized discriminators $\{ c_h : \gS \times \gA \rar (0, 1)\}_{h=1}^{H}$, each of which has an objective
\begin{align}  
  \max_{c_h} & \sum_{(s, a) \in \gS \times \gA} \widehat{d^{\expert}_h}(s, a) \ls \log \lp c_h (s, a) \rp \rs  + \sum_{(s, a) \in \gS \times \gA} \widehat{d^{\oU}_h} (s, a) \ls \log \lp 1- c_h (s, a) \rp \rs.      \label{eq:discriminator_opt}
\end{align}
The above problem amounts to training a binary classifier (i.e., the logistic regression). In the population level, with the first-order optimality condition, we have 
\begin{align}   \label{eq:closed_form}
    c^\star_h (s, a) = \frac{{d^{\expert}_h} (s, a)}{ {d^{\expert}_h} (s, a) + {d^{\oU}_h} (s, a)}.
\end{align}
Then, we can obtain the importance sampling ratio $d^{\expert}_h(s, a) / d^{\oU}_h(s, a)$ in the following way:
\begin{align}   \label{eq:from_c_to_w}
    w_h(s, a) = \frac{c^{\star}_h(s, a)}{1 - c^{\star}_h(s, a)}.
\end{align}
Based on the above discussion, we outline the implementation of the proposed method WBCU in \cref{algo:wbcu}.

\begin{algorithm}[htbp]
\caption{WBCU}
\label{algo:wbcu}
\begin{algorithmic}[1]
\Require{Expert dataset $\gDE$ and supplementary dataset $\gDS$.}
\State{$\gDU \lar \gDE \cup \gDS$.}
\State{Train a binary classifier $c$ by objective \eqref{eq:discriminator_opt} with $\gDE$ and $\gDU$.}
\State{Compute the importance sampling ratio $w$ by \cref{eq:from_c_to_w}.}
\State{Apply BC to learn a policy $\pi$ by objective \eqref{eq:weighted_bc} with $\gDU$.}
\end{algorithmic}
\end{algorithm}

\begin{rem}  \label{rem:bias_issue}

The weighting rule of WBCU is unbiased in the sense that WBCU directly estimates the importance sampling ratio, while prior methods in \citep{kim2022demodice, xu2022discriminator} use biased/regularized weighting rules. As byproducts, WBCU has fewer hyper-parameters to tune.

First, DemoDICE also implements the policy learning objective \eqref{eq:weighted_bc}, but DemoDICE uses the weighting rule $\widetilde{w}(s, a) \propto d^{\star}(s, a) / d^{\oU}(s, a)$ (refer to the formula between Equations (19)-(20) in \citep{kim2022demodice}), where $d^{\star}$ is computed by a regularized state-action distribution objective (refer to \citep[Equations (5)-(7)]{kim2022demodice})\footnote{For a moment, we use the notations in \citep{kim2022demodice} and present their results under the stationary and infinite-horizon MDPs. Same as the discussion of DWBC \citep{xu2022discriminator}.}:
\begin{align*}
     d^{\star} &= \argmin_{d}   \KL(d \Vert d^{\expert}) + \alpha \KL(d \Vert d^{\oU}) \\
     \text{s.t.}  \quad & d(s, a) \geq 0 \quad \forall s, a.  \\
    & \sum_{a} d(s, a) = (1-\gamma) \rho(s) + \gamma \sum_{s^\prime, a^\prime} P(s|s^\prime, a^\prime) d(s^\prime, a^\prime) \quad \forall s. 
\end{align*}
where $\gamma \in [0, 1)$ is the discount factor, $\alpha > 0$ is a hyper-parameter. Due to the regularized term in the objective and the Bellman flow constraint, we have $d^{\star} \ne d^{\expert}$.

Second, DWBC considers a different policy learning objective (refer to \citep[Equation (17)]{xu2022discriminator}):
\begin{align}
    \min_{\pi} \quad &\alpha \sum_{(s, a) \in \gD^{\expert}} \ls - \log \pi(a|s) \rs - \sum_{(s, a) \in \gDE} \ls - \log \pi(a|s) \cdot \frac{\lambda}{c( 1- c)} \rs  + \sum_{(s, a) \in \gDS} \ls - \log \pi(a|s) \cdot \frac{1}{1-c} \rs,   \label{eq:dwbc}
\end{align}
where $\alpha > 0, \lambda > 0$ are hyper-parameters, and $c$ is the output of the discriminator that is jointly trained with $\pi$ (refer to \citep[Equation (8)]{xu2022discriminator}):
\begin{align*}
    \min_{c} \quad & \lambda \sum_{(s, a) \in \gDE} \ls - \log c(s, a, \log \pi(a|s)) \rs  + \sum_{(s, a) \in \gDS} \ls - \log (1 - c(s, a, \log \pi(a|s))) \rs  \\
    &- \lambda \sum_{(s, a) \in \gDE} \ls - \log (1 - c(s, a, \log \pi(a|s)))\rs. 
\end{align*}
Since its input additionally incorporates $\log \pi$, the discriminator is not guaranteed to estimate the state-action distribution. Thus, the weighting in \cref{eq:dwbc} loses a connection with the importance sampling ratio. 
\end{rem}

Experiments in \cref{fig:summary} show that WBCU simultaneously works well on the noisy expert and full replay tasks, while prior methods like DemoDICE and DWBC perform well only on one of them. We believe the discussion in \cref{rem:bias_issue} can partially explain the empirical observation. Next, we investigate the theoretical foundation of WBCU.

\subsection{Negative Result of WBCU With Tabular Representation}

In this section, we consider parameterizing the discriminator $c$ by a huge table (i.e., a vector with dimension $|\gS||\gA|$). We present a surprising counter-intuitive result. 

\begin{prop}    \label{prop:wbcu_negative}
In the tabular case (with $\delta = 0$),  we have $\pi^{\wbcu} = \pi^{\bc}$. 
\end{prop}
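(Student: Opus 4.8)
The plan is to show that in the tabular case the importance weight $w_h$ produced by the discriminator exactly cancels the union distribution appearing in the WBCU objective \eqref{eq:weighted_bc}, so that this objective collapses \emph{pointwise} onto the plain BC objective \eqref{eq:bc_opt}. First I would dispose of the threshold: since $w_h(s,a)\in[0,\infty)$ and $\delta=0$, the indicator $\sI[w_h(s,a)\geq\delta]$ is identically $1$, so it plays no role and can be dropped.

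Next I would solve the discriminator problem \eqref{eq:discriminator_opt} \emph{exactly}, which is the whole point of the tabular parameterization. Because each entry $c_h(s,a)$ is an independent free variable, the objective decouples across pairs $(s,a)$ into scalar problems of the form $\widehat{d^{\expert}_h}(s,a)\log c + \widehat{d^{\oU}_h}(s,a)\log(1-c)$. Setting the derivative to zero (i.e. $p/c=q/(1-c)$) gives the \emph{empirical} minimizer — the finite-sample analogue of \eqref{eq:closed_form} with the empirical distributions in place of the population ones:
\[
c^\star_h(s,a) = \frac{\widehat{d^{\expert}_h}(s,a)}{\widehat{d^{\expert}_h}(s,a) + \widehat{d^{\oU}_h}(s,a)},
\]
valid for every $(s,a)$ with $\widehat{d^{\oU}_h}(s,a)>0$. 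Substituting into \eqref{eq:from_c_to_w} yields the clean identity $w_h(s,a) = \widehat{d^{\expert}_h}(s,a)/\widehat{d^{\oU}_h}(s,a)$.

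Then I would plug this weight back into \eqref{eq:weighted_bc}. On the support of $\widehat{d^{\oU}_h}$ the product telescopes,
\[
\widehat{d^{\oU}_h}(s,a)\, w_h(s,a) = \widehat{d^{\expert}_h}(s,a),
\]
so the per-step summand reduces to $\sum_{(s,a)}\widehat{d^{\expert}_h}(s,a)\log\pi_h(a|s)$, which is exactly the BC objective \eqref{eq:bc_opt}. Since the two optimization programs then have identical objective functions, their maximizers coincide and $\pi^{\wbcu}=\pi^{\bc}$.

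The hard part will be handling the degenerate pairs cleanly rather than any heavy computation. I would argue that because $\gDU$ contains $\gDE$, any pair with $\widehat{d^{\oU}_h}(s,a)=0$ automatically has $\widehat{d^{\expert}_h}(s,a)=0$; such pairs contribute nothing to either objective, so the formal $0/0$ in $w_h$ is harmless. I also need to confirm the two programs resolve \emph{unvisited} states in the same way: for any $s$ with $n_h^{\expert}(s)=0$ neither objective constrains $\pi_h(\cdot|s)$, and both invoke the same uniform convention $1/|\gA|$ as in \eqref{eq:pi_bc}, so the closed-form solutions agree everywhere. The genuinely delicate point to make rigorous is the exact-fit step: it is precisely the \emph{unrestricted} tabular discriminator that lets $w_h$ reproduce the empirical ratio exactly, and this is what forces the counter-intuitive collapse to BC — the mechanism that breaks (in WBCU's favor) once $c$ is constrained to be smooth, as exploited later.
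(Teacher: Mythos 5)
Your proposal is correct and follows essentially the same route as the paper's proof: solve the tabular discriminator exactly via the first-order optimality condition to get $c^\star_h(s,a)=\widehat{d^{\expert}_h}(s,a)/(\widehat{d^{\expert}_h}(s,a)+\widehat{d^{\oU}_h}(s,a))$, deduce $w_h(s,a)=\widehat{d^{\expert}_h}(s,a)/\widehat{d^{\oU}_h}(s,a)$, and observe that $\widehat{d^{\oU}_h}(s,a)\,w_h(s,a)=\widehat{d^{\expert}_h}(s,a)$ collapses objective \eqref{eq:weighted_bc} to \eqref{eq:bc_opt}. Your additional care with the $\delta=0$ indicator, the $0/0$ pairs (harmless since $\gDE\subseteq\gDU$), and the matching uniform convention on unvisited states only makes explicit what the paper's terse argument leaves implicit.
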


\cref{prop:wbcu_negative} shows that even if we have a large number of supplementary samples and even if we use the importance sampling, WBCU is not guaranteed to outperform BC, based on the tabular representation. We highlight that this failure mode is because the discriminator has no extrapolation ability in this case. Specifically, for an expert-style sample $(s, a)$ that \emph{only} appears in the supplementary dataset, we have $\widehat{d^{\expert}_h}(s, a) = 0$ and $\widehat{d^{\oU}_h}(s, a) > 0$, so $c^{\star}_h(s, a) = \widehat{d^{\expert}_h}(s, a) / (\widehat{d^{\expert}_h}(s, a) + \widehat{d^{\oU}_h}(s, a)) = 0$. That is, such a good sample does not contribute to the learning objective \eqref{eq:weighted_bc}. Intuitively, the tabular representation simply treats samples in a discrete way, which ignores the correlation between samples.

\begin{figure*}[t]
    \centering
    \includegraphics[width=0.8\linewidth]{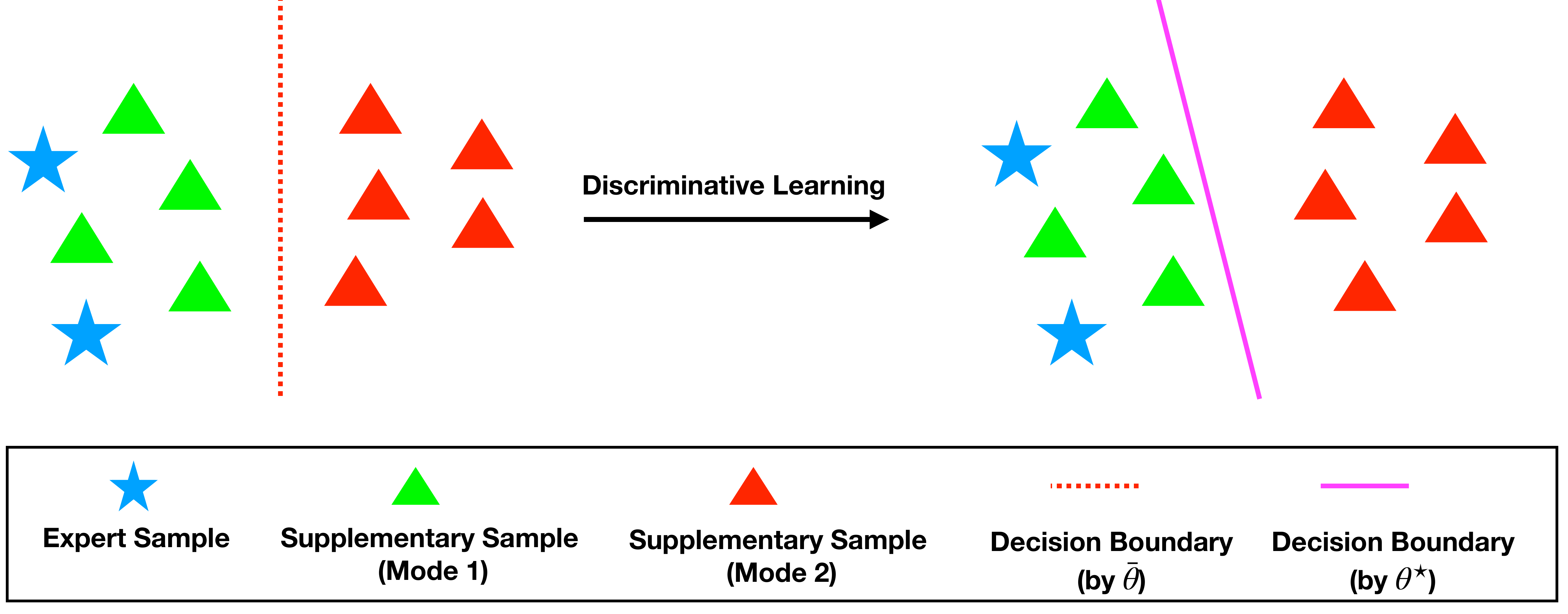}
    \caption{Illustration for the learning scheme of WBCU under \cref{asmp:linear_separable}.}
    \label{fig:wbcu}
\end{figure*}

\subsection{Positive Result of WBCU With Function Approximation}
\label{subsec:positive_wbcu}

To bypass the hurdle in the previous section, we investigate WBCU with certain function approximation in this section. To avoid the tabular/discrete representation, we will consider \dquote{smooth} function approximators, which can model the internal correlation between samples. Specifically, we consider the discriminator to be parameterized by 
\begin{align}   \label{eq:discriminator}
    c_h(s, a; \theta_h) = \frac{1}{1 + \exp ( - \langle \phi_h(s, a), \theta_h \rangle)},
\end{align}
where $\phi_h(s, a) \in \real^{d}$ is the feature vector (that can be learned by neural networks), and $\theta_h \in \real^{d}$ is the parameter to train. Accordingly, the optimization problem becomes:
\begin{align}  
  & \min_{\theta_h}  \gL_h (\theta_h) \triangleq \bigg\{ \sum_{(s, a)} \widehat{d^{\expert}_h}(s, a) \ls \log \lp 1 + \exp \lp - \langle \phi_h (s,a), \theta_h \rangle \rp \rp \rs  + \sum_{(s, a)} \widehat{d^{\oU}_h} (s, a) \ls \log \lp 1 + \exp \lp \langle \phi_h (s,a), \theta_h \rangle \rp \rp \rs \bigg\}.      \label{eq:parameterized_discriminator_opt}
\end{align}
Let $\theta^{\star} = \{ \theta_1^{\star}, \cdots, \theta_H^{\star}\}$ be the optimal solution obtained from \cref{eq:parameterized_discriminator_opt}. Due to the side information in the feature, samples are no longer treated independently, and the discriminator can perform a structured estimation. We clarify that to be consistent with the previous results, the policy is still based on the tabular representation. We discuss the general function approximation of policy in \cref{sec:bc_gfa}.

Since $c^{\star}$ is no longer analytic as in \cref{eq:closed_form}, a natural question is: what can we say about it? Our intuition is stated as follows. Let $\gDS_h$ denote the set of samples in step $h$ in $\gDS$. Since the behavior policy that collects $\gDS_h$ is diverse, we can imagine $\gDS_h$ contains two modes of samples: some of them actually are also collected by the expert policy while others are not. In the former case, we expect that $w_h(s, a)$ is large so that $\widehat{d^{\oU}_h}(s, a) w_h(s, a) \approx 1$, indicating such a sample $(s, a)$ is likely collected by the expert. In the latter case, we hope the discriminator can predict $w_h$ with a small value so that $\widehat{d^{\oU}_h}(s, a) w_h(s, a) \approx 0$, indicating it is a non-expert sample. Notice that $w_h$ is monotone with respect to the inner product $\langle \phi_h(s, a), \theta \rangle$; refer to Equations \eqref{eq:from_c_to_w}\eqref{eq:discriminator}. Therefore, we conclude that a larger $\langle \phi_h(s, a), \theta \rangle$ means a significant contribution to the learning objective \eqref{eq:weighted_bc}. Next, we demonstrate that the above intuition can be achieved under mild assumptions.

\begin{asmp}[Linear Separability]    \label{asmp:linear_separable}
Let $\gDS = \gD^{\oS, 1} \cup \gD^{\oS, 2}$ and $\gD^{\oS, 1} \cap \gD^{\oS, 2} = \emptyset$, where $\gD^{\oS, 1}$ is collected by the expert policy and $\gD^{\oS, 2}$ is collected by a sub-optimal policy (but the algorithm does not know this split). For each time step $h \in [H]$, there exists a ground truth parameter $\bar{\theta}_h \in \reals^d$, for any $(s, a) \in \gDE_h \cup \gD^{\oS, 1}_h$ and $(s^\prime, a^\prime) \in \gD^{\oS, 2}_h$, it holds that
\begin{align*}
    \langle  \bar{\theta}_h,  \phi_h (s, a) \rangle  > 0,  \langle  \bar{\theta}_h,  \phi_h (s^\prime, a^\prime) \rangle  < 0.
\end{align*}
\end{asmp}

Readers may realize that \cref{asmp:linear_separable} is closely related to the notion of \dquote{margin} in the classification problem. Define 
\begin{align*}
    \Delta_h(\theta) \triangleq \bigg\{  & \min_{(s, a) \in \gDE_h \cup \gD^{\oS, 1}_h}   \langle  \theta,  \phi_h (s, a) \rangle - \max_{(s^\prime, a^\prime) \in \gD^{\oS, 2}_h}  \langle  \theta,  \phi_h (s^\prime, a^\prime) \rangle  \bigg\}.
\end{align*}
From \cref{asmp:linear_separable}, we have $\Delta_h(\bar{\theta}_h) > 0$. This means that there \emph{exists} a classifier that recognizes samples from both $\gDE_h$ and $\gD^{\oS, 1}_h$ as \dquote{good} samples, which contributes to the objective \eqref{eq:weighted_bc}. On the other hand, samples from $\gD^{\oS, 2}_h$ will be classified as \dquote{bad} samples, which do not matter for the learned policy. Note that such a nice classifier is assumed to exist, which is not identical to what is learned via \cref{eq:parameterized_discriminator_opt}. Next, we theoretically control the (parameter) distance between two classifiers.

Before further discussion, we note that $\bar{\theta}_h$ is not unique if it exists. Without loss of generality, we define $\bar{\theta}_h$ as that can achieve the maximum margin (among all unit vectors\footnote{Otherwise, the margin is unbounded by multiplying $\bar{\theta}_h$ with a positive scalar.}). To theoretically characterize the movement of $\theta^{\star}$, we first characterize the landscape properties (e.g., Lipschitz continuity and quadratic growth conditions\footnote{These terminologies are from the optimization literature (see, e.g., \citep{karimi2016linear, drusvyatskiy2018error}).}) of $\Delta_h$ and $\gL_h(\theta)$ in \cref{lem:lipschitz_continuity} and \cref{lem:strong_convexity}, respectively.

\begin{lem}[Lipschitz Continuity]   \label{lem:lipschitz_continuity}
For any $\theta \in \real^{d}$, the margin function is $L_{h}$-Lipschitz continuous in the sense that 
\begin{align*}
 \Delta_h(\bar{\theta}_h)  - \Delta_h(\theta) \leq L_h \lnorm \bar{\theta}_h - \theta \rnorm ,
\end{align*}
where $L_h = \lnorm \phi_h (s^1, a^1) - \phi_h (s^2, a^2) \rnorm$ with $(s^1, a^1) \in \argmin_{ (s, a) \in \gDE_h \cup \gD^{\oS, 1}_h} \langle \theta,  \phi_h (s, a) \rangle$ and $(s^2, a^2) \in \\ \argmax_{(s, a) \in  \gD^{\oS, 2}_h } \langle \theta,  \phi_h (s, a) \rangle$.
\end{lem}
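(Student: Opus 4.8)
The plan is to exploit the fact that $\Delta_h$ is the difference of a pointwise minimum of linear functions and a pointwise maximum of linear functions, so the claimed estimate is nothing but the first-order concavity (supporting-hyperplane) inequality for $\Delta_h$ at the point $\theta$. Writing $A = \gDE_h \cup \gD^{\oS, 1}_h$ and $B = \gD^{\oS, 2}_h$, I decompose $\Delta_h(\theta) = f(\theta) - g(\theta)$ with $f(\theta) = \min_{(s,a)\in A}\langle \theta, \phi_h(s,a)\rangle$ and $g(\theta) = \max_{(s,a)\in B}\langle \theta, \phi_h(s,a)\rangle$. Since $f$ is concave and $g$ is convex, $\Delta_h$ is concave, and the pair $(s^1,a^1),(s^2,a^2)$ supplies a supergradient of $\Delta_h$ at $\theta$, namely $\phi_h(s^1,a^1)-\phi_h(s^2,a^2)$.

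The core step is a one-sided comparison at $\bar{\theta}_h$. Because $(s^1,a^1)\in A$ is merely feasible (not necessarily optimal) for the minimum defining $f(\bar{\theta}_h)$, I have $f(\bar{\theta}_h) \le \langle \bar{\theta}_h, \phi_h(s^1,a^1)\rangle$; likewise, since $(s^2,a^2)\in B$ is feasible for the maximum defining $g(\bar{\theta}_h)$, I have $g(\bar{\theta}_h) \ge \langle \bar{\theta}_h, \phi_h(s^2,a^2)\rangle$. Combining these gives
\[
\Delta_h(\bar{\theta}_h) = f(\bar{\theta}_h) - g(\bar{\theta}_h) \le \langle \bar{\theta}_h,\, \phi_h(s^1,a^1) - \phi_h(s^2,a^2)\rangle .
\]
On the other hand, by the very definition of $(s^1,a^1)$ and $(s^2,a^2)$ as the optimizers \emph{at} $\theta$, the value $\Delta_h(\theta)$ equals $\langle \theta,\, \phi_h(s^1,a^1)-\phi_h(s^2,a^2)\rangle$ exactly.

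Subtracting the exact expression for $\Delta_h(\theta)$ from the upper bound for $\Delta_h(\bar{\theta}_h)$ collapses everything into a single inner product,
\[
\Delta_h(\bar{\theta}_h) - \Delta_h(\theta) \le \langle \bar{\theta}_h - \theta,\; \phi_h(s^1,a^1) - \phi_h(s^2,a^2)\rangle ,
\]
and a final Cauchy--Schwarz step yields the claim with $L_h = \lnorm \phi_h(s^1,a^1) - \phi_h(s^2,a^2)\rnorm$. I do not anticipate a genuine obstacle here: the argument is short, and the only thing to be careful about is the orientation of the two inequalities — using mere \emph{feasibility} of the selectors at $\bar{\theta}_h$ but their \emph{optimality} at $\theta$. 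This asymmetry is also what explains why the statement is a one-sided bound with a $\theta$-dependent constant rather than a symmetric global Lipschitz estimate; a uniform bound would instead force $L_h$ to be the feature diameter $\max\lnorm \phi_h(s,a)-\phi_h(s',a')\rnorm$ over the relevant samples.
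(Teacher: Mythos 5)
Your proof is correct and is essentially identical to the paper's: the paper also bounds $\Delta_h(\bar{\theta}_h)$ using mere feasibility of $(s^1,a^1)$ and $(s^2,a^2)$ at $\bar{\theta}_h$, uses their optimality at $\theta$ to write $\Delta_h(\theta)$ exactly, and finishes with Cauchy--Schwarz. The concavity/supergradient framing you add is a nice interpretation but does not change the computation.
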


\begin{lem}[Quadratic Growth]  \label{lem:strong_convexity}
For any $h$, let $A_h \in \reals^{N_{\tot} \times d}$ be the matrix that aggregates the feature vectors of samples in $\gDU_h$. Consider the under-parameterization case that $\rank (A_h) = d$. There exists $\tau_h > 0$ such that
\begin{align*}
    \gL_h (\widebar{\theta}_h) \geq \gL_h (\theta^\star_h) + \frac{\tau_h}{2} \lnorm \widebar{\theta}_h - \theta^\star_h \rnorm^2.
\end{align*}
\end{lem}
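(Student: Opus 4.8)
The plan is to derive the quadratic growth inequality from a uniform lower bound on the Hessian of $\gL_h$ restricted to the segment joining $\theta^\star_h$ and $\widebar{\theta}_h$, combined with the integral form of Taylor's theorem and the first-order optimality $\nabla\gL_h(\theta^\star_h)=0$ (which holds because, by definition, $\theta^\star_h$ is the finite minimizer of the smooth convex objective \eqref{eq:parameterized_discriminator_opt}). Note that $\gL_h$ is a finite sum of terms of the form $\log(1+\exp(\pm\langle\phi_h(s,a),\theta\rangle))$, each convex and infinitely differentiable, so $\gL_h$ is smooth and convex.

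The first substantive step is to compute the Hessian. Using that the second derivative of both $z\mapsto\log(1+e^{-z})$ and $z\mapsto\log(1+e^{z})$ equals $\sigma(z)(1-\sigma(z))$, where $\sigma$ is the logistic function, a direct calculation gives
$$\nabla^2\gL_h(\theta)=\sum_{(s,a)}\big[\widehat{d^{\expert}_h}(s,a)+\widehat{d^{\oU}_h}(s,a)\big]\,\sigma(z_{s,a})\big(1-\sigma(z_{s,a})\big)\,\phi_h(s,a)\phi_h(s,a)^\top,$$
with $z_{s,a}=\langle\phi_h(s,a),\theta\rangle$. The key structural identity is that the union-weighted Gram matrix reproduces the design matrix: since each sample in $\gDU_h$ contributes exactly one row to $A_h$, we have $\sum_{(s,a)}\widehat{d^{\oU}_h}(s,a)\phi_h(s,a)\phi_h(s,a)^\top=\tfrac{1}{N_{\tot}}A_h^\top A_h$, which is positive definite precisely under the rank condition $\rank(A_h)=d$.

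Next I would lower-bound the curvature along the segment. Parameterizing it by $\theta_t=(1-t)\theta^\star_h+t\widebar{\theta}_h$ for $t\in[0,1]$, the inner products $z_{s,a}$ vary continuously over the compact set $[0,1]$ with finitely many features, so they stay bounded and hence $\sigma(z_{s,a})(1-\sigma(z_{s,a}))\geq\beta_h$ for some $\beta_h>0$. Dropping the nonnegative expert contribution and invoking the identity above yields $\nabla^2\gL_h(\theta_t)\succeq\beta_h N_{\tot}^{-1}A_h^\top A_h\succeq\tau_h I$ with $\tau_h=\beta_h\lambda_{\min}(A_h^\top A_h)/N_{\tot}>0$. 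Finally, the integral-remainder Taylor expansion together with $\nabla\gL_h(\theta^\star_h)=0$ gives $\gL_h(\widebar{\theta}_h)-\gL_h(\theta^\star_h)=\int_0^1(1-t)(\widebar{\theta}_h-\theta^\star_h)^\top\nabla^2\gL_h(\theta_t)(\widebar{\theta}_h-\theta^\star_h)\,dt\geq\tfrac{\tau_h}{2}\lnorm\widebar{\theta}_h-\theta^\star_h\rnorm^2$, which is the claim.

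I expect the main obstacle to be the non-degeneracy of the Hessian rather than any single calculation: the logistic loss is not globally strongly convex, since its curvature weights $\sigma(z)(1-\sigma(z))$ decay to zero as $\|\theta\|\to\infty$, so no global $\tau_h$ exists. The resolution is to exploit that the inequality is only asserted on the segment $[\theta^\star_h,\widebar{\theta}_h]$, on which compactness forces the weights away from zero, and to combine this with the under-parameterization assumption $\rank(A_h)=d$, which is exactly what makes the union-weighted feature covariance positive definite. A secondary point is the existence and finiteness of $\theta^\star_h$, which I take from the problem setup (it is defined as the minimizer of \eqref{eq:parameterized_discriminator_opt}); this is consistent with the opposing-sign structure of the two sums, as expert samples enter both terms and thereby preclude escape to infinity along feature directions that are active on the expert data.
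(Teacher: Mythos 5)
Your proof is correct and establishes exactly what the lemma asserts, but it reaches the Hessian lower bound by a genuinely different and more elementary route than the paper. The shared skeleton is identical: expand $\gL_h$ to second order around $\theta^\star_h$, kill the linear term using $\nabla \gL_h(\theta^\star_h) = 0$, and lower-bound the curvature on the segment joining $\theta^\star_h$ and $\widebar{\theta}_h$. The paper uses the mean-value (Lagrange) form of Taylor's theorem, so it must control the Hessian at an \emph{unknown} intermediate point $\theta_h^\prime$; to do this it factors $\gL_h(\theta) = G(B_h \theta)$ with $B_h$ the signed design matrix and $G$ a separable sum of terms $g(x) = \log(1+\exp(x))$, and proves a dedicated claim --- using the sign pattern of $g'''$ (equivalently, unimodality of $g''$) together with the linearity of $B_h\theta^t$ in $t$ --- that the minimal eigenvalue of the diagonal factor $\nabla^2 G(B_h\theta^t)$ over $t \in [0,1]$ is attained at $t=0$ or $t=1$; the rank hypothesis then enters through $\inf\{\norm{B_h x} : \norm{x} = 1\} > 0$. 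You avoid that endpoint-eigenvalue claim entirely: with the integral form of the remainder, all you need is a uniform lower bound $\beta_h > 0$ on the logistic weights $\sigma(z)(1-\sigma(z))$ along the segment, which follows from compactness and the finiteness of the sample set, and the rank hypothesis enters through the cleaner identity $\sum_{(s,a)} \widehat{d^{\oU}_h}(s,a)\, \phi_h(s,a)\phi_h(s,a)^\top = \frac{1}{N_{\tot}} A_h^\top A_h \succ 0$ after discarding the positive-semidefinite expert part of the Hessian. Your route is shorter and each step is routine; what the paper's finer analysis buys is a constant $\tau_h$ expressed through the two \emph{endpoint} Hessians of the diagonal factor alone, which it later exploits to compute $\tau_h$ numerically when verifying inequality \eqref{eq:condition} in \cref{example:condition}. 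Both arguments rest on the same unproved premise --- that the minimizer $\theta^\star_h$ of \eqref{eq:parameterized_discriminator_opt} exists and is finite --- and you flag this appropriately as coming from the problem setup rather than claiming it follows from the rank condition (it does not in general: with $\rank(A_h)=d$ one can still have the infimum approached only as $\norm{\theta}\to\infty$ along directions inactive on the expert features).
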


\begin{thm}  \label{thm:wbcu}
Under \cref{asmp:linear_separable}, for any $h \in [H]$, if the following inequality holds 
\begin{align}   \label{eq:condition}
   \sqrt{ \frac{2 \lp \gL_h(\bar{\theta}_h) - \gL_h(\theta^{\star}_h) \rp}{\tau_h}} < \frac{\Delta_h (\bar{\theta}_h)}{L_h},
\end{align}
then we have $\Delta_h(\theta^{\star}_h) > 0$. 
\end{thm}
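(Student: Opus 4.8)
The plan is to prove the claim by directly chaining the two landscape lemmas established just before the theorem, reading the hypothesis \eqref{eq:condition} as exactly the threshold that makes the chain close. The guiding intuition is that the quadratic growth of $\gL_h$ forces the learned minimizer $\theta^{\star}_h$ to sit close (in parameter space) to the max-margin reference $\bar{\theta}_h$, while the Lipschitz continuity of the margin function $\Delta_h$ transfers this closeness in parameters into closeness of margins. If $\theta^{\star}_h$ is close enough to $\bar{\theta}_h$, then $\Delta_h(\theta^{\star}_h)$ cannot have fallen all the way from the strictly positive value $\Delta_h(\bar{\theta}_h)$ down to something non-positive, which is precisely the conclusion $\Delta_h(\theta^{\star}_h) > 0$.

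Concretely, I would first invoke \cref{lem:strong_convexity}. Since $\theta^{\star}_h$ minimizes $\gL_h$ we have $\gL_h(\bar{\theta}_h) - \gL_h(\theta^{\star}_h) \geq 0$, so rearranging the quadratic growth inequality and taking square roots yields the parameter-distance bound
\begin{align*}
\lnorm \bar{\theta}_h - \theta^{\star}_h \rnorm \leq \sqrt{\frac{2 \lp \gL_h(\bar{\theta}_h) - \gL_h(\theta^{\star}_h) \rp}{\tau_h}}.
\end{align*}
Next I would apply \cref{lem:lipschitz_continuity} at $\theta = \theta^{\star}_h$, which gives $\Delta_h(\bar{\theta}_h) - \Delta_h(\theta^{\star}_h) \leq L_h \lnorm \bar{\theta}_h - \theta^{\star}_h \rnorm$, equivalently
\begin{align*}
\Delta_h(\theta^{\star}_h) \geq \Delta_h(\bar{\theta}_h) - L_h \lnorm \bar{\theta}_h - \theta^{\star}_h \rnorm .
\end{align*}

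Substituting the first bound into the second gives
\begin{align*}
\Delta_h(\theta^{\star}_h) \geq \Delta_h(\bar{\theta}_h) - L_h \sqrt{\frac{2 \lp \gL_h(\bar{\theta}_h) - \gL_h(\theta^{\star}_h) \rp}{\tau_h}} .
\end{align*}
The hypothesis \eqref{eq:condition}, after multiplying through by $L_h > 0$, states exactly that the subtracted term is strictly smaller than $\Delta_h(\bar{\theta}_h)$, so the right-hand side is strictly positive and the theorem follows.

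The argument is a short chaining once the two lemmas are granted, so the work is conceptual rather than computational, and I expect the only delicate point to be the meaning of the constant $L_h$. In \cref{lem:lipschitz_continuity} it is defined through the active $\argmin$/$\argmax$ indices \emph{at the evaluation point}, so to make the single symbol $L_h$ in \eqref{eq:condition} legitimate I would either apply the lemma at $\theta = \theta^{\star}_h$ and read $L_h$ as the resulting constant, or, more cleanly, replace it by the uniform quantity $\max \lnorm \phi_h(s, a) - \phi_h(s^\prime, a^\prime) \rnorm$ over good/bad sample pairs, which dominates every pointwise constant and makes $\Delta_h$ globally Lipschitz. I would also flag the prerequisites that are used silently: the rank condition $\rank(A_h) = d$ in \cref{lem:strong_convexity} makes $\gL_h$ strictly convex, so $\theta^{\star}_h$ is unique and the square root is taken of a non-negative quantity, while $\Delta_h(\bar{\theta}_h) > 0$ from \cref{asmp:linear_separable} is what keeps the threshold in \eqref{eq:condition} meaningful in the first place.
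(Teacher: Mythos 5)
Your proposal is correct and is essentially identical to the paper's own proof: both chain \cref{lem:strong_convexity} (giving $\lnorm \bar{\theta}_h - \theta^{\star}_h \rnorm \leq \sqrt{2(\gL_h(\bar{\theta}_h) - \gL_h(\theta^{\star}_h))/\tau_h}$) with \cref{lem:lipschitz_continuity} applied at $\theta = \theta^{\star}_h$, and condition \eqref{eq:condition} then forces $\Delta_h(\theta^{\star}_h) \geq \Delta_h(\bar{\theta}_h) - L_h \lnorm \bar{\theta}_h - \theta^{\star}_h \rnorm > 0$. Your reading of $L_h$ as the constant produced by applying the Lipschitz lemma at $\theta^{\star}_h$ is exactly how the paper resolves that point as well.
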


To interpret \cref{thm:wbcu}, we note that $\Delta_h(\theta^{\star}_h) > 0$ means that the learned discriminator can perfectly distinguish the good samples (from $\gDE_h$ and $\gD^{\oS, 1}_h$) and bad samples (from $\gD^{\oS, 2}_h$). In other words, if the feature design is nice such that Inequality \eqref{eq:condition} holds, then the obtained classifier can still maintain the decision results by $\bar{\theta}$; refer to \cref{fig:wbcu} for illustration. Consequently, all samples from $\gDE_h$ and $\gD^{\oS, 1}_h$ are assigned large weights. In this way, WBCU can leverage additional samples to outperform BC. Technically, $\Delta_h(\theta_h^{\star}) > 0$ means that there exists a $\delta > 0$ such that we have $w_h(s, a) > \delta$ for $(s, a) \in \gDE_h \cup \gD^{\oS, 1}_h$ and $w_h(s, a) < \delta$ for $(s, a) \in \gD^{\oS, 2}_h$. As such, WBCU can utilize the good samples $\gD^{\oS, 1}_h$ and eliminate the bad samples $\gD^{\oS, 2}_h$ in theory. The detailed imitation gap bound depends on the number of trajectories in $\gDE \cup \gD^{\oS, 1}$, and this result is straightforward, so we omit details here.

Readers may ask whether inequality \eqref{eq:condition} can hold in practice. This question is hard to answer because the coefficient $\tau_h$ and $L_h$ are data-dependent. Nevertheless, we can provide a toy example to illustrate that inequality \eqref{eq:condition} holds and give a sharp condition for $d= 1$; please refer to \cref{sec:proof_of_weighted_behavioral_cloning} for details. Further relaxation of the condition and assumption is left for future work.

\begin{figure}[htbp]
    \centering
    \includegraphics[width=0.5\linewidth]{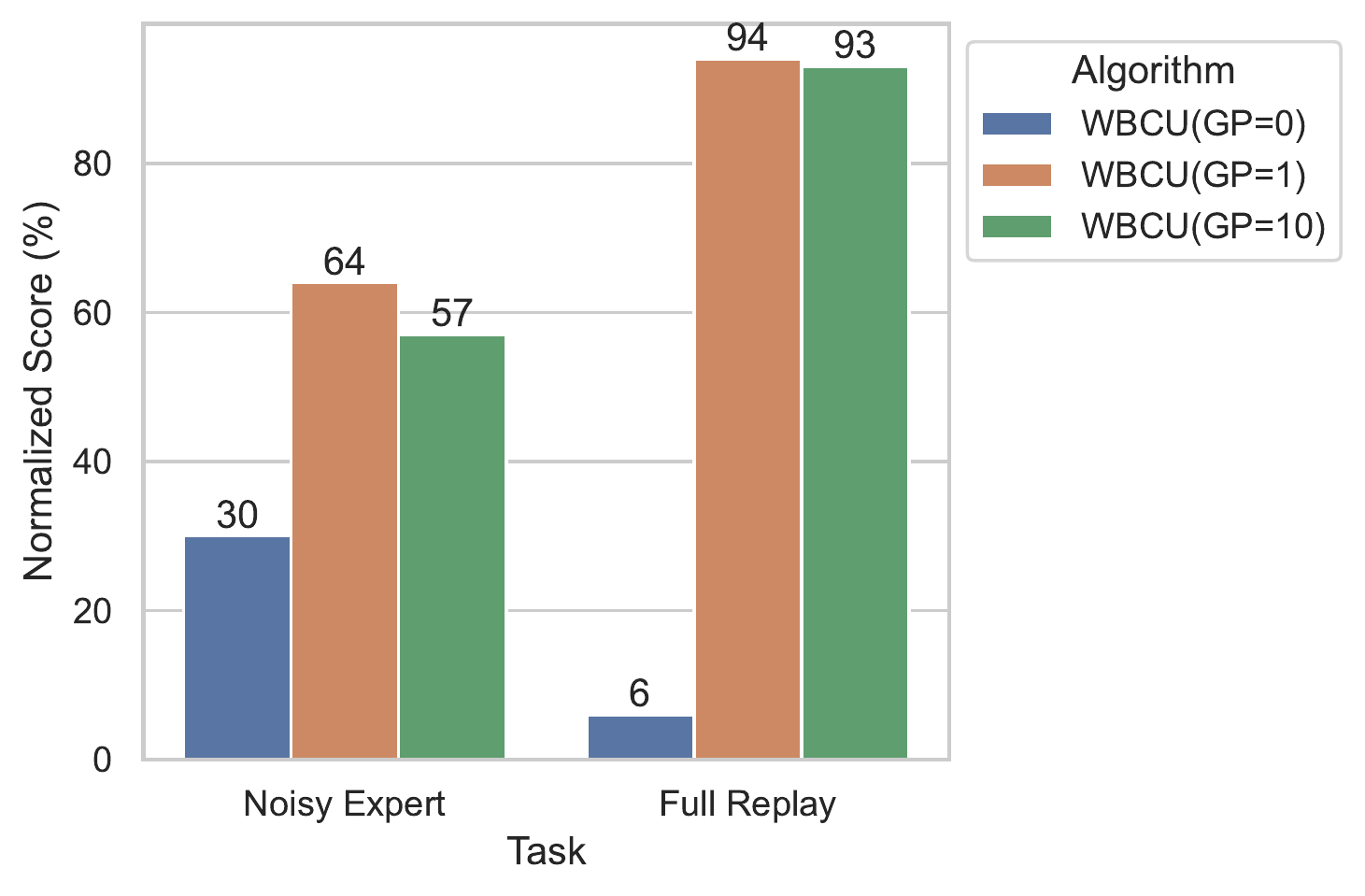}
    \caption{Averaged normalized scores of trained policies of WBCU with gradient penalty (GP). Numbers in the legend indicate the scale of the GP regularization.}
    \label{fig:wbcu_gradient_penalty}
\end{figure}

\textbf{Summary.} Our theoretical analysis (\cref{prop:wbcu_negative} and \cref{thm:wbcu}) discloses that the \dquote{smooth} function approximation is inevitable to achieve satisfying performance. For practitioners, our results would suggest that regularization techniques that control the smooth property of the function approximators may improve the performance, which we empirically verify below.

\textbf{Additional Experiments.}  We note that quite often, non-linear neural networks rather than linear functions are used in practice. For neural networks, the gradient penalty (GP) regularization\footnote{This technique adds a squared loss of the gradient norm to the original loss function; see \cref{appendix:experiment_details} for details.} is known to control the Lipschitz continuous property of the discriminator \citep{gulrajani2017improved}. In particular, a large gradient penalty loss can push the discriminator to prefer 1-Lipschitz continuous functions that are \dquote{smooth}. With the same set-up with experiments in \cref{fig:summary}, we show that the gradient penalty is crucial for the practical performance of WBCU; see \cref{fig:wbcu_gradient_penalty}. A similar phenomenon is also observed for the related algorithm DemoDICE; see \cref{fig:demo_dice_gp} in Appendix.

\section{Conclusion}

We theoretically explore imitation learning with a supplementary dataset, and empirical results corroborate our findings. While our results have several desirable features, they also have shortcomings. One limitation is that we consider the tabular representation in policy learning. However, our theoretical implications may remain unchanged if function approximation is used. Please see \cref{sec:bc_gfa} for discussion, which deserves further investigation. 

Exploring more applications of NBCU and WBCU is an interesting future work. For example, for large language models \citep{radford2018improving}, we may have massive supplementary samples from the Web, while examples with human annotations are limited. Compared with the existing reinforcement-learning-based methods (see, e.g., \citep{stiennon2020learning}), training good policies may be easier and more efficient by the developed imitation learning approaches.

\section*{Acknowledgements}

Ziniu Li would like to thank Yushun Zhang, Yingru Li, Jiancong Xiao, and Dmitry Rybin for reading the manuscript and providing valuable comments. Tian Xu would like to thank Fanming Luo, Zhilong Zhang, and Jingcheng Pang for reading the manuscript and providing helpful comments. Ziniu Li appreciates the helpful discussion with Congliang Chen about a technical lemma.

\bibliographystyle{abbrvnat}
\bibliography{reference.bib}

\newpage
\appendix
\onecolumn
\section{Proof of Results in Section \ref{sec:analysis_of_behavioral_cloning}}

\begin{lem}
\label{lem:binomial_distribution}
For any $N \in \mathbb{N}_{+}$ and $p \in (0, 1)$, if the random variable $X$ follows the binomial distribution, i.e., $X \sim \bin (N, p)$, then we have that
\begin{align*}
    \expect \ls \frac{1}{X+1} \rs \leq \frac{1}{N p}.
\end{align*}
\end{lem}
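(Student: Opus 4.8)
The plan is to evaluate $\expect\ls 1/(X+1) \rs$ in closed form and then bound it, rather than attempt a soft inequality argument. A tempting first move would be Jensen's inequality, but since $x \mapsto 1/(x+1)$ is convex, Jensen only produces the lower bound $\expect\ls 1/(X+1) \rs \geq 1/(\expect[X]+1)$, which points the wrong way; so I abandon that route and instead compute the sum exactly.

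Writing out the expectation,
\[
\expect\ls \frac{1}{X+1} \rs = \sum_{k=0}^{N} \frac{1}{k+1} \binom{N}{k} p^k (1-p)^{N-k},
\]
the key algebraic step is the binomial identity $\frac{1}{k+1}\binom{N}{k} = \frac{1}{N+1}\binom{N+1}{k+1}$, which absorbs the troublesome $1/(k+1)$ factor into a shifted binomial coefficient. Substituting this and re-indexing with $j = k+1$ (so $j$ runs from $1$ to $N+1$, with $p^k = p^{j-1}$ and $(1-p)^{N-k} = (1-p)^{N+1-j}$) turns the sum into
\[
\frac{1}{(N+1)p} \sum_{j=1}^{N+1} \binom{N+1}{j} p^j (1-p)^{N+1-j},
\]
which is the full binomial expansion of $(p + (1-p))^{N+1} = 1$ with only the $j=0$ term removed. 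Hence I obtain the exact formula $\expect\ls 1/(X+1) \rs = \big(1 - (1-p)^{N+1}\big)/\big((N+1)p\big)$.

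From here the claimed bound is immediate: since $0 \leq (1-p)^{N+1} \leq 1$ the numerator is at most $1$, and since $N+1 > N$ we conclude
\[
\expect\ls \frac{1}{X+1} \rs = \frac{1 - (1-p)^{N+1}}{(N+1)p} \leq \frac{1}{(N+1)p} \leq \frac{1}{Np},
\]
which is exactly the statement. The only real obstacle is recognizing the combinatorial identity that collapses the weighted sum into a shifted binomial series; everything after that is a one-line estimate. I would remark in passing that the exact expression reveals the bound $1/(Np)$ to be somewhat loose — the true value carries an extra factor of $\big(1-(1-p)^{N+1}\big)\cdot N/(N+1)$ — but the stated form is all that is needed when this lemma is invoked in the proof of \cref{thm:bc_expert}.
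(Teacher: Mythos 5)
Your proof is correct and follows essentially the same route as the paper's: both compute the expectation exactly via the identity $\frac{1}{k+1}\binom{N}{k} = \frac{1}{N+1}\binom{N+1}{k+1}$, recognize the re-indexed sum as the binomial expansion of $(p+(1-p))^{N+1}$ minus its first term to obtain $\bigl(1-(1-p)^{N+1}\bigr)/\bigl((N+1)p\bigr)$, and then relax to $1/(Np)$. The only difference is expository (your explicit remark on the slack in the final bound), not mathematical.
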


\begin{proof}
\begin{align*}
    \expect \ls \frac{1}{X+1} \rs &= \sum_{x=0}^N \lp \frac{1}{x+1} \rp  \frac{N!}{x! (N-x)!} p^{x} (1-p)^{N-x}
    \\
    &= \frac{1}{(N+1) p} \sum_{x=1}^{N+1} \lp \frac{(N+1)!}{x! (N+1 - x)!} \rp p^x (1-p)^{N+1-x}
    \\
    &=  \frac{1}{(N+1) p} \lp 1 - (1-p)^{N+1} \rp \leq \frac{1}{N p}. 
\end{align*}
\end{proof}

\subsection{Proof of Theorem \ref{thm:bc_expert}}

When $|\gDE| \geq 1$, by \citep[Theorem 4.2]{rajaraman2020fundamental}, we have that
\begin{align*}
    V(\piE) - \expect_{\gDE} \ls  V(\pi^{\bc}) \rs \leq \frac{4 |\gS| H^2}{9 |\gDE|}.
\end{align*}
When $|\gDE| = 0$, we simply have that
\begin{align*}
    V(\piE) - \expect_{\gDE} \ls  V(\pi^{\bc}) \rs \leq H.
\end{align*}
Therefore, we have the following unified bound.
\begin{align*}
    V(\piE) - \expect_{\gDE} \ls  V(\pi^{\bc}) \rs \leq \frac{|\gS| H^2}{ \max\{ |\gDE|, 1 \}} \leq \frac{2 |\gS| H^2}{ |\gDE|+1}.
\end{align*}
The last inequality follows that $\max\{ x, 1 \} \geq (x + 1)/2, \; \forall x \geq 0$. Finally, notice that $|\gDE| \sim \bin (N_{\tot}, \eta)$. By \cref{lem:binomial_distribution}, we have that
\begin{align*}
     V(\piE) - \expect \ls  V(\pi^{\bc}) \rs \leq \expect \ls \frac{2 |\gS| H^2}{ |\gDE|+1} \rs \leq \frac{2 |\gS| H^2}{ N_{\tot} \eta} = \frac{2 |\gS| H^2}{ N_{\expert} } , 
\end{align*}
which completes the proof.

\subsection{Proof of Theorem \ref{thm:bc_union}}
For analysis, we first define the mixture state-action distribution as follows.
\begin{align*}
     d^{\mix}_h (s, a) \triangleq \eta d^{\piE}_h (s, a) + (1-\eta) d^{\pi^{\beta}}_h (s, a), d^{\mix}_h (s) \triangleq \sum_{a \in \gA} d^{\mix}_h (s, a),  \; \forall (s, a) \in \gS \times \gA, \; \forall h \in [H].   
\end{align*}
Note that in the population level, the marginal state-action distribution of union dataset $\gDU$ in time step $h$ is exactly $d^{\mix}_h$. That is, $d^{\oU}_h (s, a) = d^{\mix}_h (s, a), \; \forall (s, a, h) \in \gS \times \gA \times [H]$. Then we define the mixture policy $\pimix$ induced by $d^{\mix}$ as follows.
\begin{align*}
     \pimix_h (a|s) =
    \begin{cases}
    \frac{d^{\mix}_h (s, a)}{d^{\mix}_h (s) } & \text{if } d^{\mix}_h (s) > 0,
    \\
     \frac{1}{\vert \gA \vert} & \text{otherwise}.
    \end{cases} \; \forall (s, a) \in \gS \times \gA\;, \forall h \in [H].
\end{align*}
From the theory of Markov Decision Processes, we know that (see, e.g., \citep{puterman2014markov}) 
\begin{align*}
    \forall h \in [H], \forall (s, a) \in \gS \times \gA, d^{\pimix}_h (s, a) = d^{\mix}_h (s, a). 
\end{align*}
Therefore, we can obtain that the marginal state-action distribution of union dataset $\gDU$ in time step $h$ is exactly $d^{\pimix}_h$. Then we have the following decomposition.
\begin{align*}
    \expect \ls V(\piE) - V(\pi^{\nbcu}) \rs &= \expect \ls V(\piE) - V (\pimix) +  V (\pimix) -  V(\pi^{\nbcu}) \rs
    \\
    &= \expect \ls V(\piE) - V (\pimix) \rs + \expect \ls  V (\pimix) -  V(\pi^{\nbcu}) \rs
    \\
    &= V(\piE) - V (\pimix) + \expect \ls  V (\pimix) -  V(\pi^{\nbcu}) \rs.  
\end{align*}
For $V(\piE) - V (\pimix)$, we have that 
\begin{align*}
    V(\piE) - V (\pimix) &= \sum_{h=1}^H \sum_{(s, a) \in \gS \times \gA} \lp d^{\piE}_h (s, a) - d^{\pimix}_h (s, a) \rp r_h (s, a)
    \\
    &= \sum_{h=1}^H \sum_{(s, a) \in \gS \times \gA} \lp d^{\piE}_h (s, a) - d^{\mix}_h (s, a) \rp r_h (s, a) 
    \\
    &= (1-\eta) \sum_{h=1}^H \sum_{(s, a) \in \gS \times \gA} \lp d^{\piE}_h (s, a) - d^{\pi^{\beta}}_h (s, a) \rp r_h (s, a)
    \\
    &= (1-\eta) \lp V (\piE) - V (\pi^{\beta}) \rp.
\end{align*}
The last equation follows the dual formulation of policy value \citep{puterman2014markov}. Besides, notice that $\expect \ls  V (\pimix) -  V(\pi^{\nbcu}) \rs$ is exactly the imitation gap of BC when regarding $\pimix$ and $\gDU$ as the expert policy and expert demonstrations, respectively. By \citep[Theorem 4.4]{rajaraman2020fundamental}, we have that
\begin{align*}
    \expect \ls  V (\pimix) -  V(\pi^{\nbcu})  \rs \lesssim  \frac{|\gS| H^2 \log (N_{\tot})}{N_{\tot}}.
\end{align*}
Combining the above two equations yields that
\begin{align*}
    \expect \ls V(\piE) - V(\pi^{\nbcu}) \rs \lesssim (1-\eta) \lp V (\piE) - V (\pi^{\beta}) \rp + \frac{|\gS| H^2 \log (N_{\tot})}{N_{\tot}}. 
\end{align*}

\subsection{Proof of Proposition \ref{prop:bc_union_lower_bound}}

We consider the instance named Standard Imitation in \citep{xu2021error}; see \cref{fig:standard_imitation}. In Standard Imitation, each state is an absorbing state. In each state, only by taking the action $a^1$, the agent can obtain the reward of $1$. Otherwise, the agent obtains zero rewards. The initial state distribution is a uniform distribution, i.e., $\rho (s) = 1/|\gS|, \forall s \in \gS$.

\begin{figure}[htbp]
    \centering
    \includegraphics[width=0.5\linewidth]{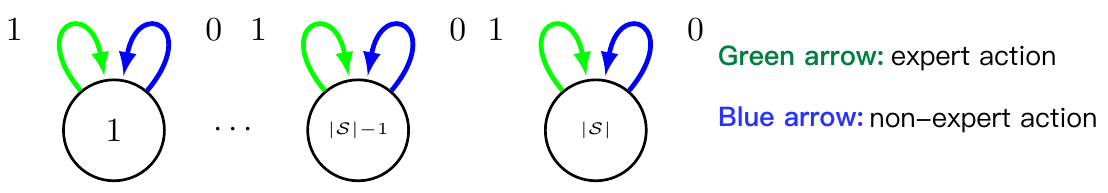}
    \caption{The Standard Imitation MDP in \citep{xu2021error}.}
    \label{fig:standard_imitation}
\end{figure}

We consider that the expert policy $\piE$ always takes the action $a^1$ while the behavioral policy $\pi^{\beta}$ always takes another action $a^2$. Formally, $\piE_h (a^1|s) = 1, \forall s \in \gS, h \in [H], \; \pi^{\beta}_h (a^2|s) = 1, \forall s \in \gS, h \in [H]$. It is direct to calculate that $V (\piE) = H$ and $V (\pi^{\beta}) = 0$. The supplementary dataset $\gDS$ and the expert dataset $\gDE$ are collected according to \cref{asmp:dataset_collection}. The mixture state-action distribution can be calculated as $\forall s \in \gS, h \in [H]$,
\begin{align*}
    & d^{\mix}_h (s, a^1) = \eta d^{\piE}_h (s, a^1) + (1-\eta) d^{\pi^{\beta}}_h (s, a^1) = \eta d^{\piE}_h (s, a^1) = \eta \rho (s),
    \\
    & d^{\mix}_h (s, a^2) = \eta d^{\piE}_h (s, a^2) + (1-\eta) d^{\pi^{\beta}}_h (s, a^2) = (1-\eta) d^{\pi^{\beta}}_h (s, a^2) = (1-\eta) \rho (s).  
\end{align*}
Note that in the population level, the marginal distribution of the union dataset $\gDU$ in time step $h$ is exactly $d^{\mix}_h$. The mixture policy induced by $d^{\mix}$ can be formulated as
\begin{align*}
\pimix_h (a^1|s) = \eta, \pimix_h (a^2|s) = 1-\eta, \forall s \in \gS, h \in [H]. 
\end{align*}
Just like before, we have $d^{\pimix}_h (s, a) = d^{\mix}_h (s, a)$. The policy value of $\pimix$ can be calculated as 
\begin{align*}
    V (\pimix) = \sum_{h=1}^H \sum_{(s, a) \in \gS \times \gA} d^{\mix}_h (s, a) r_h (s, a) = \sum_{h=1}^H \sum_{s \in \gS} d^{\mix}_h (s, a^1) = \eta H.  
\end{align*}
The policy $\pi^{\nbcu}$ can be formulated as
\begin{align}
 \forall h \in [H], \quad    \pi^{\nbcu}_h(a|s) = \left\{ \begin{array}{ll}
      \frac{n^{\oU}_h(s, a)}{\sum_{a^\prime} n^{\oU}_h(s, a^\prime)}  & \text{if}\sum_{a^\prime} n^{\oU}_h(s, a^\prime) > 0  \\
        \frac{1}{|\gA|} & \text{otherwise}  
    \end{array} \right.
\end{align}
We can view that the BC's policy learned on the union dataset mimics the mixture policy $\pimix$. In the following part, we analyze the lower bound on the imitation gap of $\pi^{\nbcu}$.
\begin{align*}
    \expect \ls V (\piE) - V (\pibcunion) \rs &= V (\piE) - V (\pimix) + \expect \ls V (\pimix) - V (\pibcunion)  \rs
    \\
    &= H - \eta H + \expect \ls V (\pimix) - V (\pibcunion)  \rs
    \\
    &= (1-\eta) (V (\piE) - V (\pi^\beta)) + \expect \ls V (\pimix) - V (\pibcunion)  \rs. 
\end{align*}
Then we consider the term $\expect \ls V (\pimix) - V (\pibcunion)  \rs$.
\begin{align*}
     V (\pimix) - V (\pibcunion) &= \sum_{h=1}^H \sum_{(s, a) \in \gS \times \gA} \lp d^{\pimix}_h (s, a) - d^{\pibcunion}_h (s, a) \rp r_h (s, a)
    \\
    &= \sum_{h=1}^H \sum_{(s, a) \in \gS \times \gA} \rho (s) \lp \pimix_h (a|s) - \pibcunion_h (a|s) \rp r_h (s, a)
    \\
    &= \sum_{h=1}^H \sum_{(s, a) \in \gS \times \gA} \rho (s) \lp \pimix_h (a|s) - \pibcunion_h (a|s) \rp r_h (s, a) \indict \{ n^{\oU}_h (s) > 0  \}
    \\
    &\quad + \sum_{h=1}^H \sum_{(s, a) \in \gS \times \gA} \rho (s) \lp \pimix_h (a|s) - \pibcunion_h (a|s) \rp r_h (s, a) \indict \{ n^{\oU}_h (s) = 0  \}. 
\end{align*}
We take expectation over the randomness in $\gDU$ on both sides and obtain that
\begin{align}
\expect \ls V (\pimix) - V (\pibcunion) \rs  &= \expect \ls \sum_{h=1}^H \sum_{(s, a) \in \gS \times \gA} \rho (s) \lp \pimix_h (a|s) - \pibcunion_h (a|s) \rp r_h (s, a) \indict \{ n^{\oU}_h (s) > 0  \} \rs \nonumber
    \\
    &\quad + \expect \ls \sum_{h=1}^H \sum_{(s, a) \in \gS \times \gA} \rho (s) \lp \pimix_h (a|s) - \pibcunion_h (a|s) \rp r_h (s, a) \indict \{ n^{\oU}_h (s) = 0  \} \rs. \label{eq:lower_bound_decomposition} 
\end{align}
For the first term in RHS, we have that
\begin{align*}
    &\quad \expect \ls \sum_{h=1}^H \sum_{(s, a) \in \gS \times \gA} \rho (s) \lp \pimix_h (a|s) - \pibcunion_h (a|s) \rp r_h (s, a) \indict \{ n^{\oU}_h (s) > 0  \} \rs
    \\
    &= \sum_{h=1}^H \sum_{(s, a) \in \gS \times \gA} \rho (s) r_h (s, a) \expect \ls  \lp \pimix_h (a|s) - \pibcunion_h (a|s) \rp \indict \{ n^{\oU}_h (s) > 0  \} \rs
    \\
    &= \sum_{h=1}^H \sum_{(s, a) \in \gS \times \gA} \rho (s) r_h (s, a) \sP \lp n^{\oU}_h (s) > 0 \rp \expect \ls  \lp \pimix_h (a|s) - \pibcunion_h (a|s) \rp \bigg|  n^{\oU}_h (s) > 0  \rs
    \\
    &= 0.
\end{align*}
The last equation follows the fact that when $n^{\oU}_h (s) > 0$, $\pibcunion_h (a|s)$ is an unbiased maximum likelihood estimation of $\pimix_h (a|s)$. For the remaining term, we have that
\begin{align*}
    &\quad \expect \ls \sum_{h=1}^H \sum_{(s, a) \in \gS \times \gA} \rho (s) \lp \pimix_h (a|s) - \pibcunion_h (a|s) \rp r_h (s, a) \indict \{ n^{\oU}_h (s) = 0  \} \rs
    \\
    &= \sum_{h=1}^H \sum_{(s, a) \in \gS \times \gA} \rho (s) r_h (s, a) \expect \ls   \lp \pimix_h (a|s) - \pibcunion_h (a|s) \rp  \indict \{ n^{\oU}_h (s) = 0  \} \rs
    \\
    &= \sum_{h=1}^H \sum_{(s, a) \in \gS \times \gA} \rho (s) r_h (s, a) \sP \lp n^{\oU}_h (s) = 0 \rp \expect \ls   \lp \pimix_h (a|s) - \pibcunion_h (a|s) \rp \bigg| n^{\oU}_h (s) = 0 \rs
    \\
    &= \sum_{h=1}^H \sum_{(s, a) \in \gS \times \gA} \rho (s) r_h (s, a) \sP \lp n^{\oU}_h (s) = 0 \rp \lp \pi^{\mix}_h (a|s) - \frac{1}{\vert \gA \vert} \rp
    \\
    &\overset{(a)}{=} \sum_{h=1}^H \sum_{s \in \gS} \rho (s) \sP \lp n^{\oU}_h (s) = 0 \rp \lp \eta - \frac{1}{\vert \gA \vert} \rp
    \\
    & \overset{(b)}{=} H \lp \eta - \frac{1}{\vert \gA \vert} \rp \sum_{s \in \gS} \rho (s) \sP \lp n^{\oU}_1 (s) = 0 \rp. 
\end{align*}
In the equation $(a)$, we use the fact that $r_h (s, a^1) = 1, r_h (s, a) = 0, \forall a \in \gA \setminus \{a^1 \}$. In the equation $(b)$, since each state is an absorbing state, we have that $\sP \lp n^{\oU}_h (s) = 0 \rp = \sP \lp n^{\oU}_1 (s) = 0 \rp, \forall h \in [H]$.

We consider two cases. In the first case of $\eta \geq 1/\vert \gA \vert$, we directly have that
\begin{align*}
    \expect \ls \sum_{h=1}^H \sum_{(s, a) \in \gS \times \gA} \rho (s) \lp \pimix_h (a|s) - \pibcunion_h (a|s) \rp r_h (s, a) \indict \{ n^{\oU}_h (s) = 0  \} \rs \geq 0.
\end{align*}
By \cref{eq:lower_bound_decomposition}, we have that
\begin{align*}
    \expect \ls V (\pimix) - V (\pibcunion) \rs \geq 0,
\end{align*}
which implies that
\begin{align*}
    \expect \ls V (\piE) - V (\pibcunion) \rs \geq  (1-\eta) (V (\piE) - V (\pi^\beta)).
\end{align*}
In the second case of $\eta < 1/\vert \gA \vert$, we have that
\begin{align*}
     H \lp \eta - \frac{1}{\vert \gA \vert} \rp \sum_{s \in \gS} \rho (s) \sP \lp n^{\oU}_1 (s) = 0 \rp
    &\overset{(a)}{\geq} - \lp \frac{1}{\vert \gA \vert} - \eta \rp H \exp \lp - \frac{N_{\tot}}{|\gS|} \rp
    \\
    &\geq - (1-\eta) H \exp \lp - \frac{N_{\tot}}{|\gS|} \rp
    \\
    &\overset{(b)}{\geq} -\frac{ (1-\eta) H}{2}.
\end{align*}
In the inequality $(a)$, we use that 
\begin{align*}
    \sum_{s \in \gS} \rho (s) \sP \lp n^{\oU}_1 (s) = 0 \rp = \sum_{s \in \gS} \rho (s) (1-\rho (s))^{N_{\tot}} = \lp 1-\frac{1}{\vert \gS \vert} \rp^{N_{\tot}} \leq \exp \lp - \frac{N_{\tot}}{\vert \gS \vert} \rp.
\end{align*}
The inequality $(b)$ holds since we consider the range where $N_{\tot} \geq \vert \gS \vert \log (2) $. By \cref{eq:lower_bound_decomposition}, we have that
\begin{align*}
    \expect \ls V (\pimix) - V (\pibcunion) \rs \geq - \frac{ (1-\eta) H}{2}.
\end{align*}
This implies that
\begin{align*}
    \expect \ls V (\piE) - V (\pibcunion) \rs &\geq  (1-\eta) (V (\piE) - V (\pi^\beta)) - \frac{(1-\eta) H}{2}
    \\
    &=  \frac{(1-\eta)}{2} (V (\piE) - V (\pi^\beta)). 
\end{align*}
In both cases, we prove that $\expect \ls V (\piE) - V (\pibcunion) \rs \gtrsim (1-\eta) (V (\piE) - V (\pi^\beta)) $ and thus complete the proof.

\subsection{Proof of Theorem \ref{thm:bc_imitation_gap_expert}}
When $\pi^\beta = \piE$, $\expect \ls V (\piE) - V (\pibcunion) \rs$ is exactly the imitation gap of BC on $\gDU$ with expert policy $\piE$.  By \citep[Theorem 4.2]{rajaraman2020fundamental}, we finish the proof.

\subsection{Proof of Theorem \ref{thm:bc_imitation_gap_no_overlap}}

According to the policy difference lemma in \citep{kakade2002pg}, we have that
\begin{align*}
    V (\piE) - V (\pibcunion) &= \sum_{h=1}^H \expect_{s_h \sim d^{\piE}_h (\cdot)} \ls Q^{\pibcunion}_h (s_h, \piE_h (s_h)) - \sum_{a \in \gA} \pibcunion_h (\cdot|s_h) Q^{\pibcunion}_h (s_h, a)  \rs
    \\
    &= \sum_{h=1}^H \expect_{s_h \sim d^{\piE}_h (\cdot)} \ls Q^{\pibcunion}_h (s_h, \piE_h (s_h)) (1-\pibcunion_h (\piE_h (s_h)|s_h))  \rs  
    \\
    &\leq H \sum_{h=1}^H \expect_{s \sim d^{\piE}_h (\cdot)} \ls \expect_{a \sim \pibcunion_h (\cdot|s)} \ls \indict \lb a \not= \piE_h (s) \rb \rs \rs,
\end{align*}
where $Q^{\pi}_h(s, a) = \expect[\sum_{t=h}^{H} r(s_t, a_t) |(s_h, a_h) = (s, a)]$ is the state-action value function for a policy $\pi$ and $\piE_h(s_h)$ denotes the expert action. By assumption, we have $\forall h \in [H]$, $\supp (d^{\piE}_h (\cdot)) \cap \supp (d^{\pi^{\beta}}_h (\cdot)) = \emptyset$. Therefore, we know that the union state-action pairs in $\gDU$ does not affect $\pibcunion$ on expert states. As a result, $\pibcunion$ exactly takes expert actions on states visited in $\gDE$. Then we have that
\begin{align*}
    V (\piE) - V (\pibcunion) \leq H \sum_{h=1}^H \expect_{s \sim d^{\piE}_h (\cdot)} \ls \indict \lb s \notin \gS_h (\gDE) \rb \rs,
\end{align*}
where $\gS_h (\gDE)$ is the set of states in time step $h$ in $\gDE$. Taking expectation over the randomness within $\gDE$ on both sides yields that
\begin{align*}
    V (\piE) - \expect_{\gDE} \ls V (\pibcunion) \rs \leq H \expect_{\gDE} \ls \sum_{h=1}^H \expect_{s \sim d^{\piE}_h (\cdot)} \ls \indict \lb s \notin \gS_h (\gDE) \rb \rs \rs.
\end{align*}
By \citep[Lemma A.1]{rajaraman2020fundamental}, we further derive that
\begin{align*}
    V (\piE) - \expect_{\gDE} \ls V (\pibcunion) \rs \leq  \frac{|\gS| H^2}{ \max \{ |\gDE|, 1\}} \leq  \frac{2 |\gS| H^2}{ |\gDE|+ 1}. 
\end{align*}
We take expectation over the binomial variable $|\gDE|$ and have that
\begin{align*}
    V (\piE) - \expect \ls V (\pibcunion) \rs \leq \expect \ls \frac{2 |\gS| H^2}{ |\gDE|+ 1} \rs \leq \frac{2 |\gS| H^2}{N_{\tot} \eta} = \frac{2 |\gS| H^2}{N_{\expert}} , 
\end{align*}
which completes the proof.

\section{Proof of Results in Section \ref{sec:analysis_of_weighted_behavioral_cloning}}
\label{sec:proof_of_weighted_behavioral_cloning}

\subsection{Proof of Proposition \ref{prop:wbcu_negative}}

In the tabular case, with the first-order optimality condition, we have $c^\star_h (s, a) = \widehat{d^{\expert}_h} (s, a) /( \widehat{d^{\expert}_h} (s, a) + \widehat{d^{\oU}_h} (s, a)) $. By \cref{eq:from_c_to_w}, we have 
\begin{align*}
   \widehat{d^{\oU}_h}(s, a) w_h(s, a) =  \widehat{d^{\oU}_h}(s, a)  \times \frac{\widehat{d^{\expert}_h} (s, a)}{\widehat{d^{\oU}_h} (s, a)} = \widehat{d^{\expert}_h}(s, a). 
\end{align*}
Hence, the learning objective \eqref{eq:weighted_bc} reduces to \eqref{eq:bc_opt}.

\subsection{Proof of Lemma \ref{lem:lipschitz_continuity}}
Recall that
\begin{align*}
    \Delta_h(\theta)= \min_{(s, a) \in \gDE_h \cup \gD^{\oS, 1}_h}   \langle  \theta,  \phi_h (s, a) \rangle - \max_{(s^\prime, a^\prime) \in \gD^{\oS, 2}_h}  \langle  \theta,  \phi_h (s^\prime, a^\prime) \rangle.
\end{align*}
Then we have that
\begin{align*}
 \Delta_h(\bar{\theta}_h)  - \Delta_h(\theta) &= \min_{(s, a) \in \gDE_h \cup \gD^{\oS, 1}_h}   \langle  \bar{\theta}_h,  \phi_h (s, a) \rangle - \max_{(s^\prime, a^\prime) \in \gD^{\oS, 2}_h}  \langle  \bar{\theta}_h,  \phi_h (s^\prime, a^\prime) \rangle
 \\
 &\quad - \min_{(s, a) \in \gDE_h \cup \gD^{\oS, 1}_h}   \langle  \theta,  \phi_h (s, a) \rangle + \max_{(s^\prime, a^\prime) \in \gD^{\oS, 2}_h}  \langle  \theta,  \phi_h (s^\prime, a^\prime) \rangle
 \\
 &\overset{(a)}{\leq} \langle  \bar{\theta}_h,  \phi_h (s^1, a^1) \rangle - \langle  \bar{\theta}_h,  \phi_h (s^2, a^2) \rangle -   \langle  \theta,  \phi_h (s^1, a^1) \rangle +  \langle  \theta,  \phi_h (s^2, a^2) \rangle
 \\
 &= \langle \bar{\theta}_h - \theta, \phi_h (s^1, a^1) - \phi_h (s^2, a^2)   \rangle
 \\
 &\overset{(b)}{\leq} \lnorm \bar{\theta}_h - \theta \rnorm \lnorm \phi_h (s^1, a^1) - \phi_h (s^2, a^2) \rnorm.
\end{align*}
In inequality $(a)$, we utilize the facts that $(s^1, a^1) \in \argmin_{ (s, a) \in \gDE_h \cup \gD^{\oS, 1}_h} \langle \theta_h,  \phi_h (s, a) \rangle$ and \\ $(s^2, a^2) \in \argmax_{(s, a) \in  \gD^{\oS, 2}_h } \langle \theta_h,  \phi_h (s, a) \rangle$. Inequality $(b)$ follows the Cauchy–Schwarz inequality. Let $L_h = \lnorm \phi_h (s^1, a^1) - \phi_h (s^2, a^2) \rnorm$ and we finish the proof.

\subsection{Proof of Lemma \ref{lem:strong_convexity}}
First, by Taylor's Theorem, there exists $\theta_h^\prime \in \{ \theta \in \reals^d: \theta^t = \theta^\star_h + t (\widebar{\theta}_h - \theta^\star_h), \; \forall t \in [0, 1] \}$ such that
\begin{align}
    \gL_h (\widebar{\theta}_h) &= \gL_h (\theta^\star_h) + \langle \nabla \gL_h (\theta^\star_h), \widebar{\theta}_h - \theta^\star_h   \rangle + \frac{1}{2} \lp \widebar{\theta}_h - \theta^\star_h \rp^\top \nabla^2 \gL_h (\theta_h^\prime) \lp \widebar{\theta}_h - \theta^\star_h \rp \nonumber
    \\
    &= \gL_h (\theta^\star_h) + \frac{1}{2} \lp \widebar{\theta}_h - \theta^\star_h \rp^\top \nabla^2 \gL_h (\theta_h^\prime) \lp \widebar{\theta}_h - \theta^\star_h \rp. \label{eq:taylor_approximation}
\end{align}
The last equality follows the optimality condition that $\nabla \gL_h (\theta^\star_h) = 0$. Then, our strategy is to prove that the smallest eigenvalue of the Hessian matrix $\nabla^2 \gL_h (\theta_h^\prime)$ is positive, i.e., $\lambda_{\min} (\nabla^2 \gL_h (\theta_h^\prime)) > 0$. We first calculate the Hessian matrix $\nabla^2 \gL_h (\theta_h^\prime)$. Given $\gDE$ and $\gDU$, we define the function $G: \reals^{(|\gDE| + |\gDU|)} \rightarrow \reals$ as
\begin{align*}
    G (v) \triangleq \frac{1}{|\gDE|} \sum_{i=1}^{|\gDE|} g (v_i) + \frac{1}{|\gDU|} \sum_{j=1}^{|\gDU|} g (v_j),  
\end{align*}
where $v_i$ is the $i$-th element in the vector $v \in \reals^{(|\gDE| + |\gDU|)}$ and $g (x) = \log \lp 1 + \exp (x) \rp $ is a real-valued function. Besides, we use $B_h \in \reals^{ (|\gDE| + |\gDU|) \times d}$ to denote the matrix whose $i$-th row $B_{h, i} = - y_i \phi_h (s^i, a^i)^\top$, and $y_i = 1$ if $ (s^i, a^i) \in \gDE_h$, $y_i = -1$ if $ (s^i, a^i) \notin \gDE_h$. Then the objective function can be reformulated as
\begin{align*}
\gL_h (\theta_h) &= \sum_{(s, a)} \widehat{d^{\expert}_h}(s, a) \ls \log \lp 1 + \exp \lp - \langle \phi_h (s,a), \theta_h \rangle \rp \rp \rs + \sum_{(s, a)} \widehat{d^{\oU}_h} (s, a) \ls \log \lp 1 + \exp \lp \langle \phi_h (s,a), \theta_h \rangle \rp \rp \rs
\\
&= \frac{1}{|\gDE|} \sum_{(s, a) \in \gDE} \log \lp 1 + \exp \lp - \langle \phi_h (s,a), \theta_h \rangle \rp \rp + \frac{1}{|\gDU|} \sum_{(s, a) \in \gDU} \log \lp 1 + \exp \lp \langle \phi_h (s,a), \theta_h \rangle \rp \rp  
\\
&= G (B_h \theta_h).
\end{align*}
Then we have that $\nabla^2 \gL_h (\theta_h) = B_h^\top \nabla^2 G (B_h \theta_h) B_h$, where
\begin{align*}
     \nabla^2 G (B_h \theta_h) = \diag \lp \frac{g^{\prime \prime} ((B_h \theta_h)_1)}{|\gDE|}, \ldots, \frac{g^{\prime \prime} ((B_h \theta_h)_{|\gDE|})}{|\gDE|}, \frac{g^{\prime \prime} ((B_h \theta_h)_{|\gDE| + 1})}{|\gDE |+ |\gDU|}, \ldots, \frac{g^{\prime \prime} ((B_h \theta_h)_{|\gDE |+ |\gDU|})}{|\gDE |+ |\gDU|}    \rp.
\end{align*}
Here $g^{\prime \prime} (x) = \sigma (x) (1-\sigma (x))$, where $\sigma (x) = 1/(1+\exp (-x))$ is the sigmoid function. The eigenvalues of $\nabla^2 G (B_h \theta_h)$ are
\begin{align*}
    \lb \frac{g^{\prime \prime} ((B_h \theta_h)_1)}{|\gDE|}, \ldots, \frac{g^{\prime \prime} ((B_h \theta_h)_{|\gDE|})}{|\gDE|}, \frac{g^{\prime \prime} ((B_h \theta_h)_{|\gDE| + 1})}{|\gDE |+ |\gDU|}, \ldots, \frac{g^{\prime \prime} ((B_h \theta_h)_{|\gDE |+ |\gDU|})}{|\gDE |+ |\gDU|} \rb.
\end{align*}
Notice that $\theta_h^\prime \in \{ \theta \in \reals^d: \theta^t = \theta^\star_h + t (\widebar{\theta}_h - \theta^\star_h), \; \forall t \in [0, 1] \}$. For a matrix $A$, we use $\lambda_{\min} (A)$ to denote the minimal eigenvalue of $A$. Here we claim that the minimum of the minimal eigenvalues of $\nabla^2 G (B_h \theta^t)$ over $t \in [0, 1]$ is achieved at $t=0$ or $t=1$. That is,
\begin{align*}
    \min \{ \lambda_{\min} (\nabla^2 G (B_h \theta^t)) : \forall t \in [0, 1] \} = \min \{\lambda_{\min} ( \nabla^2 G (B_h \theta^0)), \lambda_{\min} ( \nabla^2 G (B_h \theta^1))  \}.
\end{align*}
We prove this claim as follows. For any $t \in [0, 1]$, we use $\{ \lambda_1 (t), \ldots,  \lambda_{|\gDE| + |\gDU|} (t)\}$ to denote the eigenvalues of $\nabla^2 G (B_h \theta^t)$. For each $i \in [|\gDE| + |\gDU|]$, we consider $\lambda_i (t) :[0, 1] \rightarrow \reals$ as a function of $t$. Specifically,
\begin{align*}
    \lambda_{i} (t) = \begin{cases}
    \frac{g^{\prime \prime} ((B_h \theta^\star_h)_{i} + t (B_h (\widebar{\theta}_h - \theta^\star_h))_{i})}{|\gDE|}, &\quad \text{if } i \in [|\gDE|]
    \\
    \frac{g^{\prime \prime} ((B_h \theta^\star_h)_{i} + t (B_h (\widebar{\theta}_h - \theta^\star_h))_{i})}{|\gDE| + |\gDU|}, &\quad \text{otherwise}.
    \end{cases}
\end{align*}
We observe that $g^{\prime \prime \prime} (x) = \sigma (x) (1-\sigma (x)) (1-2\sigma (x))$ which satisfies that $ \forall x \leq 0, \; g^{\prime \prime \prime} (x) \geq 0, $ and $\forall x \geq 0, \; g^{\prime \prime \prime} (x) \leq 0 $. Therefore, we have that the minimum of $\lambda_i (t)$ over $t \in [0, 1]$ must be achieved at $t=0$ or $t=1$. That is,
\begin{align}
\label{eq:min_eigenvalues_boundary}
    \min_{t \in [0, 1]} \lambda_i (t) = \min \{ \lambda_i (0), \lambda_i (1) \}.
\end{align}

For any $t \in [0, 1]$, we define $i^t \in [|\gDE| + |\gDU|]$ as the index of the minimal eigenvalue of $\nabla^2 G (B_h \theta^t)$, i.e., $\lambda_{i^t} (t) = \lambda_{\min} (\nabla^2 G (B_h \theta^t))$. Then we have that
\begin{align*}
    \min \{ \lambda_{\min} (\nabla^2 G (B_h \theta^t)) : \forall t \in [0, 1] \} &= \min \{ \lambda_{i^t} (t): \forall t \in [0, 1] \}
    \\
    &\overset{(a)}{=} \min \{ \min \{ \lambda_{i^t} (0), \lambda_{i^t} (1 ) \}: \forall t \in [0, 1] \}
    \\
    &= \min \{ \lambda_{i^0} (0), \lambda_{i^1} (1) \}
    \\
    &\overset{(b)}{=} \min \{ \lambda_{\min} (\nabla^2 G (B_h \theta^0)), \lambda_{\min} (\nabla^2 G (B_h \theta^1)) \} 
\end{align*}
Equality $(a)$ follows \eqref{eq:min_eigenvalues_boundary} and equality $(b)$ follows that $\lambda_{i^0} (0)$ and $\lambda_{i^1} (1)$ are the minimal eigenvalues of $\nabla^2 G (B_h \theta^0)$ and $\nabla^2 G (B_h \theta^1)$, respectively.

In summary, we derive that
\begin{align}
\label{eq:minimal_eigenvalue}
    \min \{ \lambda_{\min} (\nabla^2 G (B_h \theta^t)) : \forall t \in [0, 1] \} = \min \{ \lambda_{\min} (\nabla^2 G (B_h \theta^0)), \lambda_{\min} (\nabla^2 G (B_h \theta^1)) \},
\end{align}
which proves the previous claim.

Further, we consider $\lambda_{\min} \lp \nabla^2 \gL_h (\theta_h)  \rp$.
\begin{align*}
    \lambda_{\min} \lp \nabla^2 \gL_h (\theta_h)  \rp &= \inf_{x \in \reals^d: \lnorm x \rnorm = 1} x^\top  \nabla^2 \gL_h (\theta_h) x
    \\
    &= \inf_{x \in \reals^d: \lnorm x \rnorm = 1} \lp B_h x \rp^\top  \nabla^2 G (B_h \theta_h) \lp B_h x \rp
    \\
    &= \inf_{z \in \operatorname{Im} (B_h)} z^\top \nabla^2 G (B_h \theta_h) z
    \\
    &= \lp \inf_{z \in \operatorname{Im} (B_h)} \lnorm z \rnorm \rp^2 \lambda_{\min} (\nabla^2 G (B_h \theta_h))
    \\
    &\geq \lp \inf_{z \in \operatorname{Im} (B_h)} \lnorm z \rnorm \rp^2 \min \{ \lambda_{\min} (\nabla^2 G (B_h \theta^0)), \lambda_{\min} (\nabla^2 G (B_h \theta^1)) \}. 
\end{align*}
Here $\operatorname{Im} (B_h) = \{ z \in \reals^d: z = B_h x, \lnorm x \rnorm = 1 \}$. The last inequality follows \cref{eq:minimal_eigenvalue}.

In the under-parameterization case where $\rank (A_h) = d$, we have that $\rank (B_h) = d$. Thus, $\operatorname{Im} (B_h)$ is a set of vectors with positive norms, i.e., $ \inf_{z \in \operatorname{Im} (B_h)} \lnorm z \rnorm > 0$. Besides, since $g^{\prime \prime} (x) = \sigma (x) (1-\sigma (x)) > 0$, we also have that 
\begin{align*}
    \min \{ \lambda_{\min} (\nabla^2 G (B_h \theta^0)), \lambda_{\min} (\nabla^2 G (B_h \theta^1)) \} > 0.
\end{align*}
In summary, we obtain that
\begin{align*}
    \lambda_{\min} \lp \nabla^2 \gL_h (\theta_h)  \rp \geq \lp \inf_{z \in \operatorname{Im} (B_h)} \lnorm z \rnorm \rp^2 \min \{ \lambda_{\min} (\nabla^2 G (B_h \theta^0)), \lambda_{\min} (\nabla^2 G (B_h \theta^1)) \} > 0. 
\end{align*}
Then, with \cref{eq:taylor_approximation}, there exists $\tau_h = \lp \inf_{z \in \operatorname{Im} (B_h)} \lnorm z \rnorm \rp^2 \min \{ \lambda_{\min} (\nabla^2 G (B_h \theta^0)), \lambda_{\min} (\nabla^2 G (B_h \theta^1)) \} > 0$ such that
\begin{align*}
    \gL_h (\widebar{\theta}_h) \geq \gL_h (\theta^\star_h) + \frac{\tau_h}{2} \lnorm \widebar{\theta}_h - \theta^\star_h \rnorm^2,
\end{align*}
which completes the proof.

\subsection{Proof of Theorem \ref{thm:wbcu}}
First, invoking \cref{lem:lipschitz_continuity} with $\theta = \theta^\star_h$ yields that
\begin{align*}
    \Delta_h(\theta^\star_h)  \geq \Delta_h(\bar{\theta}_h)  - L_h \lnorm \bar{\theta}_h - \theta^\star_h \rnorm.
\end{align*}
Here $L_h = \lnorm \phi_h (s, a) - \phi_h (s^\prime, a^\prime) \rnorm$ with $(s, a) \in \argmin_{ (s, a) \in \gDE_h \cup \gD^{\oS, 1}_h} \langle \theta^\star_h,  \phi_h (s, a) \rangle$ and \\ $(s^\prime, a^\prime) \in \argmax_{(s, a) \in  \gD^{\oS, 2}_h } \langle \theta^\star_h, \phi_h (s, a) \rangle$. Then, by \cref{lem:strong_convexity}, there exists $\tau_h > 0$ such that 
\begin{align*}
    \gL_h (\widebar{\theta}_h) \geq \gL_h (\theta^\star_h) + \frac{\tau_h}{2} \lnorm \widebar{\theta}_h - \theta^\star_h \rnorm^2.
\end{align*}
This directly implies an upper bound of the distance between $\widebar{\theta}_h$ and $\theta^\star_h$. 
\begin{align*}
    \lnorm \widebar{\theta}_h - \theta^\star_h \rnorm \leq \sqrt{ \frac{2 \lp \gL_h(\bar{\theta}_h) - \gL_h(\theta^{\star}_h) \rp}{\tau_h}}.
\end{align*}
When inequality $\eqref{eq:condition}$ holds, we further have that $\lnorm \widebar{\theta}_h - \theta^\star_h \rnorm < \Delta_h (\bar{\theta}_h) / L_h$. Then we get that
\begin{align*}
    \Delta_h(\theta^\star_h)  \geq \Delta_h(\bar{\theta}_h)  - L_h \lnorm \bar{\theta}_h - \theta^\star_h \rnorm >0,
\end{align*}
which completes the proof.

\subsection{An Example Corresponding to Theorem \ref{thm:wbcu}}
\begin{example}
\label{example:condition}
To illustrate \cref{thm:wbcu}, we consider an example in the feature space $\reals^2$. In particular, for time step $h \in [H]$, we have the expert dataset and supplementary dataset as follows.
\begin{align*}
    &\gDE_h = \lb \lp s^{(1)}, a^{(1)} \rp,  \lp s^{(4)}, a^{(4)} \rp \rb, \; \gDS_h = \lb \lp s^{(2)}, a^{(2)} \rp, \lp s^{(3)}, a^{(3)} \rp  \rb, \; \\
    &\gD^{\oS, 1}_h = \lb \lp s^{(2)}, a^{(2)} \rp \rb, \; \gD^{\oS, 2}_h = \lb \lp s^{(3)}, a^{(3)} \rp \rb.
\end{align*}
The corresponding features are
\begin{align*}
    &\phi_h \lp s^{(1)}, a^{(1)} \rp = (0, 1)^\top ,\;  \phi_h \lp s^{(2)}, a^{(2)} \rp = \lp - \frac{1}{2}, 0 \rp^\top, \\
    &\phi_h \lp s^{(3)}, a^{(3)} \rp = \lp 0, -\frac{1}{2} \rp^\top, \; \phi_h \lp s^{(4)}, a^{(4)} \rp = (-1, 0)^\top.
\end{align*}
Notice that the set of expert-style samples is $\gDE_h \cup \gD^{\oS, 1}_h = \{ ( s^{(1)}, a^{(1)} ), ( s^{(2)}, a^{(2)} ),  ( s^{(4)}, a^{(4)} ) \}$ and the set of non-expert-style samples is $\gD^{\oS, 2}_h = \{  ( s^{(3)}, a^{(3)} ) \}$. It is direct to calculate that the ground-truth parameter that achieves the maximum margin among unit vectors is $\widebar{\theta}_h = (-\sqrt{2}/2, \sqrt{2}/2)^\top$ and the maximum margin is $\Delta_h (\widebar{\theta}_h) = \sqrt{2}/2$. According to \cref{eq:parameterized_discriminator_opt}, for $\theta_h = (\theta_{h, 1}, \theta_{h, 2})^\top$, the optimization objective is 
\begin{align*}
    \gL_h (\theta_h) &= \sum_{(s, a)} \widehat{d^{\expert}_h}(s, a) \ls \log \lp 1 + \exp \lp - \langle \phi_h (s,a), \theta_h \rangle \rp \rp \rs + \sum_{(s, a)} \widehat{d^{\oU}_h} (s, a) \ls \log \lp 1 + \exp \lp \langle \phi_h (s,a), \theta_h \rangle \rp \rp \rs
    \\
    &=  \frac{1}{2} \lp \log \lp 1 + \exp \lp - \theta_{h, 2} \rp \rp + \log \lp 1 + \exp \lp \theta_{h, 1} \rp \rp  \rp 
    \\
    & \quad + \frac{1}{4} \lp \log \lp 1 + \exp \lp  \theta_{h, 2} \rp \rp + \log \lp 1 + \exp \lp - \frac{1}{2}  \theta_{h, 1} \rp \rp \rp \\
    &\quad + \frac{1}{4} \lp \log \lp 1 + \exp \lp -\frac{1}{2}  \theta_{h, 2} \rp \rp + \log \lp 1 + \exp \lp  - \theta_{h, 1} \rp \rp  \rp. 
\end{align*}
We apply CVXPY \citep{diamond2016cvxpy} to calculate the optimal solution $\theta^\star_h \approx (-0.310, 0.993)^\top$ and the objective values $\gL_h (\theta^\star_h) \approx 1.287, \; \gL_h (\widebar{\theta}_h) \approx 1.309 $. Furthermore, we calculate the Lipschitz coefficient $L_h$ appears in \cref{lem:lipschitz_continuity}.
\begin{align*}
    & (s^{(2)}, a^{(2)})  = \argmin_{ (s, a) \in \gDE_h \cup \gD^{\oS, 1}_h} \langle \theta^\star_h,  \phi_h (s, a) \rangle , \; (s^{(3)}, a^{(3)}) \in \argmax_{(s, a) \in  \gD^{\oS, 2}_h } \langle \theta^\star_h,  \phi_h (s, a) \rangle,
    \\
    & L_h = \lnorm \phi_h (s^{(2)}, a^{(2)}) - \phi_h (s^{(3)}, a^{(3)}) \rnorm = \frac{\sqrt{2}}{2}.  
\end{align*}
Then we calculate the parameter of strong convexity $\tau_h$ appears in \cref{lem:strong_convexity}. Based on the proof of \cref{lem:strong_convexity}, our strategy is to calculate the minimal eigenvalue of the Hessian matrix.

First, for $\theta_h = (\theta_{h, 1}, \theta_{h, 2})^\top$, the gradient of $\gL_h (\theta_h)$ is
\begin{align*}
    \nabla \gL_h (\theta_h) &= -\sum_{(s, a) \in \gS \times \gA} \widehat{d^{\expert}_h}(s, a) \sigma (- \langle \phi_h (s,a), \theta_h \rangle) + \sum_{(s, a) \in \gS \times \gA} \widehat{d^{\oU}_h} (s, a)  \sigma \lp  \langle \phi_h (s,a), \theta_h \rangle \rp
    \\
    &= \lp \frac{1}{2} \sigma (\theta_{h, 1}) - \frac{1}{4} \sigma (-\theta_{h, 1}) - \frac{1}{8} \sigma (- \frac{1}{2} \theta_{h, 1} ), \frac{1}{4} \sigma \lp \theta_{h, 2} \rp - \frac{1}{2} \sigma \lp - \theta_{h, 2} \rp - \frac{1}{8} \sigma (- \frac{1}{2} \theta_{h, 2}) \rp^\top.
\end{align*}
Here $\sigma (x) = 1/(1+\exp (-x))$ for $x \in \reals$ is the sigmoid function. Then the Hessian matrix at $\theta_h$ is
\begin{align*}
    \nabla^2 \gL_h (\theta_h) = \begin{pmatrix}
\frac{3}{4} f (\theta_{h, 1}) + \frac{1}{16} f \lp \frac{1}{2} \theta_{h, 1} \rp & 0 \\
0 & \frac{3}{4} f (\theta_{h, 2}) + \frac{1}{16} f \lp \frac{1}{2} \theta_{h, 2} \rp
\end{pmatrix},
\end{align*}
where $f (x) = \sigma (x) (1-\sigma (x))$ and $f(x) = f(-x)$.
For any $t \in [0, 1]$, the eigenvalues of the Hessian matrix at $\theta^t_h = \widebar{\theta}_h + t (\theta^\star_h - \widebar{\theta}_h)$ are
\begin{align*}
\frac{3}{4} f (\theta^t_{h, 1} ) + \frac{1}{16} f \lp \frac{1}{2} \theta^t_{h, 1} \rp, \; \frac{3}{4} f (\theta^t_{h, 2} ) + \frac{1}{16} f \lp \frac{1}{2} \theta^t_{h, 2} \rp. 
\end{align*}
Now, we calculate the minimal eigenvalues of $\nabla^2 \gL_h (\theta^t_h)$. We consider the function 
\begin{align*}
    g(x) = \frac{3}{4} f (x) + \frac{1}{16} f \lp \frac{1}{2} x \rp, \; \forall x \in [a, b]. 
\end{align*}
The gradient is 
\begin{align*}
    g^\prime (x) = \frac{3}{4} \sigma (x) (1-\sigma (x)) (1-2\sigma (x)) + \frac{1}{32} \sigma \lp \frac{1}{2}x \rp \lp 1-\sigma \lp \frac{1}{2}x \rp \rp \lp 1-2\sigma \lp \frac{1}{2}x \rp \rp.  
\end{align*}
We observe that $ \forall x \leq 0, \; g^\prime (x) \geq 0, $ and $\forall x \geq 0, \; g^\prime (x) \leq 0 $. Thus, we have that the minimum of $g (x)$ must be achieved at $x = a$ or $x = b$. Besides, we have that $g (x) = g (-x)$. With the above arguments, we know that the minimal eigenvalue is $g (0.993) \approx 0.163$ and $\tau_h \approx 0.163$.
Then we can calculate that
\begin{align*}
   \sqrt{ \frac{2 \lp \gL_h(\bar{\theta}_h) - \gL_h(\theta^{\star}_h) \rp}{\tau_h}} \approx 0.520,\; \frac{\Delta_h (\bar{\theta}_h)}{L_h} = 1.
\end{align*}
The inequality \eqref{eq:condition} holds.
\end{example}

\subsection{Additional Analysis of WBCU With Function Approximation}
Here we provide an additional theoretical result for WBCU with function approximation when $d = 1$.
\begin{thm}
\label{thm:wbcu_1d}
Consider $d=1$. For any $h \in [H]$, if the following inequality holds
\begin{align}
\label{eq:condition_1d}
    \frac{1}{|\gDS|} \sum_{(s, a) \in \gDS_h} \langle \phi_h (s, a), \bar{\theta}_h \rangle < \frac{1}{|\gDE|} \sum_{(s^\prime, a^\prime) \in \gDE_h} \langle \phi_h (s^\prime, a^\prime), \bar{\theta}_h \rangle.
\end{align}
then we have that $\Delta_h (\theta^\star_h) > 0$. Furthermore, the inequality \eqref{eq:condition_1d} is also a necessary condition.
\end{thm}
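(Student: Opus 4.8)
The plan is to reduce the whole statement to the sign of the scalar minimizer $\theta^\star_h$. Since $d=1$, the quantities $\theta_h$, $\phi_h(s,a)$ and $\bar{\theta}_h$ are all scalars, and by the max-margin normalization $\bar{\theta}_h\in\{-1,+1\}$, so that $\langle\phi_h(s,a),\bar{\theta}_h\rangle=\phi_h(s,a)\bar{\theta}_h$. I would first record two structural facts about $\gL_h$. By \cref{lem:strong_convexity} (with $\rank(A_h)=d=1$) the function $\gL_h$ is strictly convex, and it is coercive: the expert samples all satisfy $\langle\phi_h,\bar{\theta}_h\rangle>0$ and force blow-up of the first logistic term as $\theta_h\bar{\theta}_h\to-\infty$, while the good union samples $\gDE_h\cup\gD^{\oS, 1}_h\subseteq\gDU_h$ (also with $\langle\phi_h,\bar{\theta}_h\rangle>0$) force blow-up of the second term as $\theta_h\bar{\theta}_h\to+\infty$. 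Hence $\theta^\star_h$ is the unique finite minimizer, and its location relative to the origin is governed by the derivative $\gL_h'(0)$.

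Second, I would show that in one dimension $\Delta_h(\theta)>0$ if and only if $\theta$ has the same sign as $\bar{\theta}_h$. Writing $\psi(s,a):=\langle\phi_h(s,a),\bar{\theta}_h\rangle$ and the reparametrization $u:=\theta\bar{\theta}_h$ (so $\theta=u\bar{\theta}_h$, using $\bar{\theta}_h^2=1$), \cref{asmp:linear_separable} gives $\psi>0$ on the good set $\gDE_h\cup\gD^{\oS, 1}_h$ and $\psi<0$ on $\gD^{\oS, 2}_h$, whence the separation gap $\min_{\mathrm{good}}\psi-\max_{\mathrm{bad}}\psi$ is strictly positive. A short case analysis on the sign of $u$ — pulling $u$ out of the $\min$ and $\max$ (which swaps them when $u<0$) — yields $\Delta_h(u\bar{\theta}_h)>0\iff u>0$, with $\Delta_h=0$ at $u=0$. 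Combining this with convexity and the chain rule $\frac{d}{du}\gL_h(u\bar{\theta}_h)\big|_{u=0}=\bar{\theta}_h\gL_h'(0)$, the unique minimizer $u^\star=\theta^\star_h\bar{\theta}_h$ satisfies $u^\star>0\iff\bar{\theta}_h\gL_h'(0)<0$, so that $\Delta_h(\theta^\star_h)>0\iff\bar{\theta}_h\gL_h'(0)<0$.

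The final and most computational step is to evaluate $\gL_h'(0)$ and match it to \eqref{eq:condition_1d}. Differentiating the logistic objective \eqref{eq:parameterized_discriminator_opt} and using $\sigma(0)=1/2$ gives $\gL_h'(0)=\frac{1}{2}\big[\frac{1}{|\gDU_h|}\sum_{\gDU_h}\phi_h-\frac{1}{|\gDE_h|}\sum_{\gDE_h}\phi_h\big]$. Setting $S_E=\sum_{\gDE_h}\phi_h$ and $S_S=\sum_{\gDS_h}\phi_h$, and using $\gDU_h=\gDE_h\cup\gDS_h$ with $|\gDU_h|=|\gDE_h|+|\gDS_h|$, this collapses to $\gL_h'(0)=\frac{|\gDE_h|\,S_S-|\gDS_h|\,S_E}{2\,|\gDE_h|\,|\gDU_h|}$. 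Multiplying by $\bar{\theta}_h$ and dividing out the positive factor $|\gDE_h|\,|\gDS_h|$ turns $\bar{\theta}_h\gL_h'(0)<0$ into $\frac{1}{|\gDS_h|}\sum_{\gDS_h}\langle\phi_h,\bar{\theta}_h\rangle<\frac{1}{|\gDE_h|}\sum_{\gDE_h}\langle\phi_h,\bar{\theta}_h\rangle$, which is exactly \eqref{eq:condition_1d} once we use $|\gDE_h|=|\gDE|$ and $|\gDS_h|=|\gDS|$ (one sample per trajectory per step). Chaining the equivalences from the previous two paragraphs proves sufficiency, and since every implication is an equivalence, the same chain proves necessity.

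The main obstacle I expect is the bookkeeping in the last step: keeping the three normalizations $|\gDE|$, $|\gDS|$, $|\gDU|$ straight so that the derivative condition collapses to precisely the stated inequality, together with the explicit treatment of the boundary case $\gL_h'(0)=0$, where $\theta^\star_h=0$ and $\Delta_h(\theta^\star_h)=0$. This degenerate case is what makes \eqref{eq:condition_1d} genuinely necessary rather than merely sufficient, so it must be handled explicitly rather than absorbed into a strict inequality.
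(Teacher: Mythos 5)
Your proof is correct and follows essentially the same route as the paper's: both reduce the problem to the sign of $\nabla \gL_h(0)$ (computed explicitly as a positive multiple of $\frac{1}{|\gDS|}\sum_{(s,a)\in\gDS_h}\phi_h(s,a) - \frac{1}{|\gDE|}\sum_{(s,a)\in\gDE_h}\phi_h(s,a)$), use convexity to show the minimizer $\theta^\star_h$ lies on the side of the origin opposite to that sign, and use linear separability to convert $\theta^\star_h \bar{\theta}_h > 0$ into $\Delta_h(\theta^\star_h) > 0$. The only notable difference is organizational: you chain two-way equivalences via strict convexity of the one-dimensional objective, which treats the degenerate case $\gL_h'(0) = 0$ explicitly, whereas the paper runs separate sufficiency (first-order convexity inequality plus the scaling identity $\Delta_h(\theta^\star_h) = \frac{\theta^\star_h}{\bar{\theta}_h}\Delta_h(\bar{\theta}_h)$) and necessity (sign case analysis) arguments --- a minor stylistic refinement, not a different method.
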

Theorem \ref{thm:wbcu_1d} provides a clean and sharp condition to guarantee that the learned discriminator can perfectly classify the high-quality samples from $\gDE_h$ and $\gD^{\oS, 1}_h$, and low-quality samples from $\gD^{\oS, 2}_h$. In particular, inequality \eqref{eq:condition_1d} means that by the ground truth parameter $\bar{\theta}_h$, the average score of samples in supplementary dataset is lower than that of samples in expert dataset. This condition is easy to satisfy because the supplementary dataset contains bad samples from $\gD^{\oS, 2}_h$ whose scores are much lower. In contrast, $\gDE$ only contains expert-type samples with high scores.

\begin{proof}
Recall the objective function
\begin{align*}
    \min_{\theta_h}  \gL_h (\theta_h) = \sum_{(s, a)} \widehat{d^{\expert}_h}(s, a) \log \lp 1 + \exp \lp - \langle \phi_h (s,a), \theta_h \rangle \rp \rp  + \sum_{(s, a)} \widehat{d^{\oU}_h} (s, a)  \log \lp 1 + \exp \lp \langle \phi_h (s,a), \theta_h \rangle \rp \rp.
\end{align*}
Consider the function $f (x) = \log (1+\exp (x))$. We have that $f^{\prime \prime} (x) = \sigmoid (x) (1-\sigmoid (x)) \geq 0$, where $\sigmoid (x) = 1/(1+\exp (-x))$. As such, $f^{\prime \prime} (x)$ is a convex function. Notice that the operations of composition with affine function and non-negative weighted sum preserve convexity \citep{boyd2004convex}. Therefore, $\gL_h (\theta_h)$ is a convex function with respect to $\theta_h$. Then for any $\widetilde{\theta} \in \reals$, it holds that
\begin{align*}
    \gL_h (\theta_h) \geq \gL_h (\widetilde{\theta}) + \nabla \gL_h (\widetilde{\theta}) \lp \theta_h - \widetilde{\theta}  \rp. 
\end{align*}
Setting $\theta_h = \theta^\star_h$ and $\widetilde{\theta} = 0$ yields that 
\begin{align}
\label{eq:convexity_first_order}
    \nabla \gL_h (0) \theta^\star_h \leq \gL_h (\theta^\star_h) - \gL_h (0). 
\end{align}
The gradient of $\gL_h (\theta_h)$ is of the form
\begin{align*}
    \nabla \gL_h (\theta_h) = -\sum_{(s, a) \in \gS \times \gA} \widehat{d^{\expert}_h}(s, a) \sigma (- \langle \phi_h (s,a), \theta_h \rangle) \phi_h (s, a) + \sum_{(s, a) \in \gS \times \gA} \widehat{d^{\oU}_h} (s, a)  \sigma \lp  \langle \phi_h (s,a), \theta_h \rangle \rp \phi_h (s, a).
\end{align*}
Here $\sigmoid (x) = 1/(1+\exp (-x))$ is the sigmoid function. Then we calculate the gradient at $\widetilde{\theta} = 0$.
\begin{align*}
    \nabla \gL_h (0) &= -\sum_{(s, a) \in \gS \times \gA} \widehat{d^{\expert}_h}(s, a) \sigma (0) \phi_h (s, a) + \sum_{(s, a) \in \gS \times \gA} \widehat{d^{\oU}_h} (s, a)  \sigma \lp 0 \rp \phi_h (s, a)
    \\
    &= \frac{1}{2} \lp \frac{\sum_{(s, a) \in \gDU_h} \phi_h (s, a)}{|\gDU|}  - \frac{\sum_{(s, a) \in \gDE_h} \phi_h (s, a)}{|\gDE|}\rp
    \\
    &= \frac{1}{2 |\gDU|} \lp \sum_{(s, a) \in \gDU_h} \phi_h (s, a) - \frac{|\gDU|}{|\gDE|}  \sum_{(s, a) \in \gDE_h} \phi_h (s, a)   \rp
    \\
    &= \frac{1}{2 |\gDU|} \lp \sum_{(s, a) \in \gDS_h} \phi_h (s, a) + \sum_{(s, a) \in \gDE_h} \phi_h (s, a) - \frac{|\gDU|}{|\gDE|}  \sum_{(s, a) \in \gDE_h} \phi_h (s, a)   \rp
    \\
    &= \frac{1}{2 |\gDU|} \lp \sum_{(s, a) \in \gDS_h} \phi_h (s, a)  - \frac{|\gDS|}{|\gDE|}  \sum_{(s, a) \in \gDE_h} \phi_h (s, a)   \rp
    \\
    &= \frac{|\gDS|}{2 |\gDU|} \lp  \frac{1}{|\gDS|} \sum_{(s, a) \in \gDS_h} \phi_h (s, a)  - \frac{1}{|\gDE|}  \sum_{(s, a) \in \gDE_h} \phi_h (s, a)   \rp. 
\end{align*}
Furthermore, with inequality \eqref{eq:condition_1d}, we know that 
\begin{align*}
    \frac{1}{|\gDS|} \sum_{(s, a) \in \gDS_h} \phi_h (s, a)  - \frac{1}{|\gDE|}  \sum_{(s, a) \in \gDE_h} \phi_h (s, a) \not= 0. 
\end{align*}
Therefore, $\nabla \gL_h (0) \not= 0$. With the first-order optimality condition, we know that $0$ is not the optimal solution and $\gL_h (0) > \gL_h (\theta^\star_h)$. Combined with inequality \eqref{eq:convexity_first_order}, we have that
\begin{align*}
    \nabla \gL_h (0) \theta^\star_h \leq \gL_h (\theta^\star_h) - \gL_h (0) < 0.
\end{align*}
This directly implies that
\begin{align*}
    \lp  \frac{1}{|\gDS|} \sum_{(s, a) \in \gDS_h} \phi_h (s, a)  - \frac{1}{|\gDE|}  \sum_{(s, a) \in \gDE_h} \phi_h (s, a)   \rp \theta^\star_h < 0.
\end{align*}
Besides, inequality \eqref{eq:condition_1d} implies that
\begin{align*}
    \lp  \frac{1}{|\gDS|} \sum_{(s, a) \in \gDS_h} \phi_h (s, a)  - \frac{1}{|\gDE|}  \sum_{(s, a) \in \gDE_h} \phi_h (s, a)   \rp \bar{\theta}_h < 0.
\end{align*}
With the above two inequalities, we can derive that $\theta^\star_h \bar{\theta}_h > 0$. Then we have that
\begin{align*}
\Delta_h(\theta^\star_h) &= \min_{(s, a) \in \gDE_h \cup \gD^{\oS, 1}_h}  \theta^\star_h  \phi_h (s, a) - \max_{(s^\prime, a^\prime) \in \gD^{\oS, 2}_h}  \theta^\star_h  \phi_h (s^\prime, a^\prime)
\\
     &= \frac{\theta^\star_h}{\bar{\theta}_h} \lp \min_{(s, a) \in \gDE_h \cup \gD^{\oS, 1}_h}  \bar{\theta}_h  \phi_h (s, a) - \max_{(s^\prime, a^\prime) \in \gD^{\oS, 2}_h}  \bar{\theta}_h  \phi_h (s^\prime, a^\prime) \rp
     \\
     &=\frac{\theta^\star_h}{\bar{\theta}_h}  \Delta_h(\bar{\theta}_h) 
     \\
     &> 0.
\end{align*}
Thus, the sufficiency of condition \eqref{eq:condition_1d} is proved. Next, we prove the necessity of condition \eqref{eq:condition_1d}. That is, if $\Delta_h (\theta^\star_h) > 0$, then 
\begin{align*}
    \frac{1}{|\gDS|} \sum_{(s, a) \in \gDS_h} \langle \phi_h (s, a), \bar{\theta}_h \rangle < \frac{1}{|\gDE|} \sum_{(s^\prime, a^\prime) \in \gDS_h} \langle \phi_h (s^\prime, a^\prime), \bar{\theta}_h \rangle.
\end{align*}
Then we aim to prove that $\theta^\star_h \bar{\theta}_h > 0$. It is easy to obtain that $\theta^\star_h \not= 0$ and $\bar{\theta}_h \not= 0$ since $\Delta_h (\theta^\star_h) > 0$ and $\Delta_h (\bar{\theta}_h) > 0$. Then we consider two cases where $\bar{\theta}_h >0$ and $\bar{\theta}_h <0$.  
\begin{itemize}
    \item Case I ($\bar{\theta}_h >0$). In this case, we have that
    \begin{align*}
        \Delta_h (\bar{\theta}_h) &= \min_{(s, a) \in \gDE_h \cup \gD^{\oS, 1}_h}  \bar{\theta}_h  \phi_h (s, a) - \max_{(s^\prime, a^\prime) \in \gD^{\oS, 2}_h}  \bar{\theta}_h  \phi_h (s^\prime, a^\prime)
        \\
        &= \bar{\theta}_h \lp \min_{(s, a) \in \gDE_h \cup \gD^{\oS, 1}_h} \phi_h (s, a) - \max_{(s^\prime, a^\prime) \in \gD^{\oS, 2}_h}  \phi_h (s^\prime, a^\prime)  \rp.
    \end{align*}
    Then $\Delta_h (\bar{\theta}_h) > 0$ implies that
    \begin{align}
    \label{eq:min>max}
        \min_{(s, a) \in \gDE_h \cup \gD^{\oS, 1}_h} \phi_h (s, a) > \max_{(s^\prime, a^\prime) \in \gD^{\oS, 2}_h}  \phi_h (s^\prime, a^\prime).
    \end{align}
    Then we claim that $\theta^\star_h > 0$. Otherwise, it holds that
    \begin{align*}
         \Delta_h (\theta^\star_h) &= \min_{(s, a) \in \gDE_h \cup \gD^{\oS, 1}_h}  \theta^\star_h \phi_h (s, a) - \max_{(s^\prime, a^\prime) \in \gD^{\oS, 2}_h}  \theta^\star_h  \phi_h (s^\prime, a^\prime)
        \\
        &= \theta^\star_h \lp \max_{(s, a) \in \gDE_h \cup \gD^{\oS, 1}_h} \phi_h (s, a) - \min_{(s^\prime, a^\prime) \in \gD^{\oS, 2}_h}  \phi_h (s^\prime, a^\prime)  \rp
        \\
        &< 0.
    \end{align*}
    The last inequality follows $\theta^\star_h < 0$ and $\max_{(s, a) \in \gDE_h \cup \gD^{\oS, 1}_h} \phi_h (s, a) - \min_{(s^\prime, a^\prime) \in \gD^{\oS, 2}_h}  \phi_h (s^\prime, a^\prime) > 0$ due to inequality \eqref{eq:min>max}. Thus, the above inequality conflicts with $\Delta_h (\theta^\star_h) > 0$. The claim that $\theta^\star_h > 0$ is proved and $\bar{\theta}_h \theta^\star_h > 0$.
    \item Case II ($\bar{\theta}_h < 0$). Similarly, we get that 
    \begin{align*}
        \Delta_h (\bar{\theta}_h) &= \min_{(s, a) \in \gDE_h \cup \gD^{\oS, 1}_h}  \bar{\theta}_h  \phi_h (s, a) - \max_{(s^\prime, a^\prime) \in \gD^{\oS, 2}_h}  \bar{\theta}_h  \phi_h (s^\prime, a^\prime)
        \\
        &= \bar{\theta}_h \lp \max_{(s, a) \in \gDE_h \cup \gD^{\oS, 1}_h} \phi_h (s, a) - \min_{(s^\prime, a^\prime) \in \gD^{\oS, 2}_h}  \phi_h (s^\prime, a^\prime)  \rp.
    \end{align*}
    Then $\Delta_h (\bar{\theta}_h) > 0$ implies that
    \begin{align}
    \label{eq:max<min}
        \max_{(s, a) \in \gDE_h \cup \gD^{\oS, 1}_h} \phi_h (s, a) < \min_{(s^\prime, a^\prime) \in \gD^{\oS, 2}_h}  \phi_h (s^\prime, a^\prime).
    \end{align}
    Similar to the analysis in case I, we claim that $\theta^\star_h < 0$. Otherwise, it holds that
    \begin{align*}
         \Delta_h (\theta^\star_h) &= \min_{(s, a) \in \gDE_h \cup \gD^{\oS, 1}_h}  \theta^\star_h \phi_h (s, a) - \max_{(s^\prime, a^\prime) \in \gD^{\oS, 2}_h}  \theta^\star_h  \phi_h (s^\prime, a^\prime)
        \\
        &= \theta^\star_h \lp \min_{(s, a) \in \gDE_h \cup \gD^{\oS, 1}_h} \phi_h (s, a) - \max_{(s^\prime, a^\prime) \in \gD^{\oS, 2}_h}  \phi_h (s^\prime, a^\prime)  \rp
        \\
        &< 0.
    \end{align*}
    The last inequality follows $\theta^\star_h > 0$ and $\min_{(s, a) \in \gDE_h \cup \gD^{\oS, 1}_h} \phi_h (s, a) - \max_{(s^\prime, a^\prime) \in \gD^{\oS, 2}_h}  \phi_h (s^\prime, a^\prime) < 0$ due to inequality \eqref{eq:max<min}. Thus, the above inequality conflicts with $\Delta_h (\theta^\star_h) > 0$. The claim that $\theta^\star_h < 0$ is proved and $\bar{\theta}_h \theta^\star_h > 0$.
\end{itemize}
To summarize, we have proved that $\bar{\theta}_h \theta^\star_h > 0$. Similar to the proof of the sufficient condition, by the convexity of $\gL_h (\theta_h)$ and optimality of $\theta^\star_h$, we can obtain that
\begin{align*}
    \lp  \frac{1}{|\gDS|} \sum_{(s, a) \in \gDS_h} \phi_h (s, a)  - \frac{1}{|\gDE|}  \sum_{(s, a) \in \gDE_h} \phi_h (s, a)   \rp \theta^\star_h < 0.
\end{align*}
This directly implies that
\begin{align*}
    \lp  \frac{1}{|\gDS|} \sum_{(s, a) \in \gDS_h} \phi_h (s, a)  - \frac{1}{|\gDE|}  \sum_{(s, a) \in \gDE_h} \phi_h (s, a)   \rp \bar{\theta}_h < 0,
\end{align*}
which completes the proof of necessity.

\end{proof}
\section{Behavioral Cloning with General Function Approximation}
\label{sec:bc_gfa}

In the main text, the theoretical analysis for BC-based algorithms considers the tabular setting in policy learning where a table function represents the policy. Here we provide an analysis of BC with \emph{general function approximation} in policy learning. Notice that the algorithms considered in this paper (i.e., BC, NBCU and WBCU) can be unified under the framework of maximum likelihood estimation (MLE)\footnote{Among these algorithms, the main difference is the weight function in the MLE objective; see Equations \eqref{eq:bc_opt}, \eqref{eq:bc_union} and \eqref{eq:dwbc}.}. Therefore, the theoretical results in the main text can also be extended to the setting of general function approximation by a similar analysis.

We consider BC with general function approximation, which is the foundation of analyzing NBCU and WBCU. In particular, we assume access to a policy class $\Pi = \{ \pi = (\pi_1, \pi_2, \ldots, \pi_h): \; \pi_h \in \Pi_h, \; \forall h \in [H]  \}$ and $\Pi_h = \{ \pi_h : \gS \rightarrow \Delta (\gA) \}$, where $\pi_h$ could be any function (e.g., neural networks). For simplicity of analysis, we assume that $\Pi$ is a finite policy class. The objective of BC is still 
\begin{align*}
  \pi^{\bc} \in  \max_{\pi \in \Pi} \sum_{h=1}^{H} \sum_{ (s, a) \in \gS \times \gA} \widehat{d^{\expert}_h}(s, a) \log \pi_h(a|s),
\end{align*}
but we do not have the analytic solution as in \cref{eq:pi_bc}.

\begin{thm}
\label{thm:bc_imitation_gap_gfa}
Under \cref{asmp:dataset_collection}. In the general function approximation, additionally assume that $\piE \in \Pi$, we have
\begin{align*}
    \expect \ls V (\piE) - V (\pi^{\bc}) \rs \lesssim \min \lb H, H^2\sqrt{\frac{\log (|\Pi| H N_{\expert})}{N_{\expert}}} \rb
\end{align*}
\end{thm}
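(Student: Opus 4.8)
The plan is to reduce the imitation gap to an averaged statistical distance between $\pi^{\bc}_h$ and $\piE_h$ measured under the expert's own state distribution, and then to control that distance with a realizable maximum-likelihood generalization bound for the finite class $\Pi$. The conceptual reason this works --- and the reason no compounding-error blow-up appears beyond the $H^2$ factor --- is that the states on which we must imitate well (those drawn from $d^{\piE}_h$) are exactly the states from which the expert data is sampled, so the estimation problem has matched train/test distributions. Because $\Pi = \prod_h \Pi_h$ and the BC objective is a sum over $h$, the estimator decouples across time steps, and at step $h$ we have $|\gDE|$ i.i.d. samples $(s_h, a_h)$ with $s_h \sim d^{\piE}_h$ and $a_h = \piE_h(s_h)$.

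First I would invoke the performance-difference lemma \citep{kakade2002pg}, exactly as in the proof of \cref{thm:bc_imitation_gap_no_overlap}, to write
\begin{align*}
    V(\piE) - V(\pi^{\bc}) = \sum_{h=1}^H \expect_{s \sim d^{\piE}_h}\ls Q^{\pi^{\bc}}_h(s, \piE_h(s))\lp 1 - \pi^{\bc}_h(\piE_h(s)|s)\rp \rs \leq H \sum_{h=1}^H \expect_{s \sim d^{\piE}_h}\ls 1 - \pi^{\bc}_h(\piE_h(s)|s) \rs,
\end{align*}
using $0 \le Q^{\pi^{\bc}}_h \le H$ and the determinism of $\piE$. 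Since $\piE_h(\cdot|s)$ is a point mass, $1 - \pi^{\bc}_h(\piE_h(s)|s)$ equals the total variation distance $\TV(\pi^{\bc}_h(\cdot|s), \piE_h(\cdot|s))$, so the gap is at most $H \sum_h \expect_{s \sim d^{\piE}_h}[\TV(\pi^{\bc}_h(\cdot|s), \piE_h(\cdot|s))]$.

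Next I would apply a standard finite-class MLE bound in the realizable regime ($\piE_h \in \Pi_h$). Writing $\mathsf{He}^2$ for the squared Hellinger distance (distinct from the horizon $H$), with probability at least $1-\delta$ the MLE $\pi^{\bc}_h$ satisfies $\expect_{s \sim d^{\piE}_h}[\mathsf{He}^2(\pi^{\bc}_h(\cdot|s),\piE_h(\cdot|s))] \lesssim \log(|\Pi_h|/\delta)/|\gDE|$. Using $\TV^2 \le 2\,\mathsf{He}^2$ together with Jensen's inequality gives $\expect_{s}[\TV] \le \sqrt{\expect_s[\TV^2]} \lesssim \sqrt{\log(|\Pi_h|/\delta)/|\gDE|}$. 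Taking a union bound over $h \in [H]$ (replacing $\delta$ by $\delta/H$ and $|\Pi_h|$ by $|\Pi|$) and summing the $H$ terms, the gap is $\lesssim H^2 \sqrt{\log(|\Pi| H/\delta)/|\gDE|}$ on the good event, and trivially at most $H$ always.

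The remaining step, which is also the main obstacle, is converting this conditional, high-probability statement into the stated expectation bound while accounting for the random number of expert trajectories $|\gDE| \sim \bin(N_{\tot}, \eta)$. I would choose $\delta \asymp 1/N_{\expert}$ so that the failure contribution $\delta H$ is absorbed and $\log(|\Pi| H N_{\expert})$ appears inside the root, and bound $1/\sqrt{\max\{|\gDE|,1\}} \le \sqrt{2}/\sqrt{|\gDE|+1}$. Then, because $\sqrt{\cdot}$ is concave, Jensen gives $\expect[1/\sqrt{|\gDE|+1}] \le \sqrt{\expect[1/(|\gDE|+1)]}$, and \cref{lem:binomial_distribution} bounds the inner expectation by $1/(N_{\tot}\eta) = 1/N_{\expert}$. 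Combining the pieces yields $\expect[V(\piE)-V(\pi^{\bc})] \lesssim \min\{H, H^2\sqrt{\log(|\Pi| H N_{\expert})/N_{\expert}}\}$, as claimed. The delicate point is making the outer $\min$ with $H$ and the expectation over $|\gDE|$ interact cleanly --- handled by the trivial $\le H$ bound on the failure event and on the event $|\gDE|=0$ --- rather than any substantially new estimate.
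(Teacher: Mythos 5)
Your proposal is correct and follows essentially the same route as the paper's proof: the performance-difference lemma reduces the gap to an expected TV distance under $d^{\piE}_h$, a realizable finite-class MLE generalization bound (the paper cites \citep[Theorem 21]{agarwal2020flambe}, which is exactly the Hellinger/$\TV^2$ bound you invoke) plus a union bound over $h$ and Jensen's inequality control each step, and \cref{lem:binomial_distribution} handles the binomial $|\gDE|$ in the final expectation. The only difference is cosmetic: you fix $\delta \asymp 1/N_{\expert}$, whereas the paper makes the data-dependent choice $\delta = 1/(2|\gDE|)$ and consequently needs an extra concavity argument (for $x \mapsto -x\log(x/c)$) to push the resulting random logarithmic factor through the expectation, a step your fixed choice avoids.
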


Compared with \cref{thm:bc_expert}, we find that the only change in theoretical bound is that $\gO(|\gS|/N_{\expert})$ is replaced with $\gO(\sqrt{\log ( |\Pi|)/ N_{\expert}})$. We note that such a change also holds for other algorithms (e.g., NBCU and WBCU). Therefore, our theoretical implications remains unchanged.

\begin{proof}[Proof of \cref{thm:bc_imitation_gap_gfa}]
We apply the policy difference lemma in \citep{kakade2002pg} and obtain that
\begin{align*}
    V (\piE) - V (\pi^{\bc}) &= \sum_{h=1}^H \expect_{s_h \sim d^{\piE}_h (\cdot)} \ls Q^{\pi^{\bc}}_h (s_h, \piE_h (s_h)) - \sum_{a \in \gA} \pi^{\bc}_h (\cdot|s_h) Q^{\pi^{\bc}}_h (s_h, a)  \rs
    \\
    &\leq \sum_{h=1}^H \expect_{s_h \sim d^{\piE}_h (\cdot)} \ls Q^{\pi^{\bc}}_h (s_h, \piE_h (s_h)) (1-\pi^{\bc}_h (\piE_h (s_h)|s_h))  \rs  
    \\
    &\leq H \sum_{h=1}^H \expect_{s \sim d^{\piE}_h (\cdot)} \ls \expect_{a \sim \pi^{\bc}_h (\cdot|s)} \ls \indict \lb a \not= \piE_h (s) \rb \rs \rs
    \\
    &= H \sum_{h=1}^H \expect_{s_h \sim d^{\expert}_h (\cdot)} \ls \TV \lp \piE_h (\cdot|s_h), \pi^{\bc}_h (\cdot|s_h)  \rp \rs. 
\end{align*}
With Theorem 21 in \citep{agarwal2020flambe}, when $|\gDE| \geq 1$, for any $\delta \in (0, 1)$, with probability at least $1-\delta$ over the randomness within $\gDE$, we have that
\begin{align*}
    \expect_{s_h \sim d^{\expert}_h (\cdot)} \ls \TV^2 \lp \piE_h (\cdot|s_h), \pi^{\bc}_h (\cdot|s_h)  \rp \rs \leq 2 \frac{\log (|\Pi| / \delta)}{|\gDE|}.
\end{align*}
With union bound, with probability at least $1-\delta$, for all $h \in [H]$, it holds that
\begin{align*}
    \expect_{s_h \sim d^{\expert}_h (\cdot)} \ls \TV^2 \lp \piE_h (\cdot|s_h), \pi^{\bc}_h (\cdot|s_h)  \rp \rs \leq 2 \frac{\log (|\Pi| H / \delta)}{|\gDE|}, 
\end{align*}
which implies that 
\begin{align*}
     V (\piE) - V (\pi^{\bc}) &\leq  H \sum_{h=1}^H \expect_{s_h \sim d^{\expert}_h (\cdot)} \ls \TV \lp \piE_h (\cdot|s_h), \pi^{\bc}_h (\cdot|s_h)  \rp \rs
    \\
    &\overset{(a)}{\leq}  H \sum_{h=1}^H \sqrt{ \expect_{s_h \sim d^{\expert}_h (\cdot)} \ls \TV^2 \lp \piE_h (\cdot|s_h), \pi^{\bc}_h (\cdot|s_h)  \rp \rs}
    \\
    &\leq \sqrt{2} H^2 \sqrt{\frac{\log (|\Pi| H / \delta)}{|\gDE|}}.
\end{align*}
Inequality $(a)$ follows Jensen's inequality. Taking expectation over the randomness within $\gDE$ yields that
\begin{align*}
    \expect_{\gDE} \ls V (\piE) - V (\pi^{\bc}) \rs &\leq \delta H + (1-\delta)  \sqrt{2} H^2 \sqrt{\frac{\log (|\Pi| H / \delta)}{|\gDE|}}
    \\
    &\overset{(a)}{=} \frac{H}{2|\gDE|} + \lp 1-\frac{1}{2 |\gDE|} \rp \sqrt{2} H^2 \sqrt{\frac{\log (2|\Pi| H |\gDE|)}{|\gDE|}}
    \\
    &\leq \lp  \sqrt{2} + 1 \rp H^2 \sqrt{\frac{\log (2|\Pi| H |\gDE| )}{|\gDE|}}
    \\
    &\leq 4 H^2 \sqrt{\frac{\log (4|\Pi| H |\gDE| )}{|\gDE|}}. 
\end{align*}
Equation $(a)$ holds due to the choice that $\delta = 1/(2 |\gDE|)$. For $|\gDE| = 0$, we directly have that 
\begin{align*}
    \expect_{\gDE} \ls V (\piE) - V (\pi^{\bc}) \rs \leq H.
\end{align*}
Therefore, for any $|\gDE| \geq 0$, we have that
\begin{align*}
    \expect_{\gDE} \ls V (\piE) - V (\pi^{\bc}) \rs \leq 4 H^2 \sqrt{\frac{\log (4|\Pi| H \max\{ |\gDE|, 1 \} )}{\max\{ |\gDE|, 1\}}}. 
\end{align*}
We consider a real-valued function $f (x) = \log (cx) / x, \; \forall x \geq 1$, where $c = 4|\Pi| H$. Its gradient function is $f^\prime (x) = (1-\log (cx)) / x^2 \leq 0, \; \forall x \geq 1$. Then we know that $f (x)$ is decreasing as $x$ increases. Furthermore, we have that $\max\{ |\gDE|, 1\} \geq (|\gDE| + 1)/2, \; \forall |\gDE| \geq 0$. Then we obtain
\begin{align*}
    \expect_{\gDE} \ls V (\piE) - V (\pi^{\bc}) \rs &\leq 4 H^2 \sqrt{\frac{\log (4|\Pi| H \max\{ |\gDE|, 1 \} )}{\max\{ |\gDE|, 1\}}}
    \\
    &\leq 4 H^2 \sqrt{\frac{2 \log (4|\Pi| H (|\gDE|+ 1) )}{|\gDE|+ 1}}. 
\end{align*}
Taking expectation over the random variable $|\gDE| \sim \text{Bin} (N_{\tot}, \eta)$ yields that
\begin{align*}
    \expect \ls V (\piE) - V (\pi^{\bc}) \rs &\leq 4 H^2 \expect \ls \sqrt{\frac{2 \log (4|\Pi| H (|\gDE|+ 1) )}{|\gDE|+ 1}} \rs
    \\
    &\overset{(a)}{\leq} 4 H^2 \sqrt{\expect \ls \frac{2 \log (4|\Pi| H (|\gDE|+ 1) )}{|\gDE|+ 1} \rs}  .
\end{align*}
Inequality $(a)$ follows Jensen's inequality. We consider the function $g (x) = -x \log (x/c) , \; \forall x \in (0, 1]$, where $c = 4 |\Pi| H$.
\begin{align*}
    g^\prime (x) = - (\log (x/c) + 1) \geq 0, \;  g^{\prime \prime} (x) = - \frac{1}{x} \leq 0, \; \forall x \in (0, 1].
\end{align*}
Thus, $g (x)$ is a concave function. By Jensen's inequality, we have that $\expect [g(x)] \leq g (\expect [x])$. Then we can derive that
\begin{align*}
    \expect \ls V (\piE) - V (\pi^{\bc}) \rs & \leq 4 H^2 \sqrt{\expect \ls \frac{2 \log (4|\Pi| H (|\gDE|+ 1) )}{|\gDE|+ 1} \rs}
    \\
    &= 4 \sqrt{2} H^2 \sqrt{ \expect \ls g \lp \frac{1}{|\gDE| + 1} \rp \rs}
    \\
    &\leq 4 \sqrt{2} H^2 \sqrt{ g \lp \expect \ls  \frac{1}{|\gDE| + 1} \rs \rp}
    \\
    &\overset{(a)}{\leq} 4 \sqrt{2} H^2  \sqrt{ g \lp \frac{1}{N_{\expert}} \rp} 
    \\
    &\leq 4 \sqrt{2} H^2\sqrt{ \frac{ \log (4|\Pi| H N_{\expert} )}{N_{\expert}}}.
\end{align*}
In inequality $(a)$, we use the facts that $g^\prime (x) \geq 0$ and $\expect \ls  1/(|\gDE| + 1) \rs \leq 1/N_{\expert} $ from \cref{lem:binomial_distribution}. We complete the proof.
\end{proof}

\section{Experiments}
\label{appendix:experiments}

\subsection{Experiment Details}
\label{appendix:experiment_details}

\textbf{Dataset Collection.}  We train an online SAC agent \citep{haarnoja2018sac} with 1 million steps using the rlkit codebase\footnote{\url{https://github.com/rail-berkeley/rlkit}}, which is the same as benchmark dataset D4RL\footnote{\url{https://github.com/Farama-Foundation/D4RL}}. We use the deterministic policy as the expert policy, which is common in the literature \citep{ho2016gail} since the deterministic policy usually gives a better performance than the stochastic policy; see \cref{fig:sac} for the training curves of online SAC.

\begin{figure}[htbp]
    \centering
    \includegraphics[width=\linewidth]{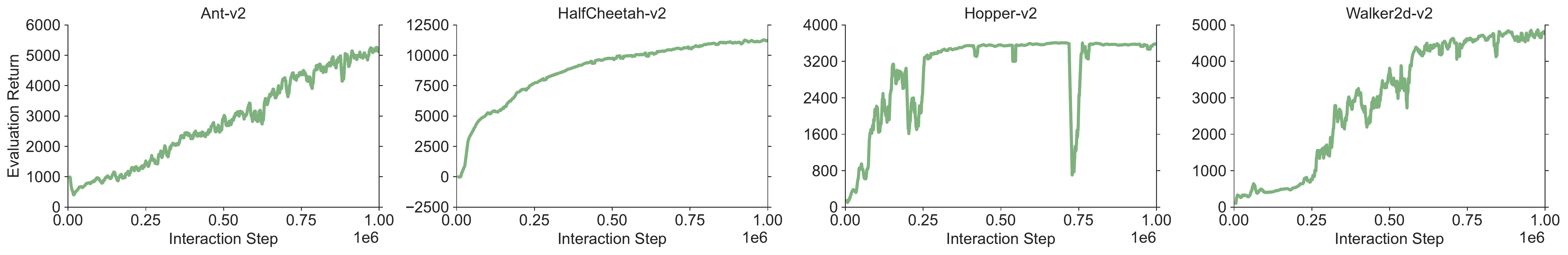}
    \caption{Training curves of online SAC.}
    \label{fig:sac}
\end{figure}

In our experiments, the expert dataset has 1 expert trajectory that is collected by the trained SAC agent. Two tasks differ in the supplementary dataset.

\textbf{Noisy Expert Task.} The supplementary dataset has 10 clean expert trajectories and 5 noisy expert trajectories. For noisy trajectories, the action labels are replaced with random actions (drawn from $[-1, 1]$).

\textbf{Full Replay Task.} The supplementary dataset is from the replay buffer of the online SAC agent, which has 1 million samples (roughly 1000+ trajectories).

\textbf{Algorithm Implementation.} The implementation of DemoDICE is based on the original authors' codebase\footnote{\url{https://github.com/KAIST-AILab/imitation-dice}}. Same as DWBC\footnote{\url{https://github.com/ryanxhr/DWBC}}. We have fine-tuned the hyper-parameters of DemoDICE and DWBC in our experiments but find that the default parameters given by these authors work well. Following \citep{kim2022demodice}, we normalize state observations in the dataset before training for all algorithms.

We use gradient penalty (GP) regularization in training the discriminator of WBCU. Specifically, we add the following loss to the original loss \eqref{eq:discriminator_opt}:
\begin{align*}
    \min_{\theta} \lp \lnorm g(s, a; \theta) \rnorm - 1 \rp^2,
\end{align*}
where $g$ is the gradient of the discriminator $c(s, a; \theta)$.

The implementation of NBCU and WBCU is adapted from DemoDICE's. In particular, both the discriminator and policy networks use the 2-layers MLP with 256 hidden units and ReLU activation. The batch size is 256 and  learning rate (using the Adam optimizer) is $0.0003$ for both the discriminator and policy. The number of training iterations is 1 million. We use $\delta = 0$ and GP=1, unless mentioned. Please refer to our codebase\footnote{\url{https://github.com/liziniu/ILwSD}} for details.

All experiments are run with 5 random seeds.

\subsection{Additional Results}

In this section, we report the exact performance of trained policies for each MuJoCo locomotion control task. Training curves are displayed in \cref{fig:noisy_expert} and \cref{fig:full_replay}.  The evaluation performance of the last 10 iterations is reported in \cref{tab:noisy_expert} and  \cref{tab:full_replay}. The normalized score on a particular environment is calculated by the following formula. 
\begin{align*}
    \text{Normalized Score} = \frac{\text{Algorithm Performance} - \text{Random Policy Performance}}{\text{Expert Policy Performance} - \text{Random Policy Performance}}.
\end{align*}
In \cref{tab:noisy_expert} and \cref{tab:full_replay}, the normalized score is averaged over 4 environments.

Training curves of WBCU and DemoDICE with gradient penalty are displayed in \cref{fig:noisy_expert_wbcu}, \cref{fig:full_replay_wbcu}, \cref{fig:noisy_expert_demo_dice}, and \cref{fig:full_replay_demo_dice}.

\begin{figure}[htbp]
    \centering
    \includegraphics[width=0.45\linewidth]{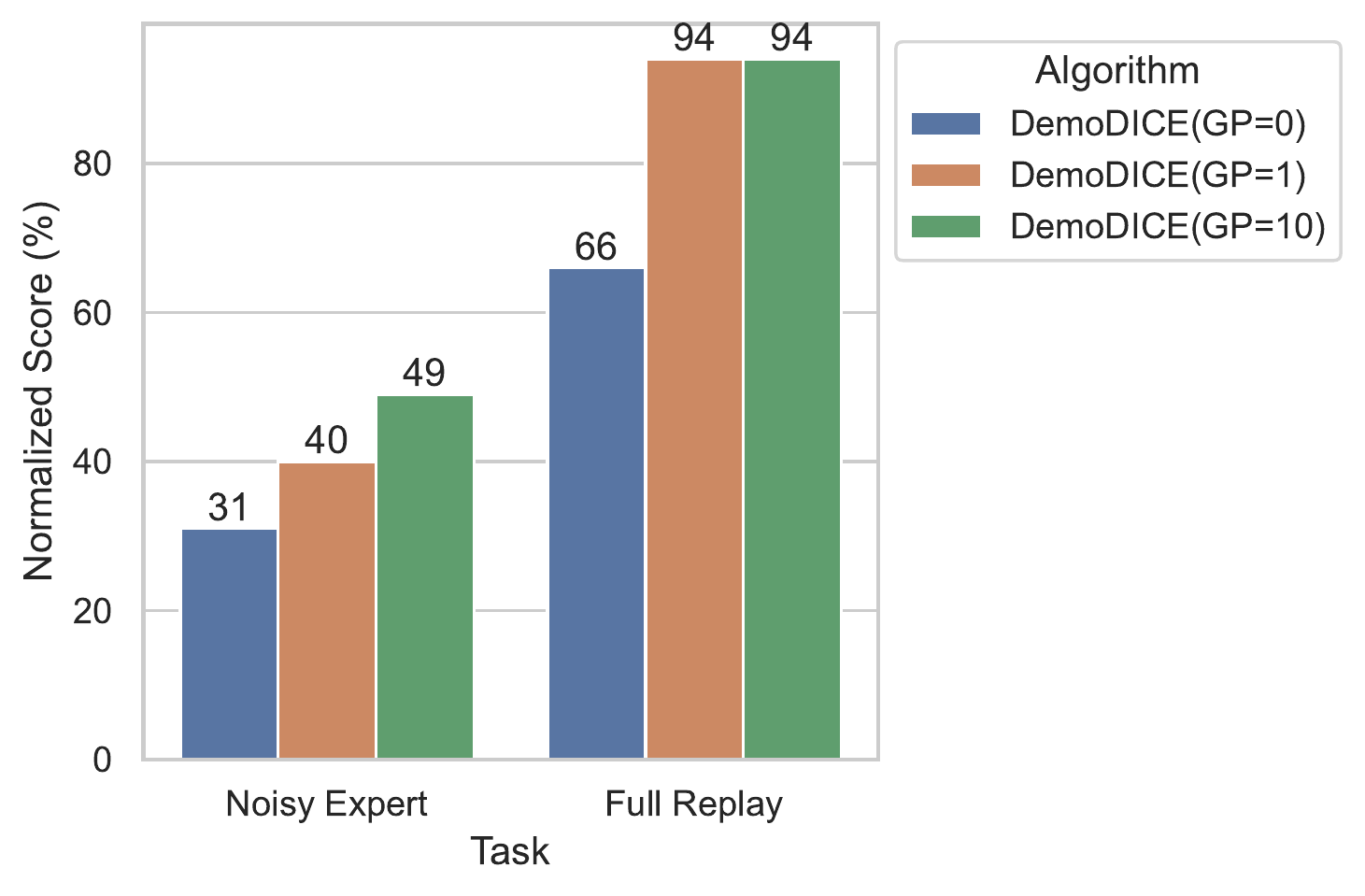}
    \caption{Averaged normalized scores of trained policies of DemoDICE with gradient penalty. Experiments show that the gradient penalty regularization also matters for DemoDICE.}
    \label{fig:demo_dice_gp}
\end{figure}

\begin{table}[htbp]
\centering
\caption{Performance of the trained policies by BC, DemoDICE \citep{kim2022demodice}, DWBC \citep{xu2022discriminator}, NBCU (\cref{algo:nbcu}), and WBCU (\cref{algo:wbcu}) on the noisy-expert task. Numbers correspond to the averaged evaluation return over 5 random seeds ($\pm$ indicate the standard deviation); a lager return means  better performance (same as other tables). }
\label{tab:noisy_expert}
\begin{tabular}{@{}c|llll|l@{}}
\toprule
            & Ant-v2  & HalfCheetah-v2 & Hopper-v2 & Walker2d-v2 & Normalized Score\\ \midrule
Random      &   -325.6        & -280            & -20       & 2     & 0\% \\  
Expert      &   5229          & 11115           & 3589      & 5082   & 100\%      \\ \midrule
BC &  \meanstd{1759}{287}               &  \meanstd{931}{273}            &  \meanstd{2468}{164}      & \meanstd{1738}{311}   &    38\%       \\
\midrule 
DemoDICE(GP=0)    &  \meanstd{1893}{181}               &  \meanstd{2139}{277}           &  \meanstd{1823}{169}                          &  \meanstd{563}{34}                   & 31\%            \\ 
DemoDICE(GP=1)    &  \meanstd{2492}{235}               &  \meanstd{5553}{501}           &  \meanstd{1320}{319}                          &  \meanstd{1153}{220}                   & 40\%            \\ 
DemoDICE(GP=10)    &  \meanstd{2523}{244}               &  \meanstd{6020}{346}           &  \meanstd{1990}{90}                          &  \meanstd{1685}{160}                   & 49\%            \\ 
DWBC    &    \meanstd{3270}{238}             &   \meanstd{5688}{557}                    &  \meanstd{3317}{59}                    &  \meanstd{1985}{175}                 & 62\%              \\ \midrule 
NBCU   &     \meanstd{3259}{159}          &  \meanstd{5561}{539}            &  \meanstd{558}{23}     &      \meanstd{518}{56}  & 35\%      \\
WBCU(GP=0)   &  \meanstd{738}{179}               &  \meanstd{3828}{333}            &  \meanstd{2044}{111}      & \meanstd{477}{81}          & 30\% \\
WBCU(GP=1)   &  \meanstd{2580}{224}               &  \meanstd{8274}{488}            &  \meanstd{3217}{203}      & \meanstd{1932}{329}          & 64\%  \\ 
WBCU(GP=10)   &  \meanstd{2130}{259}               &  \meanstd{7813}{472}            &  \meanstd{2930}{229}      & \meanstd{1510}{476}          & 57\%   \\ \bottomrule
\end{tabular}
\end{table}

\begin{table}[htbp]
\centering
\caption{Performance of the trained policies by BC, DemoDICE \citep{kim2022demodice}, DWBC \citep{xu2022discriminator}, NBCU (\cref{algo:nbcu}), and WBCU (\cref{algo:wbcu}) on the full-replay task. }
\label{tab:full_replay}
\begin{tabular}{@{}c|llll|l@{}}
\toprule
            & Ant-v2  & HalfCheetah-v2 & Hopper-v2 & Walker2d-v2 & Normalized Score \\ \midrule
Random      &   -325.6        & -280            & -20       & 2     & 0\%  \\  
Expert      &   5229          & 11115           & 3589      & 5082   & 100\%      \\ \midrule
BC &  \meanstd{1759}{287}               &  \meanstd{931}{273}            &  \meanstd{2468}{164}      & \meanstd{1738}{311}   &    38\%       \\
\midrule 
DemoDICE(GP=0)    &  \meanstd{4650}{216}               &  \meanstd{9882}{270}           &  \meanstd{39}{0}                          &  \meanstd{4149}{288}                 &  66\%             \\ 
DemoDICE(GP=1)    &  \meanstd{5009}{98}               &  \meanstd{10701}{70}           &  \meanstd{3427}{104}                          &  \meanstd{4560}{146}                 &  94\%             \\ 
DemoDICE(GP=10)    &  \meanstd{5000}{124}               &  \meanstd{10781}{67}           &  \meanstd{3394}{93}                          &  \meanstd{4537}{125}                 &  94\%             \\ 
DWBC    &      \meanstd{2951}{155}           &   \meanstd{1485}{377}                              &     \meanstd{2567}{88}                      &  \meanstd{1572}{225}                & 44\%               \\ \midrule 
NBCU  &   \meanstd{4932}{148} & \meanstd{10566}{86} & \meanstd{3241}{276} & \meanstd{4462}{105} & 92\%  \\
WBCU(GP=0)   &  \meanstd{483}{79}               &  \meanstd{-242}{101}            &  \meanstd{346}{93}      & \meanstd{15}{15}       & 6\% \\
WBCU(GP=1)   &  \meanstd{4935}{108}               &  \meanstd{10729}{74}            &  \meanstd{3390}{132}      & \meanstd{4509}{142}       & 94\%  \\
WBCU(GP=10)   &  \meanstd{4858}{95}               &  \meanstd{10751}{41}            &  \meanstd{3436}{37}      & \meanstd{4403}{88}       & 93\%    \\ \bottomrule
\end{tabular}
\end{table}

\begin{rem}   \label{rem:dataset_coverage}
Readers may realize that NBCU actually is better than BC on the Ant-v2 and HalfCheetah-v2 environments for the noisy-expert task, which seems to contradict our theory. We clarify there is no contradiction. In fact, our theory implies that in the worst case, NBCU is worse than BC. As we have discussed in the main text, state coverage matters for NBCU's performance. For the noisy expert task, we visualize the state coverage in \cref{fig:noisy_expert_dataset_coverage}, where we use Kernel PCA\footnote{\url{https://scikit-learn.org/stable/modules/generated/sklearn.decomposition.KernelPCA.html}. We use the \dquote{poly} kernel.} to project states. In particular, we see that the state coverage is relatively nice on Ant-v2 and HalfCheetah-v2, and somewhat bad on Hopper-v2 and Walker2d-v2. This can help explain the performance difference among these environments in \cref{tab:noisy_expert}.
\end{rem}

\begin{figure}[htbp]
\begin{subfigure}{.45\textwidth}
\centering
\includegraphics[width=0.95\linewidth]{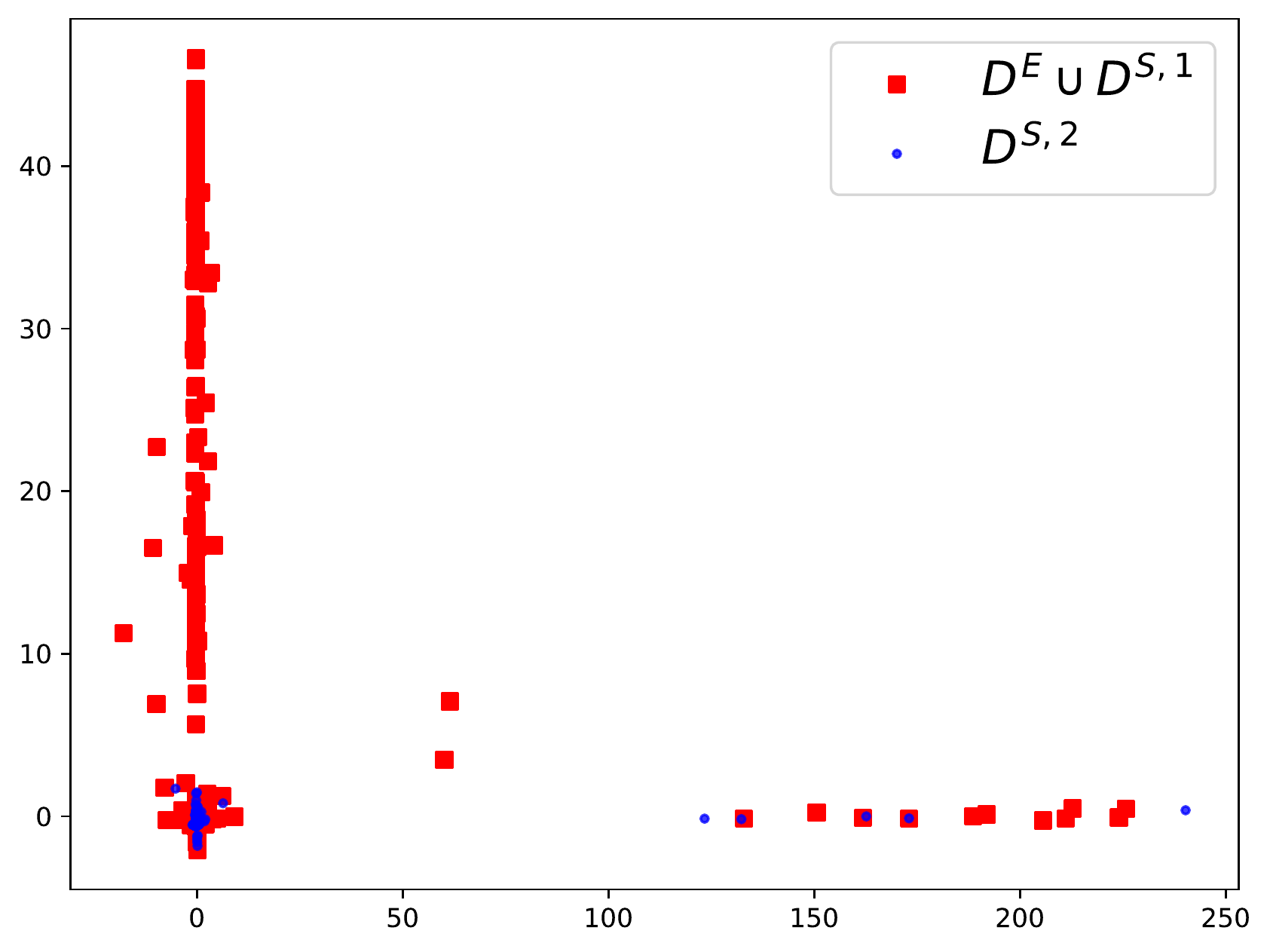}
\caption{Ant-v2.}
\end{subfigure}
\hfill
\begin{subfigure}{.45\textwidth}
\centering
\includegraphics[width=0.95\linewidth]{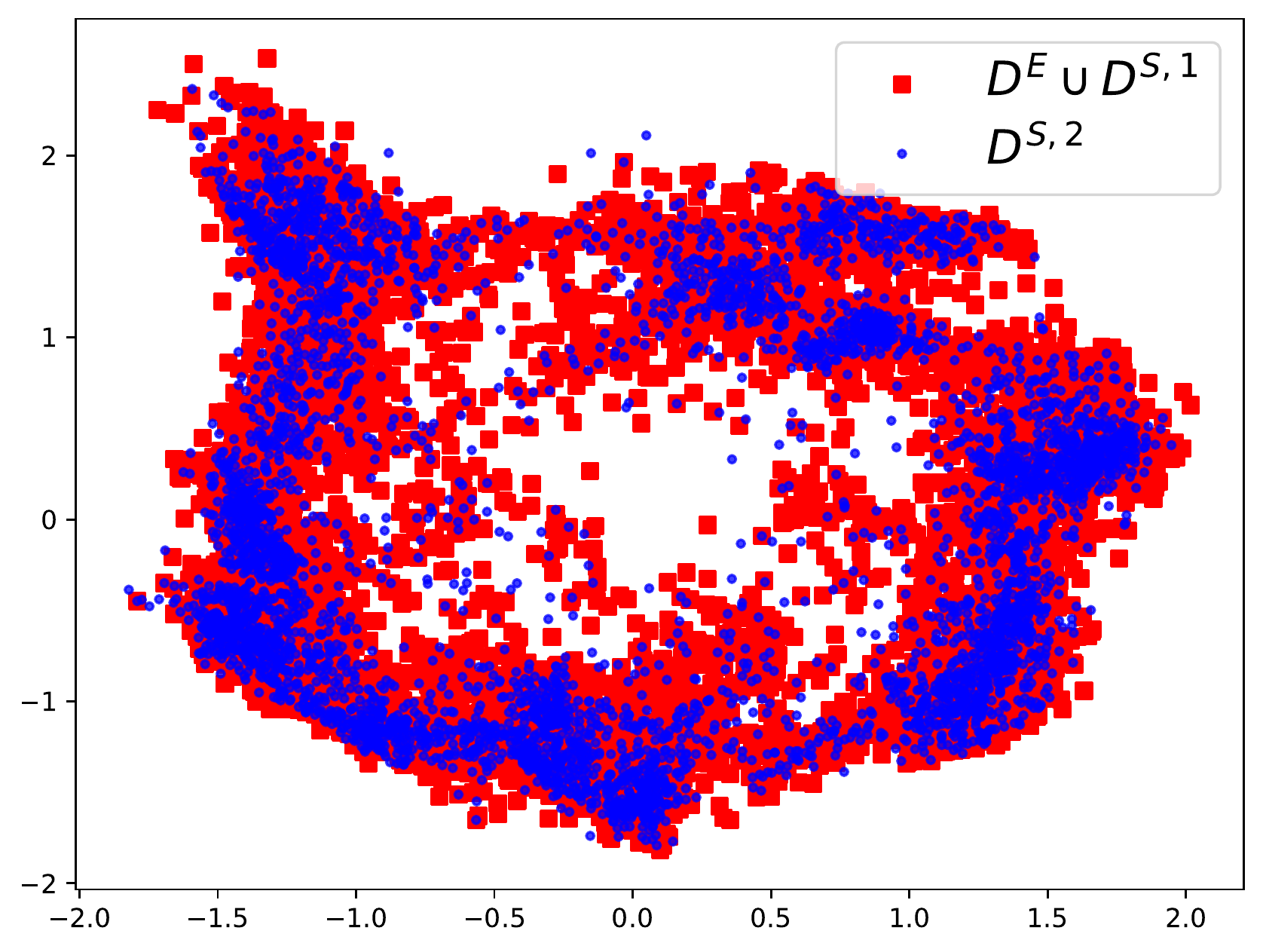}
\caption{Halfcheetah-v2.}
\end{subfigure}
\begin{subfigure}{.45\textwidth}
\centering
\includegraphics[width=0.95\linewidth]{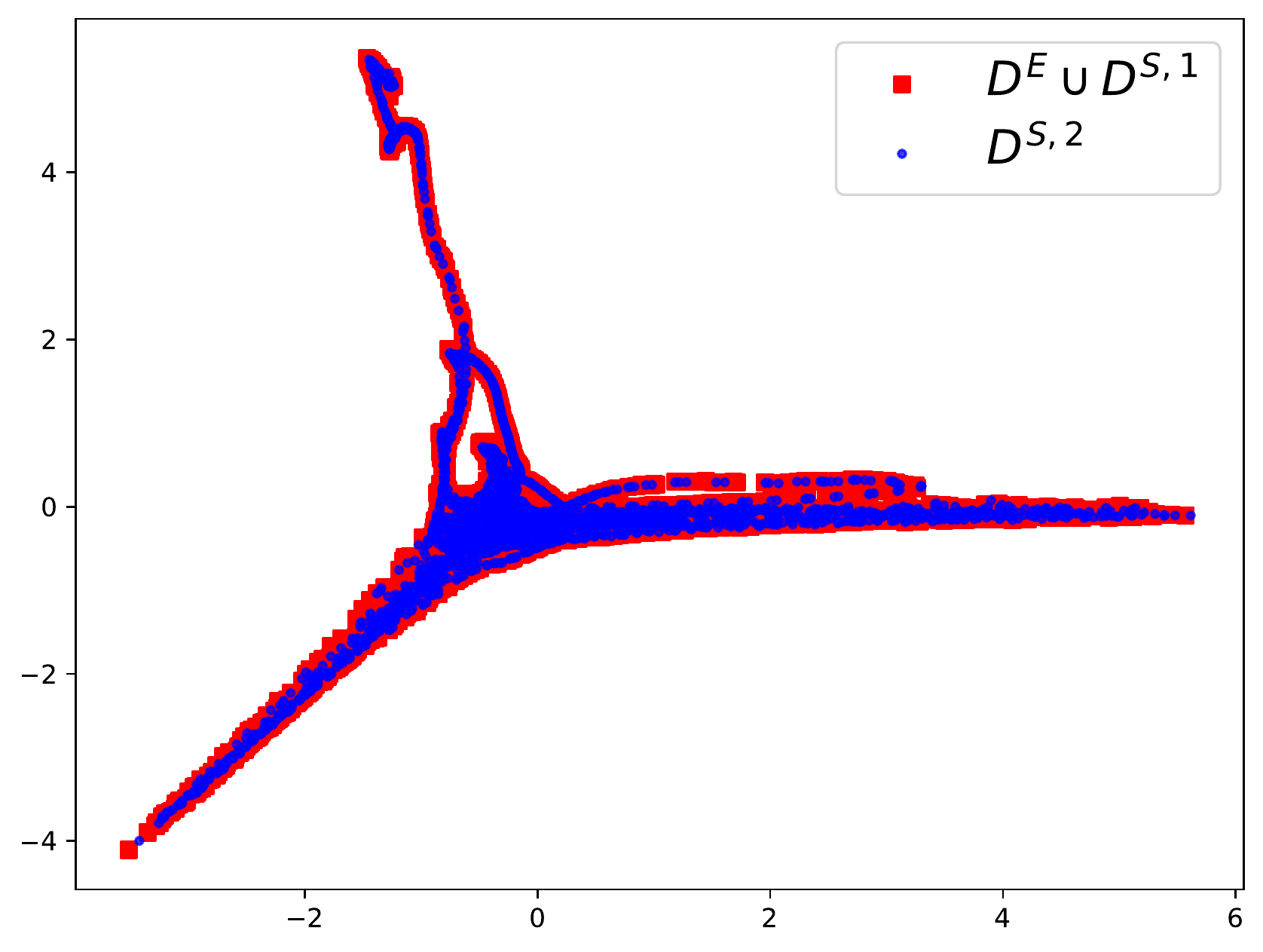}
\caption{Hopper-v2.}
\end{subfigure}
\hfill
\begin{subfigure}{.45\textwidth}
\centering
\includegraphics[width=0.95\linewidth]{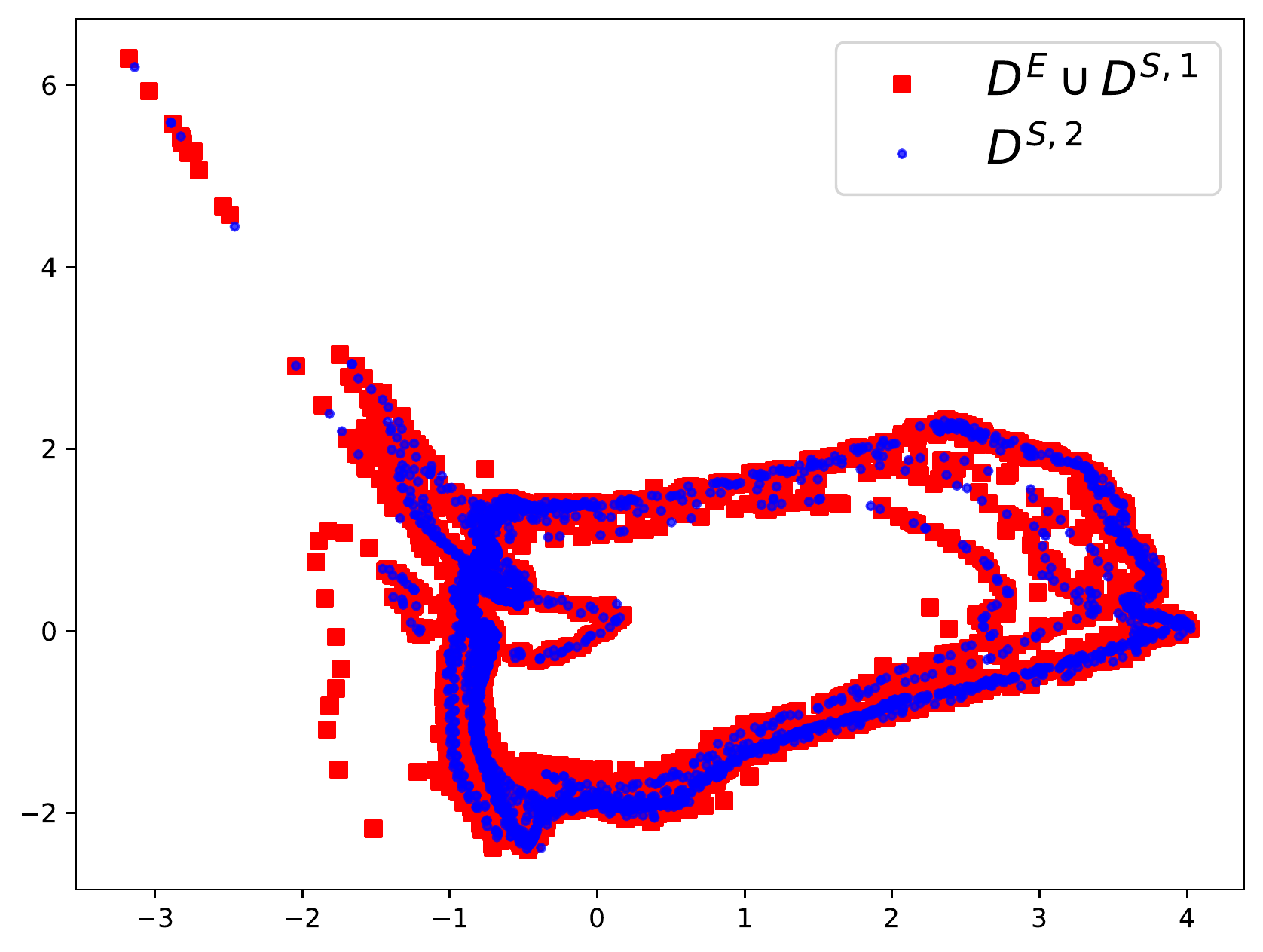}
\caption{Walker2d-v2.}
\end{subfigure}
\caption{Visualization of the state coverage for the noisy expert task. According to the experiment set-up, red points correspond to good samples and blue points correspond to noisy and bad samples. Plots show that the state overlap between two modes of samples is large for Hopper-v2 and Walker2d-v2 and is limited for Ant-v2 and HalfCheetah-v2. Therefore, we can expect NBCU performs well on Ant-v2 and HalfCheetah-v2.}
\label{fig:noisy_expert_dataset_coverage}
\end{figure}

\begin{figure}[htbp]
    \centering
    \includegraphics[width=0.8\linewidth]{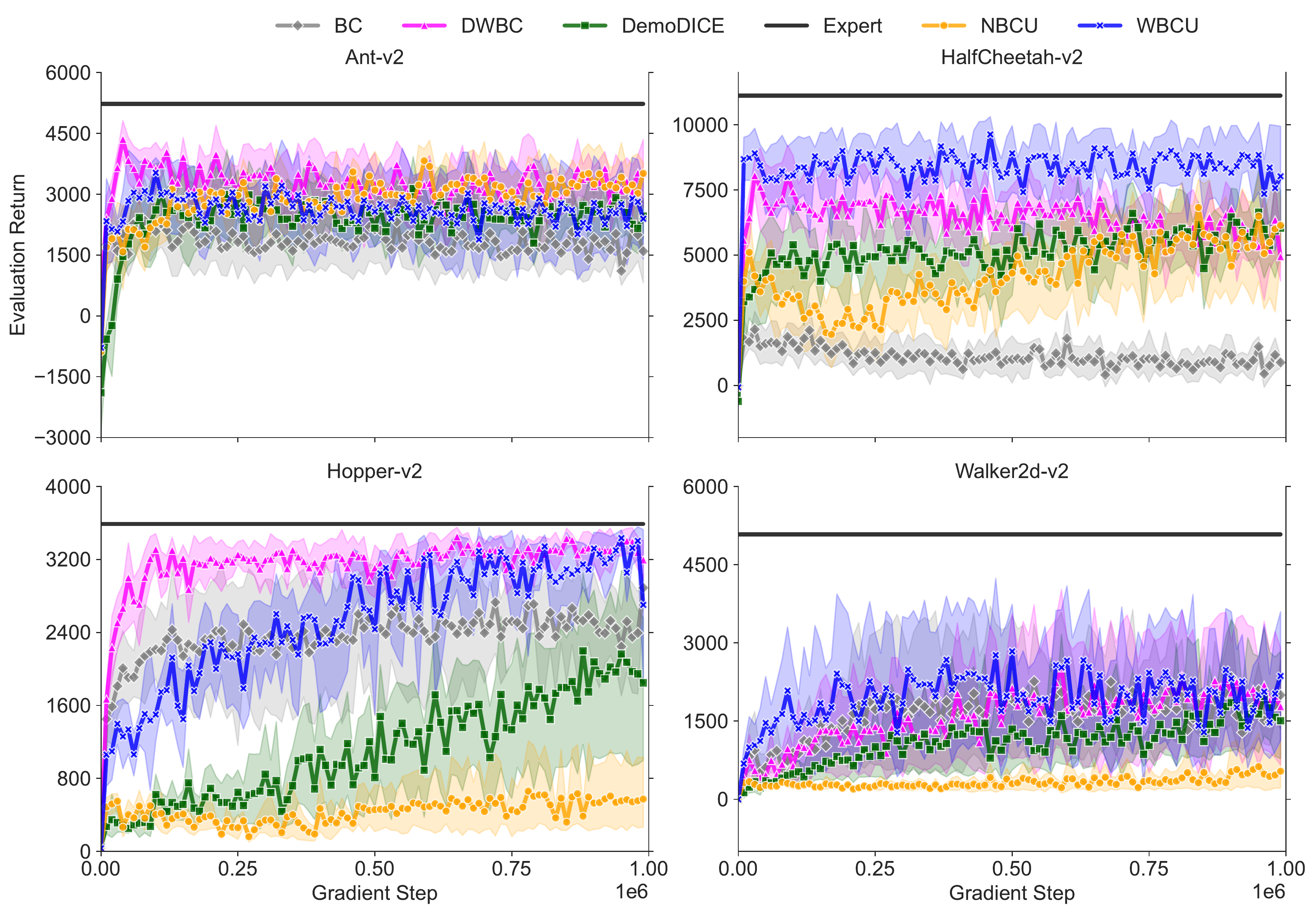}
    \caption{Training curves of BC, DemoDICE \citep{kim2022demodice}, DWBC \citep{xu2022discriminator}, NBCU (\cref{algo:nbcu}), and WBCU (\cref{algo:wbcu}) on the noisy expert task. Solid lines correspond to the mean performance and shaded regions correspond to the 95\% confidence interval. Same as other figures. }
    \label{fig:noisy_expert}
\end{figure}

\begin{figure}[htbp]
    \centering
    \includegraphics[width=0.8\linewidth]{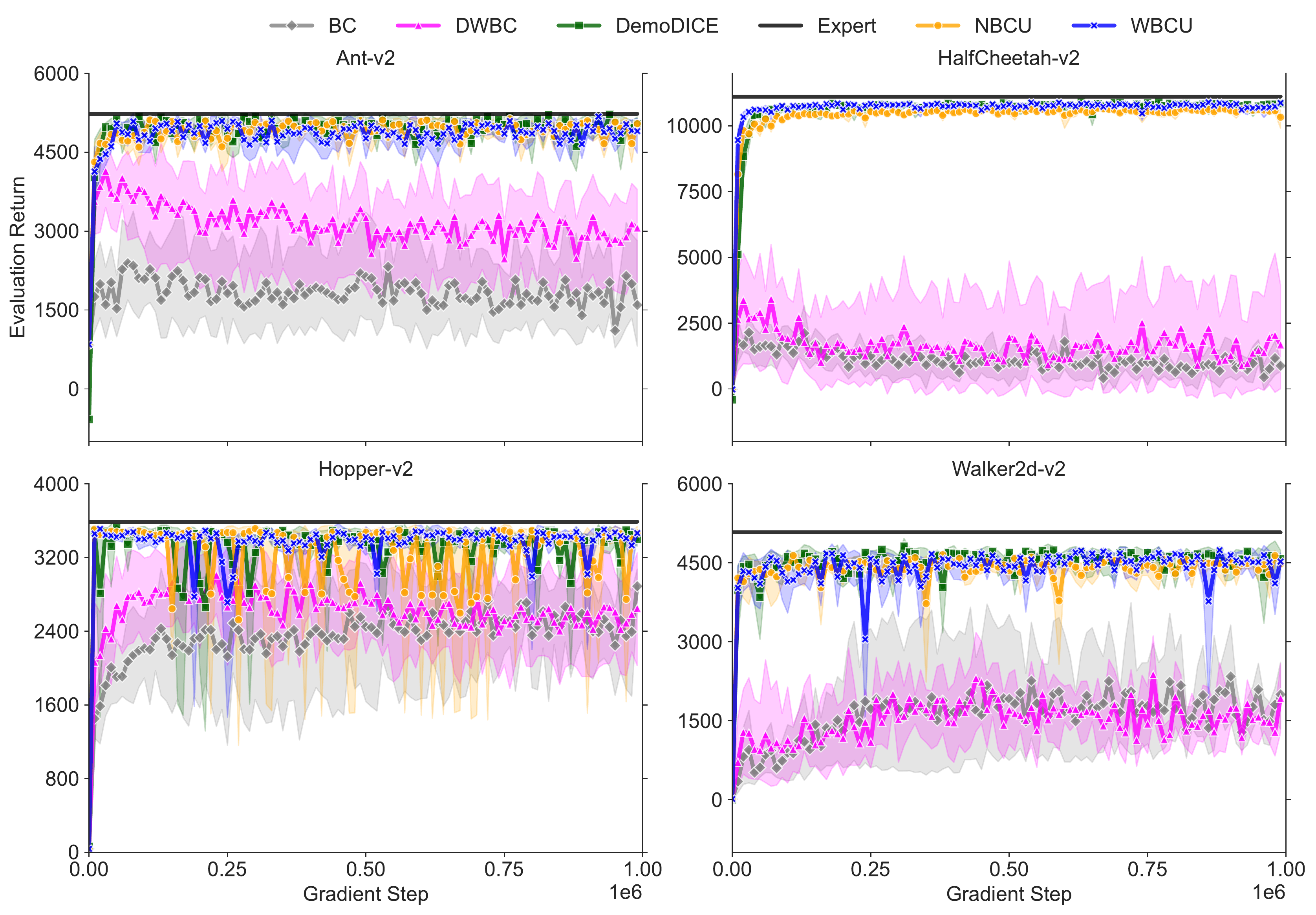}
    \caption{Training curves of BC, DemoDICE \citep{kim2022demodice}, DWBC \citep{xu2022discriminator}, NBCU (\cref{algo:nbcu}), and WBCU (\cref{algo:wbcu}) on the full replay task.}
    \label{fig:full_replay}
\end{figure}

\begin{figure}[htbp]
    \centering
    \includegraphics[width=0.8\linewidth]{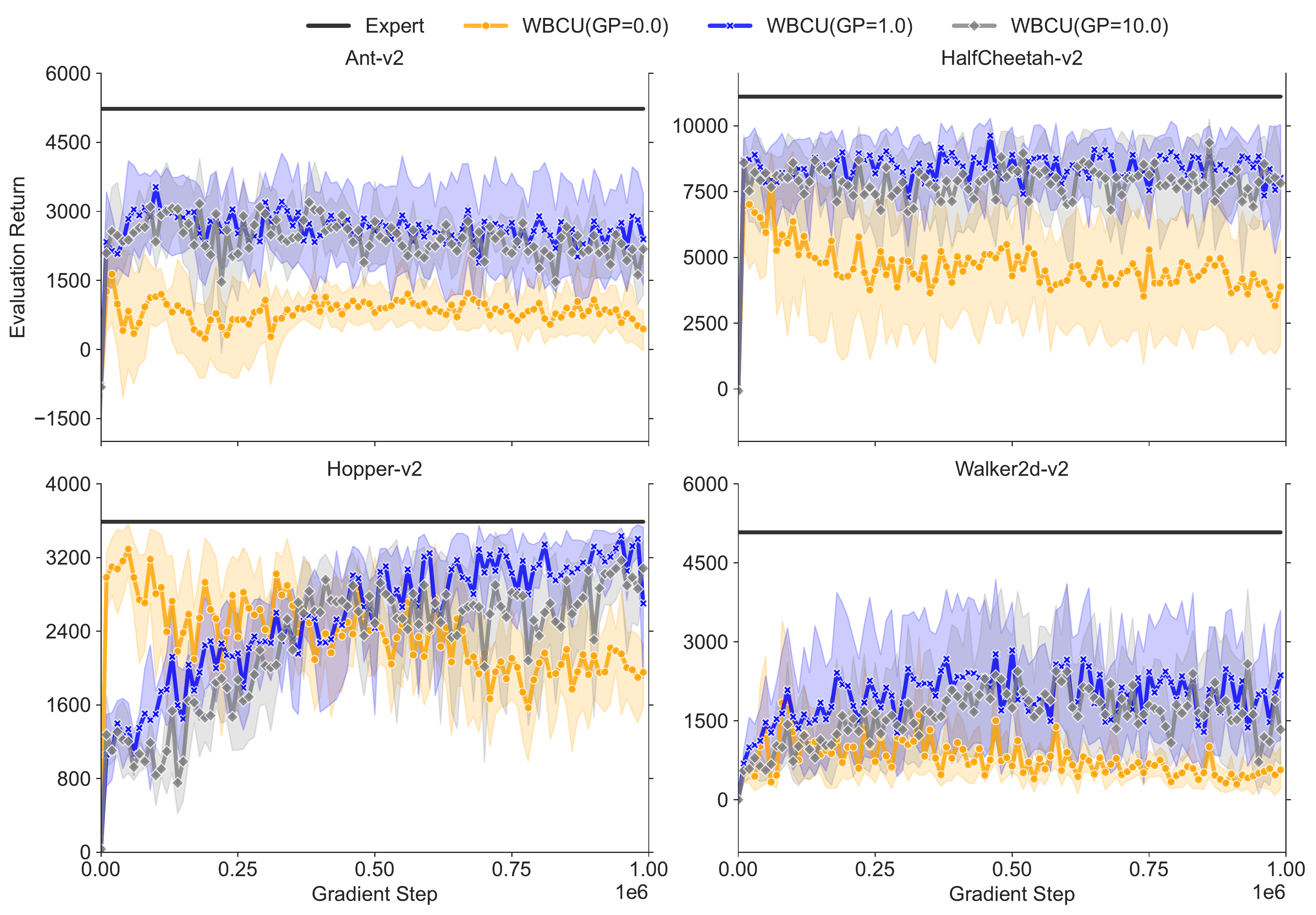}
    \caption{Training curves of WBCU with gradient penalty on the noisy expert task.}
    \label{fig:noisy_expert_wbcu}
\end{figure}

\begin{figure}[htbp]
    \centering
    \includegraphics[width=0.8\linewidth]{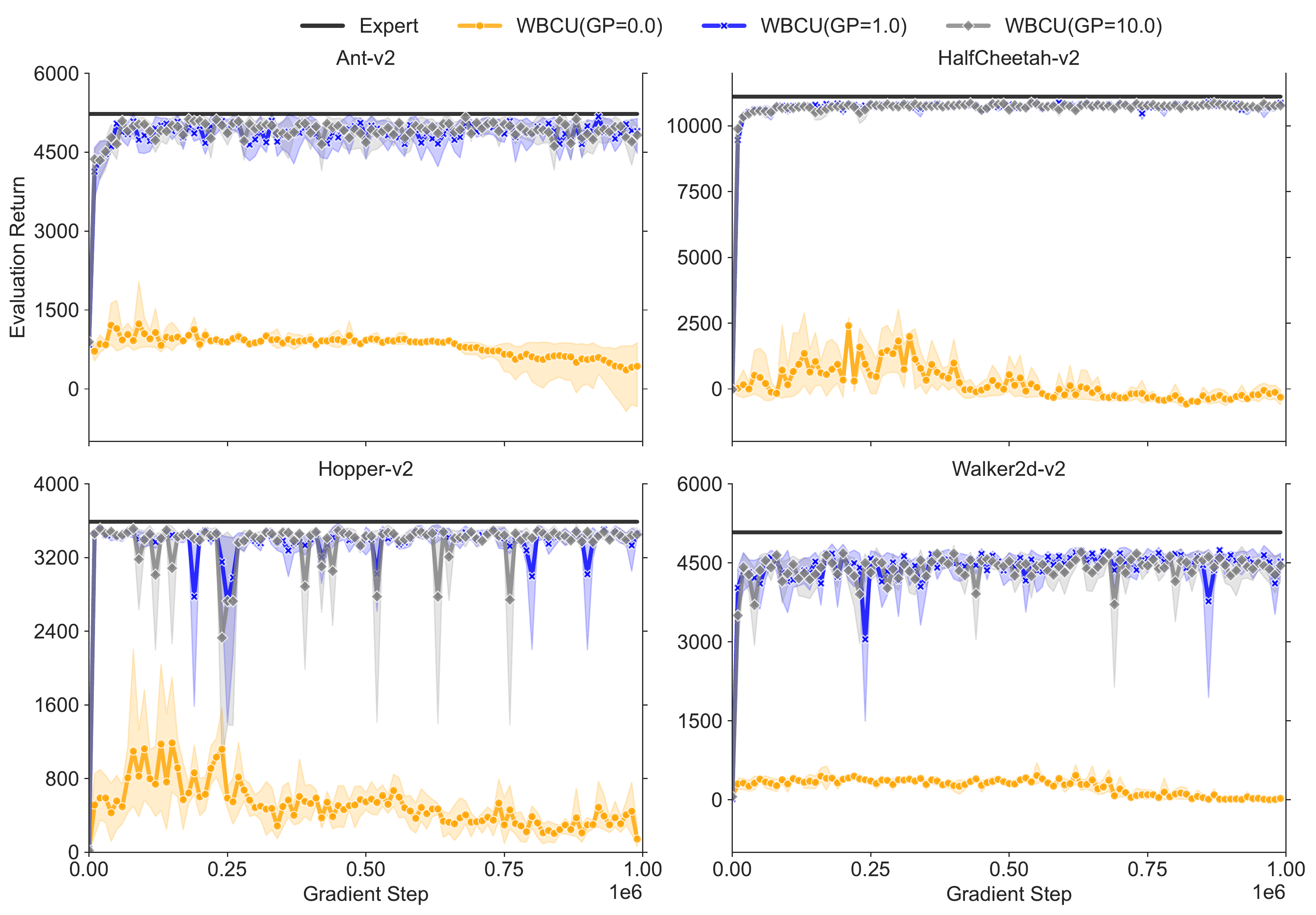}
    \caption{Training curves of WBCU with gradient penalty on the noisy expert task.}
    \label{fig:full_replay_wbcu}
\end{figure}

\begin{figure}[htbp]
    \centering
    \includegraphics[width=0.8\linewidth]{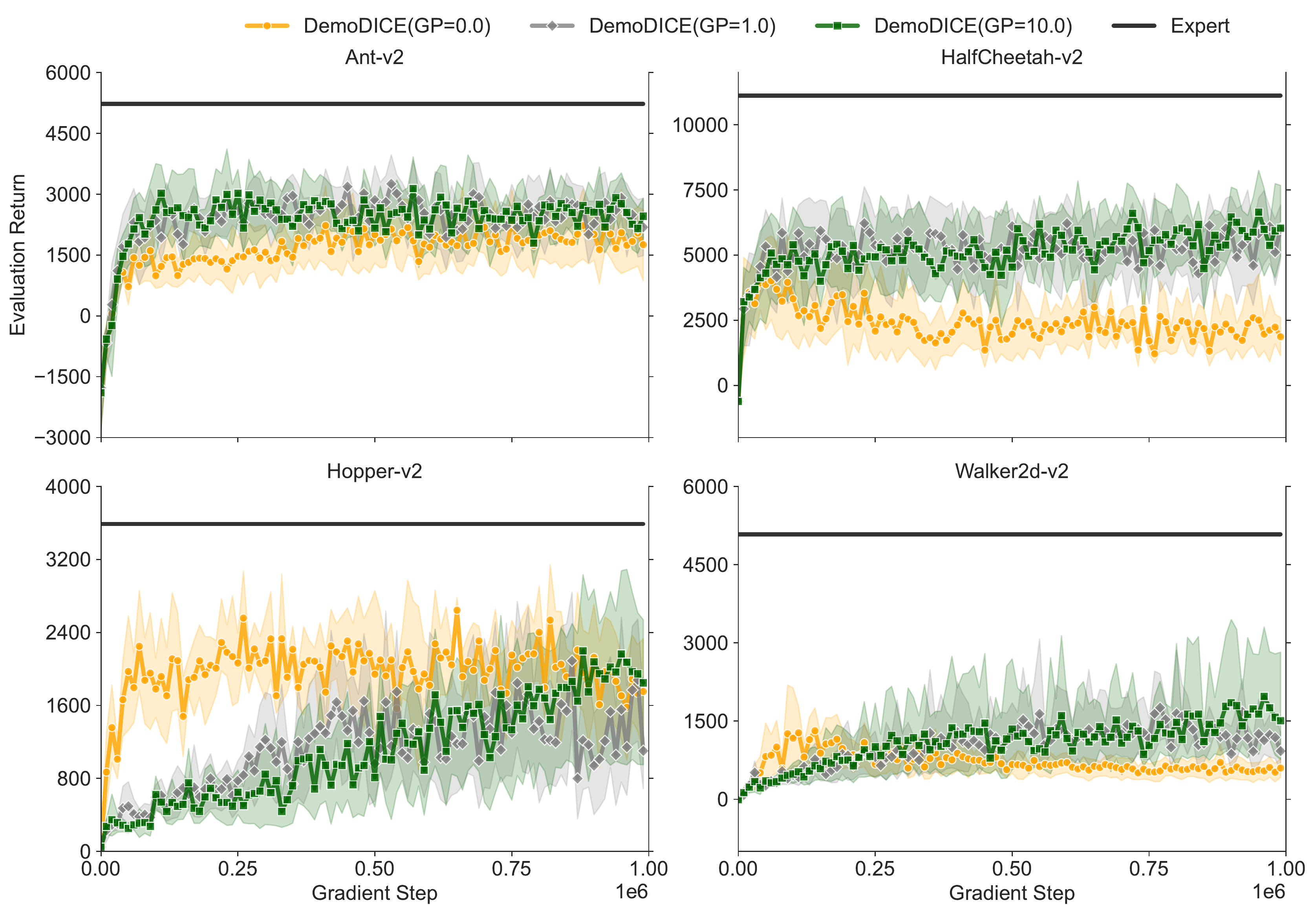}
    \caption{Training curves of DemoDICE with gradient penalty on the full replay task.}
    \label{fig:noisy_expert_demo_dice}
\end{figure}

\begin{figure}[htbp]
    \centering
    \includegraphics[width=0.8\linewidth]{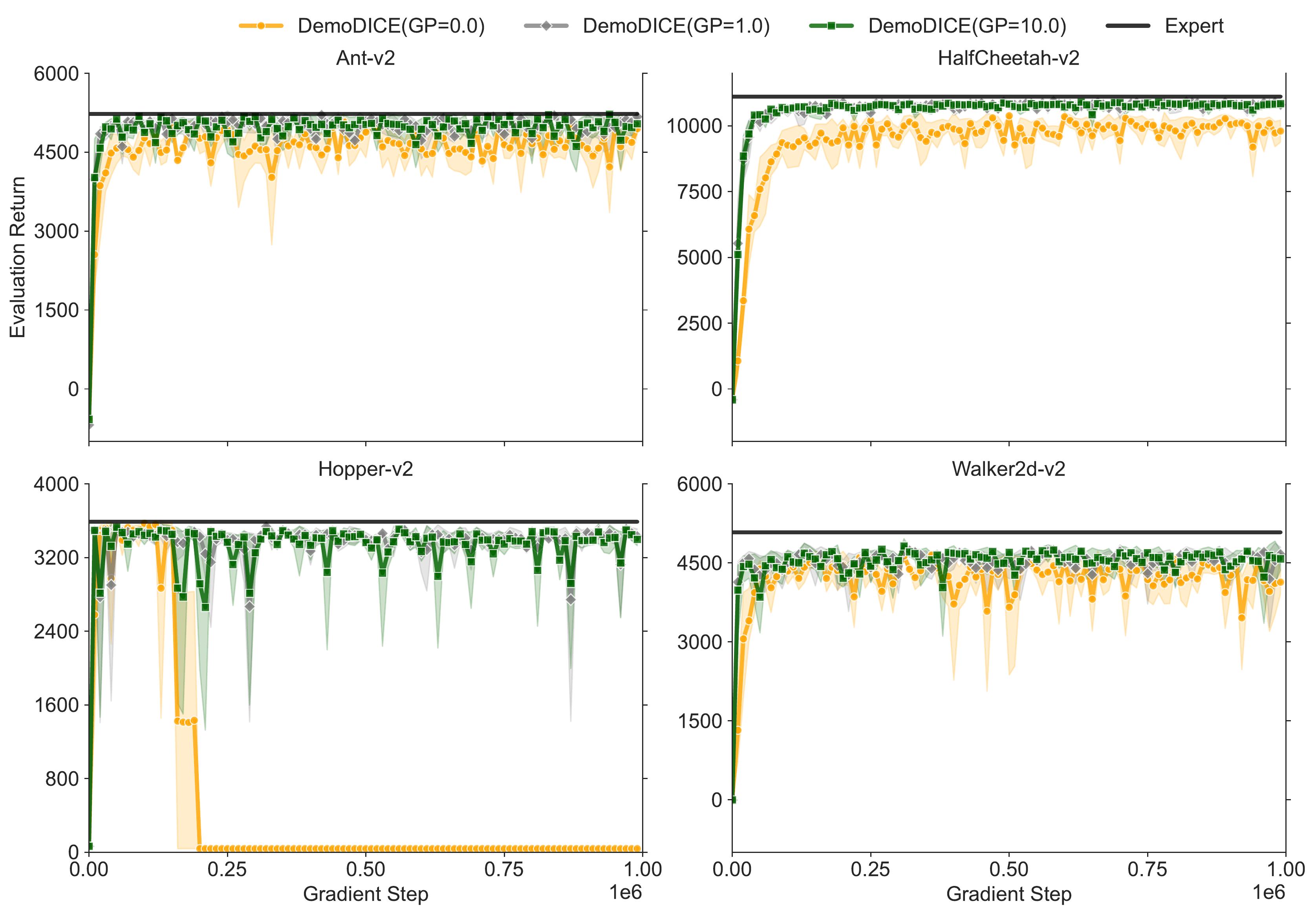}
    \caption{Training curves of DemoDICE with gradient penalty on the full replay task.}
    \label{fig:full_replay_demo_dice}
\end{figure}

\end{document}